\pdftrailerid{}
\documentclass[10pt]{article}

\usepackage[preprint]{tmlr}
\usepackage{amsmath,amssymb,amsthm,mathtools}
\usepackage{aliascnt}
\usepackage{booktabs,array}
\usepackage{graphicx}
\usepackage{xcolor}
\usepackage{float}
\usepackage{url}
\usepackage[colorlinks=true,linkcolor=blue!60!black,citecolor=blue!60!black,urlcolor=blue!60!black]{hyperref}
\usepackage[capitalize,nameinlink]{cleveref}
\hypersetup{
  pdftitle={Finite-Probe Total-Variation Certificates for Finite-Basis Drifting Models},
  pdfauthor={Sam Andersson and Ricky Molén},
  pdfsubject={Finite-probe observability and conditioning in drifting models}
}

\newtheorem{theorem}{Theorem}[section]

\newaliascnt{lemma}{theorem}
\newtheorem{lemma}[lemma]{Lemma}
\aliascntresetthe{lemma}

\newaliascnt{proposition}{theorem}
\newtheorem{proposition}[proposition]{Proposition}
\aliascntresetthe{proposition}

\newaliascnt{corollary}{theorem}
\newtheorem{corollary}[corollary]{Corollary}
\aliascntresetthe{corollary}

\newaliascnt{conjecture}{theorem}
\newtheorem{conjecture}[conjecture]{Conjecture}
\aliascntresetthe{conjecture}

\theoremstyle{definition}
\newaliascnt{definition}{theorem}
\newtheorem{definition}[definition]{Definition}
\aliascntresetthe{definition}

\newaliascnt{example}{theorem}
\newtheorem{example}[example]{Example}
\aliascntresetthe{example}

\theoremstyle{remark}
\newaliascnt{remark}{theorem}
\newtheorem{remark}[remark]{Remark}
\aliascntresetthe{remark}

\crefname{theorem}{Theorem}{Theorems}
\crefname{proposition}{Proposition}{Propositions}
\crefname{lemma}{Lemma}{Lemmas}
\crefname{corollary}{Corollary}{Corollaries}
\crefname{definition}{Definition}{Definitions}
\crefname{remark}{Remark}{Remarks}
\crefname{example}{Example}{Examples}
\crefname{conjecture}{Conjecture}{Conjectures}
\Crefname{theorem}{Theorem}{Theorems}
\Crefname{proposition}{Proposition}{Propositions}
\Crefname{lemma}{Lemma}{Lemmas}
\Crefname{corollary}{Corollary}{Corollaries}
\Crefname{definition}{Definition}{Definitions}
\Crefname{remark}{Remark}{Remarks}
\Crefname{example}{Example}{Examples}
\Crefname{conjecture}{Conjecture}{Conjectures}
\crefname{appendix}{Appendix}{Appendices}
\Crefname{appendix}{Appendix}{Appendices}

\newcommand{\R}{\mathbb{R}}
\newcommand{\E}{\mathbb{E}}
\newcommand{\PP}{\mathbb{P}}
\newcommand{\norm}[1]{\lVert #1 \rVert}
\newcommand{\Norm}[1]{\left\lVert #1 \right\rVert}
\newcommand{\abs}[1]{\lvert #1 \rvert}
\newcommand{\vecop}{\operatorname{vec}}
\newcommand{\rk}{\operatorname{rank}}

\newcommand{\smin}{\sigma_{\min}}
\newcommand{\smax}{\sigma_{\max}}
\newcommand{\lmin}{\lambda_{\min}}
\newcommand{\lmax}{\lambda_{\max}}
\newcommand{\Vhat}{\widehat{V}}

\newcommand{\TV}{\mathrm{TV}}
\newcommand{\Leb}{\mathrm{Leb}}
\newcommand{\dd}{\,d}
\newcommand{\ip}[2]{\langle #1, #2 \rangle}

\newcolumntype{L}[1]{>{\raggedright\arraybackslash}p{#1}}

\title{Finite-Probe Total-Variation Certificates for Finite-Basis Drifting
Models}
\author{\name Sam Andersson$^{1}$\thanks{Corresponding author: Sam Andersson,
Department of Clinical Neuroscience, Karolinska Institutet, Stockholm, Sweden.
Email: \href{mailto:Sam.andersson@ki.se}{Sam.andersson@ki.se}.}
\qquad
\name Ricky Molén$^{2}$\\[-0.15em]
\addr{$^{1}$Department of Clinical Neuroscience, Karolinska Institutet,
Stockholm, Sweden}\\
\addr{$^{2}$Department of Computational Science and Technology, KTH Royal
Institute of Technology, Stockholm, Sweden}}

\begin{document}

\maketitle

\begin{abstract}
Drifting objectives compare a target and model distribution through a vector
field observed noisily at finitely many locations. We ask what distributional
conclusion such a frozen measurement system warrants. For integrable
antisymmetric interactions and absolutely continuous laws in a declared
finite density basis, the unnormalized sampled numerator satisfies
$\operatorname{vec}(V_X)=Mc$, where $c$ is an antisymmetric mismatch and $M$
is probe-dependent. This identity yields an a posteriori total-variation (TV)
upper confidence bound accounting for held-out field noise,
estimated-operator error, and externally validated $L^1$ residual radii around
normalized density approximants in the span; a nonpositive observability
margin returns the trivial TV bound and abstains. The audit recomputes this
numerator from held-out samples; a normalized drift statistic requires a
separate joint numerator--denominator analysis. For Gaussian-RBF interactions,
a global envelope supports distribution-free and empirical-Bernstein radii
without truncation, with companion bounds for the Laplace similarity in the
original drifting objective. We characterize random-probe observability by a
population Gram matrix, identify rank and symmetry degeneracies, and prove
large-bandwidth collapse toward mean matching. Synthetic studies exercise
Gaussian and Laplace numerators, separately prespecified bounded-vector and
variance-adaptive radii, Monte Carlo-calibrated operators, nonzero residual
radii around normalized finite-basis approximants, outward-rounded
observability bounds, and designed abstention. A joint basis-size/dimension
stress path extends evaluation through $m=8$. The result is a conditional
diagnostic for a finite density class, or for normalized finite-basis density
approximants with external residual radii, not a universal guarantee from
small training drift.
\end{abstract}

\section{Introduction}\label{sec:intro}

Drifting models train a one-step generator using a
distribution-dependent vector field $V_{p,q}$ that compares a target law
$p$ with the generator law $q$ \citep{deng2026drifting}.  At population
equilibrium, $p=q$ implies $V_{p,q}=0$.  The question relevant to finite
computation is harder: if held-out estimates of the field are small at $N$
fixed locations, how close must $q$ be to $p$?  Population identifiability
alone does not answer this question.  Finite probes can miss mismatch
directions, ill-conditioning can amplify sampling error, and a finite
representation of the distributions introduces residual error.

We study distributional rather than parameter identifiability.  Different
generator parameters may induce the same pushforward law; our object is the
pair $(p,q)$. The main results concern absolutely continuous probability
laws whose densities lie in a shared finite basis, or are supplied with
independently valid $L^1$ residual radii around normalized
probability-density elements of that span. This excludes, without further
work, singular generator pushforwards and empirical measures. For an
absolutely integrable antisymmetric interaction, within this scope the
sampled numerator field at
$X=(x_1,\ldots,x_N)$ takes the form
\begin{equation}
  \vecop(V_X)=Mc. \label{eq:obs-intro}
\end{equation}
Here $M$ is determined by the interaction kernel, basis, and probes, while
$c=a\wedge b$ records the antisymmetric coefficient mismatch between $p$ and
$q$.  This representation exposes what raw drift omits: an observability
margin.  If $M$ is rank deficient, some formal mismatch directions are
invisible.  If it is full rank but poorly conditioned, a small sampled field
can coexist with a large mismatch.

Our main result turns \eqref{eq:obs-intro} into a calibrated, a posteriori
audit.  Let $\Vhat_X$ be evaluated on a fresh held-out batch, let
$\varepsilon_V$ control its sampling error, let $\varepsilon_M$ control an
estimated observation matrix, and let $R_m$ collect representation
residuals.  The resulting master bound has the form
\begin{equation}
  \norm{c_m}_2
  \le
  \frac{\norm{\vecop(\Vhat_X)}_2+\varepsilon_V+\norm{R_m}_2}
       {\underline\sigma-\varepsilon_M},
  \qquad \underline\sigma-\varepsilon_M>0,
  \label{eq:master-intro}
\end{equation}
where $\underline\sigma$ is a certified lower bound on the smallest
singular value of the computed operator.  Throughout,
$\smin(A):=\inf_{\norm{u}_2=1}\norm{Au}_2$; it is therefore zero whenever
$A$ is noninjective, including whenever it is wide.  A sharp
exterior-product coefficient identity converts this inequality to total
variation within the basis; explicit $L^1$ residual terms extend it when
$p$ and $q$ have normalized probability-density approximants in the declared
span. If the denominator is not positive, the valid output is abstention,
not a numerical claim.

The certificate is particularly explicit for a Gaussian-RBF mean-shift
interaction, a bounded analytic variant of the drifting numerator.  Although
its inputs may have unbounded support, the complete interaction obeys the
global envelope
$\norm{K_\tau}_2\le\sqrt{\tau/e}$.  Consequently, both distribution-free
and empirical-Bernstein radii apply to untruncated Gaussian samples.
For the Laplace similarity used by \citet{deng2026drifting}, the companion
envelope is $\norm{K_\tau}_2\le\tau/e$.
\Cref{cor:operational} combines the drift, matrix, and residual error budgets
into a held-out total-variation upper confidence bound and a corresponding
equivalence test.

The exact finite-basis implication underlying \eqref{eq:obs-intro} appears
in Appendix~C.1 of \citet{deng2026drifting}; generic singular-value,
concentration, and matrix-perturbation inequalities are also standard.  We
do not claim these ingredients separately.  Our contribution is their
calibrated composition into a finite-observation distributional certificate,
together with drift-specific feasible geometry, probe-law analysis, and
explicit abstention conditions.  This addresses a different question from
recent work on what the
population field represents, which dynamics it induces, or whether its full
continuum of values is identifying
\citep{weber2023scoredifference,turan2026secretly,lai2026unified,
cao2026gradientflow,franz2026nonconservative,lee2026identifiability,
balasubramanian2026finiteparticle}.

The argument proceeds in three stages. \Cref{prop:stability} isolates the
conditioning principle, \cref{thm:endtoend} adds all observation and
representation errors in one inverse inequality, and
\cref{cor:operational} turns that inequality into a held-out TV audit with
abstention. The remaining results calibrate its radii, analyze probe design,
and expose the predicted failure regimes.

The paper makes three contributions.

\begin{enumerate}
\item \textbf{An end-to-end held-out certificate.}
We propagate empirical-field error, operator-calibration error, and
finite-model residuals through the observability denominator.  The exact
exterior-product coefficient identity and resulting coefficient bridge yield
a total-variation upper confidence bound with an explicit abstention rule.
For Gaussian-RBF interactions we derive global,
distribution-free and variance-adaptive radii without sample truncation, and
we give a Laplace-kernel counterpart. Residual radii around normalized
finite-basis density approximants are declared external inputs; without them
the output is a sensitivity analysis, not an unconditional full-distribution
certificate.

\item \textbf{Probe observability and formal-versus-feasible geometry.}
The population Gram matrix $\Gamma(\nu)$ governs random-probe conditioning.
For equal-covariance isotropic Gaussian bases with pairwise-distinct component
means and pairwise-distinct pair midpoints, we verify the criterion
analytically and show how to use
$\gamma(\nu)=\lambda_{\min}(\Gamma(\nu))$ as a probe-design objective.
Because $c=a\wedge b$ is a decomposable bivector, ambient null
directions need not correspond to pairs of probability distributions; we
separate these formal and feasible notions.

\item \textbf{Failure regimes and empirical validation.}
Directional-rank and midpoint degeneracies expose blind directions, while a
large-bandwidth theorem shows convergence toward first-moment comparison and,
under finite third moments for the Gaussian kernel, an $O(1/\tau)$ rate on
fixed probes.  Controlled experiments test held-out coverage,
certificate tightness, probe-law design, bandwidth collapse, and numerical
rank boundaries.  A three-component experiment compares separately
prespecified bounded-vector and empirical-Bernstein audits for Gaussian and
Laplace numerators while exercising operator calibration, a nonzero residual
budget, an outward-rounded Gram lower bound, the final TV output, and
abstention.  A joint regular-simplex path through $m=8$ exposes the
finite-sample cost of increasing the basis and observation system.  A
separate cross-bandwidth study
shows that neither raw drift nor its radius-free conditioned plug-in exhibits
a bandwidth-invariant Wasserstein calibration in these benchmark designs.
\end{enumerate}

We do not claim a state-of-the-art benchmark for generative models or a
universal convergence theorem for arbitrary drifting fields.  Our results
concern a declared finite model class.  They extend to a larger class only
when normalized finite-basis density approximants and valid residual
envelopes are supplied.  The probes and audit data must be independent of
the choices made during training; tuning the generator, bandwidth, basis, or
probes on the same batch would require additional uniform or sequential
corrections.  The audit uses fresh samples from $p$ and $q$, but it is not
intended as a universal two-sample procedure.  It asks what can be inferred
from the unnormalized numerator of a particular drifting measurement system
when that numerator is recomputed on held-out data.  Under these conditions,
small held-out numerator drift certifies closeness only if the measurement
operator is sufficiently well conditioned and every stated error input is
valid.

\section{Related Work}\label{sec:related}

\paragraph{Drifting fields and population identifiability.}
\citet{deng2026drifting} introduced drifting as a one-step generative
paradigm in which a generator is regressed toward targets moved by a
distribution-dependent field.  Gaussian-smoothed score-difference transport
and a transport-and-regression update appeared earlier in
\citet{weber2023scoredifference}.  Subsequent work connects drifting to
smoothed score differences, variational transport, and Wasserstein gradient
flows \citep{turan2026secretly,lai2026unified,cao2026gradientflow,
gretton2026wgf}; \citet{franz2026nonconservative} analyze when the usual
normalization is nonconservative.  Population identifiability is established
for several kernel variants \citep{lee2026identifiability,esteban2026kernel,
he2026sinkhorn}, while \citet{balasubramanian2026finiteparticle} studies
finite-particle dynamics.  These results clarify the continuum field and its
dynamics.  We instead invert noisy observations at an external, finite probe
set.

The exact finite-basis implication used here is already present in
Appendix~C.1 of \citet{deng2026drifting}: linear independence of
probe-induced interaction vectors makes zero sampled drift imply equality
inside the basis.  Likewise, \citet{falahati2026driftxpress} analyze a
Nystr\"om-type projection residual for computational acceleration.  Our
residual has a different role---it quantifies approximation of the
underlying laws---and our output is a finite-sample upper confidence bound,
not an exact implication or acceleration guarantee.

More precisely, \citet{lai2026unified} control deviations between practical
finite-pool normalized updates and population score-like fields, whereas our
inverse step asks what distributional upper bound follows from a frozen
finite measurement system.  \citet{lee2026identifiability} study
continuum-field identifiability and weak stability for broad Borel laws under
tightness conditions.  Their distribution-free population scope is stronger
than ours; conversely, it does not yield a TV upper confidence bound from
finitely many noisy probes.  Our TV conclusion requires a known finite
density class or externally validated $L^1$ residual radii around normalized
density approximants in the declared span.

\Cref{tab:related-comparison} summarizes these distinctions.
\begin{table}[t]
\centering
\scriptsize
\setlength{\tabcolsep}{3.5pt}
\begin{tabular}{@{}L{2.15cm}L{2.15cm}L{2.65cm}L{5.25cm}@{}}
\toprule
Work & Observations & Distribution class & Statistical output \\
\midrule
\citet{deng2026drifting}, App.~C.1
& exact finite probes
& exact finite density basis
& equality implication under probe-induced linear independence; no noisy
  upper bound \\
\citet{lai2026unified}
& normalized finite pools and population fields
& smooth population models
& implementation-to-population deviation and dynamics; no TV certificate \\
\citet{lee2026identifiability}
& full continuum field
& broad Borel probability laws
& population identifiability and weak stability under tightness \\
Kernel equivalence tests \citep{liu2026equivalence}
& empirical kernel witnesses
& nonparametric
& equivalence in MMD or Stein discrepancy, not TV \\
This work
& noisy frozen finite probes
& finite density class or normalized finite-span approximants with external
  $L^1$ radii
& one-sided TV upper confidence bound with operator error and abstention \\
\bottomrule
\end{tabular}
\caption{Closest distinctions.  ``Finite'' refers to the measurement set,
not to the support of the distributions.}
\label{tab:related-comparison}
\end{table}

\paragraph{Finite witnesses, testing, and certifying closeness.}
Characteristic-kernel MMD and kernelized Stein discrepancies, under their
respective kernel and regularity conditions, provide equality-determining
population witnesses with empirical tests
\citep{gretton2008kernel,gretton2012kernel,sriperumbudur2010hilbert,
liu2016ksd}.  Analytic mean embeddings and finite-set Stein tests
show that finitely many random witness evaluations can separate a fixed
alternative, and optimize locations for power
\citep{chwialkowski2015analytic,jitkrittum2016features,
jitkrittum2017linear}.  Such detection results do not by themselves turn
failure to reject into a closeness guarantee.  Kernel equivalence tests
reverse the hypotheses and certify a margin in MMD or Stein discrepancy
\citep{liu2026equivalence}.  Our tradeoff is different: after restricting to
a finite or explicitly approximated model class, we obtain a one-sided upper
bound in total variation and pay explicitly for drift-observation
conditioning.  \Cref{cor:operational-equivalence} states the induced
TV-equivalence test.  Because the audit itself draws fresh samples from both
laws, direct basis-coefficient estimation or a conventional structured
two-sample procedure may be statistically tighter when the basis is known.
We do not claim minimax efficiency; the question here is what can be inferred
through the specified drifting measurement operator.

\paragraph{Conditioning, design, and structured inverse problems.}
Evaluating a field at probes selects a finite collection of moment
restrictions.  The distinction between valid moments and strong
identification is classical in generalized method of moments
\citep{hansen1982gmm,stockwright2000weak}.  The matrix
$\Gamma(\nu)=\E_\nu[G(X)^\top G(X)]$ is an information matrix, so maximizing
its smallest eigenvalue is the classical E-optimal design criterion
\citep{kiefer1974design,allenzhu2017design}.  We use this machinery to expose
the conditioning geometry induced by drifting kernels and finite density
bases.  A drift-specific feature is that $c=a\wedge b$ is a decomposable
bivector rather than an arbitrary element of
$\R^{\binom{m}{2}}$, connecting the formal-versus-feasible distinction to
structured and algebraic inverse problems
\citep{chandrasekaran2012geometry,breiding2023algebraic}.

Prior work therefore supplies both the exact finite-basis starting point and
the generic analysis tools.  The distinct object here is an operational TV
upper bound that simultaneously accounts for held-out field noise,
observation-operator estimation, finite-model residuals, and probe
conditioning.  The qualifier ``operational'' refers to the explicit audit
inputs and abstention rule, not to an ability to infer unknown approximation
radii or to certify from a normalized training statistic without an
additional denominator analysis.

\section{Preliminaries}\label{sec:prelim}

Throughout, we use the notation summarized in \cref{tab:notation}.

\subsection{Distributions and pushforwards}\label{sec:pushforwards}

The sample space is $\R^d$. The target distribution is denoted by $p$. The
generated distribution is denoted by $q$, or by $q_\theta$ when we want to
emphasize that it is induced by a measurable generator $f_\theta$. If
$\epsilon \sim p_\epsilon$ and $x = f_\theta(\epsilon)$, then
\[
  q_\theta = (f_\theta)_\# p_\epsilon
\]
means that $q_\theta$ is the law of $f_\theta(\epsilon)$. Equivalently, for
every measurable set $A \subseteq \R^d$,
\[
  q_\theta(A) = p_\epsilon\big( f_\theta^{-1}(A) \big).
\]
Such a pushforward need not have a Lebesgue density.  Assumptions (A1) and
the approximation results below apply only when $p$ and $q$ are absolutely
continuous and have densities in, or with certified $L^1$ distance to, the
declared finite span; for the approximation result, the finite-span
approximants must themselves be normalized probability densities. Singular
pushforwards and empirical measures are outside the present TV theorem.

\subsection{Drift fields}\label{sec:driftfields}

A drift field is a vector-valued map
\[
  V_{p,q} : \R^d \to \R^d.
\]
The input $x \in \R^d$ is the location at which the field is evaluated.
The output $V_{p,q}(x) \in \R^d$ is the proposed movement direction at $x$.
We use $p$ for the target distribution and $q$ for the current generated
distribution.

Let $Y^+ \sim p$ and $Y^- \sim q$ be independent random variables. A
general interaction-kernel drift has the form
\begin{equation}
  V_{p,q}(x) = \E\big[ K(x, Y^+, Y^-) \big], \label{eq:kernel-drift}
\end{equation}
where
\[
  K : \R^d \times \R^d \times \R^d \to \R^d
\]
is a vector-valued interaction kernel. If $p$ and $q$ admit densities, then
independence gives
\begin{equation}
  V_{p,q}(x)
  = \int\!\!\int K(x, y^+, y^-)\, p(y^+) q(y^-) \dd y^+ \dd y^-.
  \label{eq:drift-int}
\end{equation}
All vector expectations and integrals in the paper are Bochner integrals.
The measurability and absolute-integrability conditions needed for the
general finite-basis results are collected in (A0) below; the global
Gaussian and Laplace envelopes verify them whenever the basis functions are
in $L^1$.

\begin{definition}[Antisymmetric interaction kernel]\label{def:antisym}
The interaction kernel $K$ is antisymmetric in its sample arguments if
\[
  K(x, y^+, y^-) = -K(x, y^-, y^+)
\]
for every $x, y^+, y^- \in \R^d$.
\end{definition}

If $K$ is antisymmetric and the integrals are finite, then $p = q$ implies
$V_{p,q}(x) = 0$ for every $x$ (\cref{app:antisym-equilibrium}). The
converse is the main object of study.

\subsection{Probe locations and sampled drift}\label{sec:probes}

A probe list is an ordered tuple
\[
  X=(x_1,\dots,x_N)\in(\R^d)^N.
\]
Repeated locations are allowed; when distinctness is irrelevant, we also
refer informally to the list as a probe set.
The sampled drift matrix is
\begin{equation}
  V_X = \big[\, V_{p,q}(x_1) \;\; V_{p,q}(x_2) \;\cdots\; V_{p,q}(x_N)
  \,\big] \in \R^{d \times N}. \label{eq:VX}
\end{equation}
The vectorization operator $\vecop$ stacks the entries of a matrix into a
single vector in a fixed order. Therefore $\vecop(V_X) \in \R^{dN}$. The
precise stacking convention does not matter as long as it is used
consistently.
For the same convention, define the stacked interaction vector
\begin{equation}
  K_X(y^+,y^-)
  :=\vecop\big(K(x_1,y^+,y^-)\;\cdots\;K(x_N,y^+,y^-)\big)
  \in\R^{dN}.
  \label{eq:KX}
\end{equation}

\subsection{Mean-shift-style drift}\label{sec:meanshift}

The main concrete example in this paper is the unnormalized
mean-shift-style interaction
\begin{equation}
  K(x, y^+, y^-) = k(x, y^+)\, k(x, y^-)\, (y^+ - y^-),
  \label{eq:meanshift}
\end{equation}
where $k : \R^d \times \R^d \to \R$ is a scalar similarity kernel. This
kernel is antisymmetric because swapping $y^+$ and $y^-$ changes
$y^+ - y^-$ to $-(y^+ - y^-)$ while leaving the scalar product
$k(x, y^+) k(x, y^-)$ unchanged.

Our concrete benchmark uses the Gaussian-RBF similarity
\begin{equation}
  k_\tau(x,y)=\exp\!\left(-\frac{\norm{x-y}_2^2}{\tau}\right),
  \qquad \tau>0. \label{eq:rbf}
\end{equation}
Its exponential factor makes the interaction globally bounded even when the
sample distributions have unbounded support.

The original drifting objective of \citet{deng2026drifting} instead uses the
Laplace similarity
\begin{equation}
  \ell_\tau(x,y)=\exp\!\left(-\frac{\norm{x-y}_2}{\tau}\right),
  \qquad \tau>0. \label{eq:laplace}
\end{equation}
The finite-basis observation equation is agnostic to the kernel form within
the absolutely integrable antisymmetric interaction class. The Gaussian
choice supplies the closed forms used in our experiments; the next two
lemmas give rigorous global envelopes for both choices.

\begin{lemma}[Global Gaussian-RBF interaction envelope]
\label{lem:rbf-envelope}
Let $K_\tau$ be the interaction \eqref{eq:meanshift} with $k=k_\tau$. Then,
for every $x,y^+,y^-\in\R^d$,
\begin{equation}
  \norm{K_\tau(x,y^+,y^-)}_2\le\sqrt{\frac{\tau}{e}}.
  \label{eq:rbf-envelope}
\end{equation}
Consequently each coordinate is bounded in absolute value by
$B_\infty=\sqrt{\tau/e}$, and for any probe set of size $N$,
\begin{equation}
  \norm{K_X(y^+,y^-)}_2\le B_{N,\tau}
  :=\sqrt{\frac{N\tau}{e}}. \label{eq:stacked-envelope}
\end{equation}
Thus (A5) and (A7) of \cref{sec:observability} hold with explicit constants,
without truncating the sample distributions.
\end{lemma}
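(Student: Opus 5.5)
We bound the norm of the interaction directly, by reducing it to a one-variable maximization after using the parallelogram law. Write $u=x-y^+$ and $v=x-y^-$, so that $y^+-y^-=v-u$ and
\[
  \norm{K_\tau(x,y^+,y^-)}_2=\exp\!\left(-\frac{\norm{u}_2^2+\norm{v}_2^2}{\tau}\right)\norm{u-v}_2 .
\]
The parallelogram law gives $\norm{u}_2^2+\norm{v}_2^2=\tfrac12\norm{u-v}_2^2+\tfrac12\norm{u+v}_2^2\ge\tfrac12\norm{u-v}_2^2$. With $r:=\norm{u-v}_2\ge0$ this yields
\[
  \norm{K_\tau}_2\le r\exp\!\left(-\frac{r^2}{2\tau}\right).
\]

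Next we maximize $g(r)=re^{-r^2/(2\tau)}$ over $r\ge0$. Its derivative is $g'(r)=e^{-r^2/(2\tau)}(1-r^2/\tau)$, so the maximum is attained at $r=\sqrt\tau$. The maximal value is $\sqrt\tau\,e^{-1/2}=\sqrt{\tau/e}$, which proves \eqref{eq:rbf-envelope}. The bound is in fact attained at $u=-v$ with $\norm{u-v}_2=\sqrt\tau$, so the constant is sharp.

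The coordinatewise bound follows from $\abs{K_{\tau,j}}\le\norm{K_\tau}_2$. For the stacked vector, the same vectorization convention as in \eqref{eq:KX} gives
\[
  \norm{K_X}_2^2=\sum_{i=1}^N\norm{K_\tau(x_i,y^+,y^-)}_2^2\le N\tau/e ,
\]
which is \eqref{eq:stacked-envelope}.

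Finally, these bounds must be connected to (A5) and (A7) of \cref{sec:observability}. Those assumptions are stated later in the paper, so here we only note what the envelope supplies. First, $K_\tau$ is continuous, hence measurable. Second, it is globally bounded, so it is Bochner integrable against any product of $L^1$ basis densities or probability laws. Third, it provides almost-sure bounded-vector envelopes, with constant $B_\infty$ per coordinate and $B_{N,\tau}$ for the stacked vector, as required by Hoeffding/Bernstein-type radii. None of this requires truncating the sample distributions.

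There is no real obstacle in this argument. The one step that needs care is the parallelogram reduction: we must discard the $\norm{u+v}_2^2$ term rather than bound each exponential factor separately. Bounding the factors separately would lose the sharp constant.
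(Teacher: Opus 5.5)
Your proposal is correct and follows essentially the same route as the paper: the parallelogram identity gives $\norm{K_\tau(x,y^+,y^-)}_2\le s\exp(-s^2/(2\tau))$ with $s=\norm{y^+-y^-}_2$, the one-variable maximum at $s=\sqrt\tau$ yields $\sqrt{\tau/e}$, and summing squared norms over the $N$ probes gives $\sqrt{N\tau/e}$. Your extra observation that the constant is sharp, with equality when $x$ is the midpoint of $y^+,y^-$ and $\norm{y^+-y^-}_2=\sqrt\tau$, is also correct, although the paper does not state it.
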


The proof is deferred to \cref{app:proof-rbf-envelope}.

\begin{lemma}[Global Laplace interaction envelope]
\label{lem:laplace-envelope}
Let $K_\tau$ be the interaction \eqref{eq:meanshift} with
$k=\ell_\tau$. Then, for every $x,y^+,y^-\in\R^d$,
\begin{equation}
  \norm{K_\tau(x,y^+,y^-)}_2\le\frac{\tau}{e}.
  \label{eq:laplace-envelope}
\end{equation}
Consequently each coordinate is bounded by $B_\infty=\tau/e$, and the
stacked interaction over $N$ probes is bounded by
$B_{N,\tau}^{\mathrm{Lap}}=\sqrt{N}\tau/e$.
Thus (A5) and (A7) of \cref{sec:observability} hold with these constants.
\end{lemma}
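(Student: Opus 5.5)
We bound the norm directly by writing $u=\norm{x-y^+}_2$ and $v=\norm{x-y^-}_2$. By definition of \eqref{eq:meanshift} with $k=\ell_\tau$,
\[
  \norm{K_\tau(x,y^+,y^-)}_2
  = e^{-(u+v)/\tau}\,\norm{y^+-y^-}_2 .
\]
Since $y^+-y^-=(y^+-x)+(x-y^-)$, the triangle inequality gives $\norm{y^+-y^-}_2\le u+v$. Setting $s=u+v\ge 0$, we obtain
\[
  \norm{K_\tau(x,y^+,y^-)}_2\le s\,e^{-s/\tau}.
\]
This reduces the problem to a one-variable maximization, in contrast with the Gaussian case of \cref{lem:rbf-envelope}. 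There the exponent is quadratic in $u$ and $v$ separately, so one first has to optimize over how the distance splits between the two legs. Here the exponent is linear in $s$, so the split does not matter.

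Next we maximize $\phi(s)=s e^{-s/\tau}$ on $[0,\infty)$. Its derivative is $\phi'(s)=e^{-s/\tau}(1-s/\tau)$, which is positive for $s<\tau$ and negative for $s>\tau$. Hence $\phi$ attains its maximum at $s=\tau$, where $\phi(\tau)=\tau/e$. This gives \eqref{eq:laplace-envelope} for all $x,y^+,y^-$, with no assumption on the supports of the laws.

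The corollary statements follow immediately. Each coordinate is bounded in absolute value by the Euclidean norm, so $B_\infty=\tau/e$. The stacked vector $K_X$ from \eqref{eq:KX} concatenates $N$ blocks, each of Euclidean norm at most $\tau/e$, so
\[
  \norm{K_X}_2^2\le N\tau^2/e^2,
\]
which gives $B^{\mathrm{Lap}}_{N,\tau}=\sqrt N\,\tau/e$.

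For (A5) and (A7), we use that $K_\tau$ is continuous, hence Borel measurable, and uniformly bounded by the constant above. The Bochner integrals \eqref{eq:drift-int}, and the integrals against products of $L^1$ basis functions, are therefore absolutely convergent, and the bounded-vector constants are the ones just computed. We expect no step to be a real obstacle. The only point needing care is recording that the triangle-inequality bound is legitimate, even though it is not always attained: it is an upper bound, which is all the lemma requires.
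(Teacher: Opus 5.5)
Your proof is correct and essentially matches the paper's: both use the triangle inequality $\norm{y^+-y^-}_2\le\norm{x-y^+}_2+\norm{x-y^-}_2$ to reduce the claim to maximizing $s e^{-s/\tau}$ at $s=\tau$. The paper sets $s=\norm{y^+-y^-}_2$ and bounds the exponential factor, whereas you set $s$ to the sum of the two distances and bound the linear factor; the coordinate and stacked bounds then follow exactly as you state.
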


The proof is deferred to \cref{app:proof-laplace-envelope}.

\begin{remark}[Normalized drift]\label{rem:normalized}
A natural normalized population field, closely related to the
normalizations used in drifting implementations, is
\[
  \bar V_{p,q}(x) := \frac{V_{p,q}(x)}{Z(x)},
  \qquad
  Z(x) := \E\big[ k(x, Y^+)\, k(x, Y^-) \big]
        = \E\big[ k(x, Y^+) \big]\, \E\big[ k(x, Y^-) \big],
\]
where the factorization uses independence of $Y^+$ and $Y^-$. At any probe
$x$ with $0 < \abs{Z(x)} < \infty$, one has
$\bar V_{p,q}(x) = 0$ if and only if
$V_{p,q}(x) = 0$; hence, whenever the normalizers are finite and nonzero at
every probe, the sampled zero-drift conditions for the normalized and
unnormalized fields coincide. Consequently, every exact-identifiability
conclusion below remains valid when the premise $V_X=0$ is replaced by
$\bar V_X=0$. Population magnitude statements also transfer: from
$V_{p,q}(x_\ell) = Z(x_\ell)\, \bar V_{p,q}(x_\ell)$,
\[
  \norm{\vecop(V_X)}_2
  \;\le\; \Big( \max_{1 \le \ell \le N} \abs{Z(x_\ell)} \Big)\,
  \norm{\vecop(\bar V_X)}_2 ,
\]
where $\bar V_X$ denotes the normalized sampled drift matrix. For the
Gaussian-RBF kernel of \eqref{eq:rbf}, $0 < k_\tau \le 1$ pointwise, so
$0 < Z(x) \le 1$ automatically (the strict positivity follows because the
expectation of a strictly positive function is strictly positive).
For a general similarity, the factorization additionally requires
$\E\abs{k(x,Y^+)}<\infty$ and $\E\abs{k(x,Y^-)}<\infty$.

Practical implementations use several normalization and finite-batch
variants, which need not coincide exactly with this population ratio
\citep{lai2026unified,franz2026nonconservative}. The results below therefore
state explicitly whether they concern the numerator field or a
ratio-normalized version.

Empirical normalized drift needs an additional joint
numerator--denominator analysis. If an implementation uses a ratio estimator
\[
  \widehat{\bar V}_{p,q}(x) = \frac{\widehat N(x)}{\widehat Z(x)},
\]
then the sampling error includes both numerator error and denominator
fluctuation. The empirical certificates below apply directly to the
numerator field. To apply them to the ratio estimator, assume for example
\[
  \inf_{1\le \ell\le N} Z(x_\ell) \ge z_{\min} > 0,
  \qquad
  \max_{1\le \ell\le N}\abs{\widehat Z(x_\ell)-Z(x_\ell)}
  \le z_{\min}/2.
\]
Under such an event, ratio perturbation bounds can convert numerator
certificates into normalized certificates with constants depending on
$z_{\min}^{-1}$, but this paper does not calibrate that joint event.  The
implemented audit therefore recomputes the cross-multiplied numerator on
held-out samples; a small normalized training statistic by itself is not an
input to \cref{cor:operational}.
\end{remark}

\begin{table}[H]
\centering
\scriptsize
\setlength{\tabcolsep}{4pt}
\begin{tabular}{L{1.8cm}L{2.4cm}L{4.9cm}L{4.0cm}}
\toprule
Symbol & Space & Meaning & Defined \\
\midrule
$p,\,q,\,q_\theta$ & laws; densities under (A1) & target / generated distribution & \cref{sec:pushforwards} \\
$V_{p,q}$ & $\R^d \to \R^d$ & population drift field & \cref{sec:driftfields} \\
$K$ & $\R^{3d} \to \R^d$ & interaction kernel & \eqref{eq:kernel-drift} \\
$k_\tau,\ell_\tau$ & $\R^{2d} \to \R$ & Gaussian-RBF / Laplace similarity & \eqref{eq:rbf}, \eqref{eq:laplace} \\
$X,\ N$ & $(\R^d)^N$;\ $\mathbb N$ & ordered probe list; length & \cref{sec:probes} \\
$V_X$ & $\R^{d\times N}$ & population sampled drift & \eqref{eq:VX} \\
$K_X$ & $\R^{2d}\to\R^{dN}$ & interaction vector stacked over the probes & \eqref{eq:KX} \\
$\Vhat_X$ & $\R^{d\times N}$ & empirical sampled drift ($n$ pairs) & \eqref{eq:empdrift} \\
$\phi_1,\dots,\phi_m$ & functions on $\R^d$ & basis of the model class & \eqref{eq:finite-basis} \\
$a,\ b$ & $\R^m$ & basis coefficients of $p$, $q$ & \eqref{eq:finite-basis} \\
$c$;\ $r$ & $\R^r$;\ $r=\binom{m}{2}$ & mismatch, $c_{ij}=a_ib_j-a_jb_i$ & \eqref{eq:cij}, \eqref{eq:cdef} \\
$U_{ij}$ & $\R^{d\times N}$ & pair response at the probes & \eqref{eq:Uij} \\
$M$ / $M_m$;\ $\widehat M_m$ & $\R^{dN\times r}$ & observation operator; its computed estimate & \eqref{eq:Mdef}; \cref{sec:estM} \\
$\varepsilon_V$;\ $\varepsilon_M$ & $\R_{\ge 0}$ & drift / matrix estimation error & \eqref{eq:master-errors} \\
$B_{N,\tau}$ & $\R_{>0}$ & stacked Gaussian-RBF envelope $\sqrt{N\tau/e}$ & \eqref{eq:stacked-envelope} \\
$R_m$ & $\R^{dN}$ & basis-approximation residual & \eqref{eq:approx-decomp} \\
$\rho_p,\rho_q$;\ $\overline R_m$ & $\R_{\ge0}$ & density / drift residual radii & \eqref{eq:residual-envelope}; \eqref{eq:Rbar} \\
$\eta$ & $\R^m$, $\eta_i = \int \phi_i$ & basis mass vector & \cref{lem:bridge} \\
$\beta_\phi$ & $\R_{\ge0}$ & mismatch-to-TV conversion constant & \eqref{eq:betaphi} \\
$g_\alpha$ & $\R^d \to \R^d$ & pair witness, $\alpha \in P$ & \eqref{eq:pairwitness} \\
$G(x)$ & $\R^{d\times r}$ & probe block & \cref{sec:randprobe} \\
$\nu$;\ $\Gamma(\nu)$;\ $\gamma(\nu)$ & law on $\R^d$; $\R^{r\times r}$; $\R_{\ge 0}$ & probe law; observability Gram matrix; $\lmin(\Gamma(\nu))$ & \eqref{eq:Gamma} \\
$L$;\ $B_\infty$;\ $B_X$ & $\R_{>0}$ & bounds in (A9); (A5); (A7) & \cref{sec:randprobe}; \cref{lem:drift-radii}; \cref{prop:residual} \\
$\Delta\mu$ & $\R^d$ & $\E_p Y - \E_q Y$ & \cref{sec:collapse} \\
$C(x,p,q)$ & $\R_{\ge 0}$ & finite-bandwidth collapse constant & \cref{lem:taurate} \\
$\Delta$;\ $U_c$;\ $U_{\mathrm{TV}}$ & $\R$; $[0,+\infty]$; $[0,1]$ & audit margin and upper bounds & \eqref{eq:Delta-audit}; \eqref{eq:operational-bounds} \\
\bottomrule
\end{tabular}
\caption{Notation. Population quantities ($V_X$, $M$) are distinguished
from their empirical counterparts ($\Vhat_X$, $\widehat M_m$) throughout; every
certificate states explicitly which object it bounds ($c$, $c_m$, or a
distributional metric via \cref{cor:tv}).}
\label{tab:notation}
\end{table}

\section{Finite-Basis Drift Observability}\label{sec:observability}

The central calculation rewrites the zero-drift condition at finitely many
probes as a finite-dimensional linear observation equation.

\subsection{Finite-dimensional model class}\label{sec:modelclass}

Fix $m\ge2$. Let $\phi_1, \dots, \phi_m$ be linearly independent basis functions on
$\R^d$. Assume that $p$ and $q$ belong to their finite span:
\begin{equation}
  p(y) = \sum_{i=1}^{m} a_i \phi_i(y),
  \qquad
  q(y) = \sum_{i=1}^{m} b_i \phi_i(y). \label{eq:finite-basis}
\end{equation}
The coefficient vectors are $a = (a_1, \dots, a_m)^\top \in \R^m$ and
$b = (b_1, \dots, b_m)^\top \in \R^m$. The coefficients are assumed to make
$p$ and $q$ valid probability densities.

Starting from \eqref{eq:drift-int} and substituting
\eqref{eq:finite-basis} gives
\begin{equation}
  V_{p,q}(x)
  = \sum_{i=1}^m \sum_{j=1}^m a_i b_j
    \int\!\!\int K(x, y^+, y^-)\, \phi_i(y^+) \phi_j(y^-)
    \dd y^+ \dd y^-. \label{eq:drift-expansion}
\end{equation}
The finite sums allow the expansion by distributivity and linearity of
integration, assuming the displayed integrals are finite.

For each pair $(i,j)$, define $U_{ij} \in \R^{d \times N}$ by its
$\ell$-th column:
\begin{equation}
  U_{ij}[:, \ell]
  = \int\!\!\int K(x_\ell, y^+, y^-)\, \phi_i(y^+) \phi_j(y^-)
    \dd y^+ \dd y^-, \qquad \ell = 1, \dots, N. \label{eq:Uij}
\end{equation}
Evaluating \eqref{eq:drift-expansion} at every probe gives
\begin{equation}
  V_X = \sum_{i=1}^m \sum_{j=1}^m a_i b_j\, U_{ij}. \label{eq:VX-sum}
\end{equation}

\subsection{Antisymmetric mismatch coordinates}\label{sec:mismatch}

Assume the kernel is antisymmetric in the sense of \cref{def:antisym}.
Then $U_{ji} = -U_{ij}$ for all $i, j$
(\cref{app:U-antisym}), and $U_{ii} = 0$. Grouping the $(i,j)$ and $(j,i)$
terms in \eqref{eq:VX-sum} gives, for $i < j$,
\[
  a_i b_j U_{ij} + a_j b_i U_{ji} = (a_i b_j - a_j b_i)\, U_{ij}.
\]
Define the antisymmetric coefficient mismatch
\begin{equation}
  c_{ij} = a_i b_j - a_j b_i, \qquad 1 \le i < j \le m. \label{eq:cij}
\end{equation}
Then
\begin{equation}
  V_X = \sum_{1 \le i < j \le m} c_{ij}\, U_{ij}. \label{eq:VX-anti}
\end{equation}
Let
\[
  P = \{(i,j) : 1 \le i < j \le m\}, \qquad r = \abs{P} = \binom{m}{2}.
\]
Fix an ordering $P = \{(i_1, j_1), \dots, (i_r, j_r)\}$. Define
\begin{equation}
  M = \big[\, \vecop(U_{i_1 j_1}) \;\; \vecop(U_{i_2 j_2}) \;\cdots\;
      \vecop(U_{i_r j_r}) \,\big] \in \R^{dN \times r},
  \label{eq:Mdef}
\end{equation}
and
\begin{equation}
  c = \big( c_{i_1 j_1}\; c_{i_2 j_2} \cdots c_{i_r j_r} \big)^\top
  \in \R^{r}. \label{eq:cdef}
\end{equation}
Vectorizing \eqref{eq:VX-anti} yields the observation equation
\begin{equation}
  \vecop(V_X) = M c. \label{eq:obs}
\end{equation}

We collect here the assumptions used throughout the paper, numbered
globally. Their objects and use sites are defined at the referenced
locations.
\begin{itemize}
\item[(A0)] The basis functions and $K$ are Borel measurable and, for every
  $i,j,\ell$,
  \[
    \int\!\!\int
      \norm{K(x_\ell,y,z)}_2\,
      \abs{\phi_i(y)\phi_j(z)}\dd y\dd z<\infty.
  \]
  All vector integrals are Bochner integrals.
\item[(A1)] $p$ and $q$ are probability densities in the finite span
  $\operatorname{span}\{\phi_1, \dots, \phi_m\}$.
\item[(A2)] The functions $\phi_1, \dots, \phi_m$ are linearly
  independent.
\item[(A3)] The interaction kernel is antisymmetric:
  $K(x, y^+, y^-) = -K(x, y^-, y^+)$.
\item[(A4)] The matrix $M \in \R^{dN \times r}$ defined in
  \eqref{eq:Mdef} has full column rank, i.e.\ $\rk(M) = r = \binom{m}{2}$.
\item[(A5)] Each coordinate of the stacked empirical drift summand $\Xi_s$
  of \cref{sec:concentration} lies in $[-B_\infty, B_\infty]$ almost surely.
\item[(A6)] $\smin(\widehat M_m)>\varepsilon_M$ (\cref{thm:endtoend}).
\item[(A7)] $\norm{K_X(y^+, y^-)}_2 \le B_X$ for every
  $y^+, y^-\in\R^d$ (\cref{prop:residual}).
\item[(A8)] $p$ and $q$ have finite first moments (\cref{thm:collapse});
  finite third moments for the Gaussian rate (\cref{lem:taurate}) or finite
  second moments for the Laplace rate (\cref{prop:laplace-collapse}).
\item[(A9)] Each pair witness $g_\alpha$ is well defined and Borel measurable
  $\nu$-a.e., and $\norm{G(x)}_{\mathrm{op}} \le L$ for $\nu$-a.e.\ $x$
  (\cref{sec:randprobe}).
\item[(A10)] $\gamma(\nu) = \lmin(\Gamma(\nu)) > 0$
  (\cref{sec:randprobe}).
\end{itemize}
Under (A9)--(A10), \cref{thm:randprobe} supplies (A4) with high probability
for i.i.d.\ probes, so (A4) then becomes a conclusion rather than an
assumption.
For the Gaussian-RBF interaction, \cref{lem:rbf-envelope} verifies (A5) and
(A7) globally with $B_\infty=\sqrt{\tau/e}$ and
$B_X=B_{N,\tau}=\sqrt{N\tau/e}$.  For the Laplace interaction,
\cref{lem:laplace-envelope} gives $B_\infty=\tau/e$ and
$B_X=\sqrt N\,\tau/e$.

\subsection{Exact observability}\label{sec:exact}

\begin{theorem}[Finite-dimensional drift observability]\label{thm:exact}
Assume (A0)--(A4). If $V_X = 0$, then $p = q$ within this
finite-dimensional model class.
\end{theorem}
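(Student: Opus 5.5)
The plan is to reduce the statement to linear algebra through the observation equation \eqref{eq:obs}, and then recover $p=q$ from $c=0$ using normalization.

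\textbf{Step 1: obtain $c=0$.} By (A0) all integrals in \eqref{eq:Uij} are finite Bochner integrals. The expansion \eqref{eq:drift-expansion} is therefore valid by linearity. By (A3), $U_{ji}=-U_{ij}$ (\cref{app:U-antisym}), so \eqref{eq:obs} holds: $\vecop(V_X)=Mc$. If $V_X=0$, then $Mc=0$. Full column rank (A4) makes $M$ injective, so $c=0$, that is, $a_ib_j=a_jb_i$ for all $i<j$.

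\textbf{Step 2: show $a$ and $b$ are parallel.} The conditions $a_ib_j=a_jb_i$ say every $2\times2$ minor of $[a\;b]$ vanishes, so $a\wedge b=0$. We need $a\neq0$. This holds because $p$ integrates to one, so $p\neq0$ as a function, and hence $a\neq0$ since the $\phi_i$ are linearly independent (A2). Pick $i_0$ with $a_{i_0}\ne0$ and set $\lambda=b_{i_0}/a_{i_0}$. Then for every $j$, $a_{i_0}b_j=a_jb_{i_0}$ gives $b_j=\lambda a_j$. Hence $b=\lambda a$ and $q=\lambda p$ as functions.

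\textbf{Step 3: fix $\lambda=1$.} Both $p$ and $q$ are probability densities by (A1). Integrating $q=\lambda p$ therefore gives $1=\lambda$. The integrals are finite because $p$ and $q$ are densities, not because the individual $\phi_i$ are integrable. So $q=p$ almost everywhere, and by (A2) $b=a$, which is equality within the model class.

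\textbf{Main obstacle.} Step 3 is the only point needing care. The map $(a,b)\mapsto c$ cannot see a common scaling, so injectivity of $M$ alone yields only proportionality. Normalization must supply the missing scalar, and it should be applied to $p$ and $q$ directly. This avoids assuming that each $\phi_i$ lies in $L^1$, in case the basis mass vector $\eta$ is not available.
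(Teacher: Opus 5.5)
Your proof is correct and follows the paper's argument. The observation equation and full column rank give $c=0$, the vanishing $2\times2$ minors make the coefficient vectors proportional, and normalization of both densities forces the scalar to be $1$. The differences are cosmetic: the paper pivots on a nonzero entry of $b$ and writes $a=\lambda b$ rather than pivoting on $a$, and your appeal to (A2) for $a\neq0$ is unnecessary, since $a=0$ would already give $p=0$.
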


\begin{proof}
Assume $V_X = 0$. Then $\vecop(V_X) = 0$. By \eqref{eq:obs}, $Mc = 0$.
Since $M$ has full column rank, its nullspace is trivial. Therefore
$c = 0$. By the definition of $c$,
\[
  a_i b_j - a_j b_i = 0 \quad \text{for every } 1 \le i < j \le m.
\]
Equivalently, every $2 \times 2$ minor of the two-column matrix $[a\; b]$
vanishes. Since $q$ is a probability density, $b \ne 0$. Hence there exists
$k$ such that $b_k \ne 0$. The zero-minor relations imply
\[
  a_i = \frac{a_k}{b_k}\, b_i \quad \text{for every } i = 1, \dots, m
\]
(\cref{app:minors}). Thus $a = \lambda b$, with $\lambda = a_k / b_k$.
Therefore $p = \lambda q$. Since both $p$ and $q$ integrate to one,
$\lambda = 1$. Hence $p = q$.
\end{proof}

\subsection{Conditioning-aware stability}\label{sec:conditioning}

For every matrix $A\in\R^{s\times t}$ we use the variational convention
\begin{equation}
  \smin(A):=\inf_{\norm{u}_2=1,\ u\in\R^t}\norm{Au}_2 .
  \label{eq:smin-definition}
\end{equation}
Thus $\smin(A)=0$ whenever $A$ is noninjective, in particular whenever
$s<t$.  This convention prevents the smallest value returned by a compact
SVD of a wide matrix from being mistaken for an injectivity margin.

\begin{proposition}[Deterministic stability]\label{prop:stability}
Assume $M$ has full column rank and let $\smin(M) > 0$ denote its smallest
singular value. Then
\begin{equation}
  \norm{c}_2 \le \frac{\norm{\vecop(V_X)}_2}{\smin(M)}.
  \label{eq:stability}
\end{equation}
\end{proposition}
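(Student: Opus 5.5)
The argument combines the observation equation \eqref{eq:obs} with the variational definition \eqref{eq:smin-definition} of $\smin$. Since (A0)--(A3) give $\vecop(V_X)=Mc$ exactly, it suffices to show the purely linear-algebraic inequality $\norm{Mc}_2\ge\smin(M)\norm{c}_2$ for every $c\in\R^r$ and then divide by $\smin(M)>0$.

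First, the case $c=0$ is trivial: the left side of \eqref{eq:stability} is zero and the right side is nonnegative. For $c\neq0$, set $u=c/\norm{c}_2$, a unit vector in $\R^r$. By \eqref{eq:smin-definition}, $\norm{Mu}_2\ge\smin(M)$. Linearity of $M$ and homogeneity of the norm then give
\[
  \norm{Mc}_2=\norm{c}_2\,\norm{Mu}_2\ge\smin(M)\,\norm{c}_2 .
\]
Substituting $Mc=\vecop(V_X)$ and dividing by $\smin(M)$ yields \eqref{eq:stability}.

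The only point needing care is checking that the hypothesis $\smin(M)>0$ is consistent with the variational convention, which involves an infimum rather than the smallest entry of a compact SVD. Full column rank means $M$ is injective. On the unit sphere of $\R^r$, which is compact, the map $u\mapsto\norm{Mu}_2$ is continuous and strictly positive, so the infimum is attained and is positive. This also identifies it with the smallest singular value $\sqrt{\lmin(M^\top M)}$, since $\norm{Mu}_2^2=u^\top M^\top Mu$ and the Rayleigh quotient of a symmetric matrix is minimized at $\lmin$. This justifies the division, and the final bound is stated in terms of the assumed positive quantity.
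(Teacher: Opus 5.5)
Your proof is correct and matches the paper's argument: substitute the observation equation $\vecop(V_X)=Mc$, apply $\norm{Mc}_2\ge\smin(M)\norm{c}_2$, and divide by $\smin(M)>0$. Your normalization to a unit vector and your compactness remark only spell out the singular-value inequality that the paper invokes directly.
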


\begin{proof}
By \eqref{eq:obs}, $\vecop(V_X) = Mc$. Therefore
$\norm{\vecop(V_X)}_2 = \norm{Mc}_2$. Since $M$ has full column rank, the
smallest singular value inequality gives
$\norm{Mc}_2 \ge \smin(M)\, \norm{c}_2$. Combining the two displays gives
$\norm{\vecop(V_X)}_2 \ge \smin(M)\, \norm{c}_2$, and dividing by
$\smin(M) > 0$ gives \eqref{eq:stability}.
\end{proof}

Full rank gives exact identifiability at zero drift. The value of
$\smin(M)$ controls approximate identifiability. If $\smin(M)$ is small,
then small drift can hide a large mismatch vector. Thus the useful
certificate is not raw drift magnitude alone, but raw drift magnitude
divided by an observability scale.

\subsection{Formal versus feasible mismatch directions}\label{sec:feasible}

The vector $c$ in \eqref{eq:cdef} is not an arbitrary element of
$\R^r$. It is generated by two coefficient vectors through the exterior
product relation
\[
  c = a \wedge b, \qquad c_{ij} = a_i b_j - a_j b_i .
\]
Equivalently, $c$ is a decomposable bivector, or the upper triangle of a
skew-symmetric matrix of rank at most two. This structured restriction can
be substantially smaller than the ambient space $\R^r$; the corresponding
Grassmannian dimension count is recorded in \cref{app:grassmannian}.
Consequently, a formal null vector of $M$ need not correspond to any pair
of valid probability densities. Let
\[
  \mathcal C_{\mathrm{feas}}
  := \{ a \wedge b : a,b \in \R^m \text{ make } p,q
      \text{ valid normalized densities in the model class} \}.
\]
The practically relevant observability constant is the restricted value
\[
  \sigma_{\mathrm{feas}}(M)
  := \inf_{\substack{c \in \mathcal C_{\mathrm{feas}}\\ c \ne 0}}
     \frac{\norm{Mc}_2}{\norm{c}_2}.
\]
We use the extended-real convention $\inf\varnothing=+\infty$ if the
declared basis admits no nonzero feasible mismatch.
Since the infimum is taken over a subset of $\R^r$, one has
$\smin(M) \le \sigma_{\mathrm{feas}}(M)$.
Thus the ordinary smallest singular value gives a simple conservative
certificate. If $M$ is rank deficient, there is always a formal blind
direction in $\R^r$; it yields valid $p\ne q$ with zero sampled drift only if
$\ker(M)$ intersects $\mathcal C_{\mathrm{feas}}$ away from zero. This
zero-drift criterion is distinct from pairwise injectivity of
$c\mapsto Mc$ on $\mathcal C_{\mathrm{feas}}$, which would instead require
\[
  \ker(M)\cap
  \bigl(\mathcal C_{\mathrm{feas}}-\mathcal C_{\mathrm{feas}}\bigr)
  =\{0\}.
\]

This distinction is useful in both directions. Full column rank is an easy
sufficient condition that avoids having to characterize
$\mathcal C_{\mathrm{feas}}$. Conversely, when a structural null direction
is found, one must either construct valid coefficients $a,b$ with
$a\wedge b$ in that direction or state the obstruction as a formal
measurement degeneracy. Accordingly, the probe-count results in
\cref{app:randprobes} concern ambient full-column-rank recovery; they are
not claimed to be minimal for distributional identifiability on
$\mathcal C_{\mathrm{feas}}$.

\subsection{From mismatch coordinates to a distributional metric}
\label{sec:bridge}

\Cref{prop:stability} bounds the antisymmetric mismatch vector $c$, not a
distributional metric. The next lemma closes that gap within the model
class: $\norm{c}_2$ controls the coefficient gap $\norm{a - b}_2$, and
hence the $L^1$ (equivalently, total-variation) distance between $p$ and
$q$.

\begin{lemma}[Mismatch-to-coefficient bridge]\label{lem:bridge}
Assume (A1), assume each basis function $\phi_i$ is integrable, and
let $\eta \in \R^m$ have entries
$\eta_i = \int_{\R^d} \phi_i(y) \dd y$. Then
\[
  \norm{a-b}_2\le\norm{\eta}_2\norm{c}_2.
\]
\end{lemma}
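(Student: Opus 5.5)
The plan is to exploit the normalization constraints together with the exterior-product structure, which yields an exact identity for $a-b$ in terms of $c$. Since $p$ and $q$ are probability densities and each $\phi_i$ is integrable, integrating \eqref{eq:finite-basis} termwise gives $\ip{\eta}{a}=\int p=1$ and $\ip{\eta}{b}=\int q=1$.

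The key step is the vector identity behind the bivector calculus, namely the contraction of $a\wedge b$ with $\eta$. Write $C\in\R^{m\times m}$ for the skew-symmetric matrix $C=ab^\top-ba^\top$, whose upper triangle is $c$, so $C_{ij}=c_{ij}$ for $i<j$. Then
\[
  C^\top\eta=(ba^\top-ab^\top)\eta=\ip{a}{\eta}\,b-\ip{b}{\eta}\,a=b-a,
\]
using the two normalization identities. Hence $a-b=-C^\top\eta=C\eta$, an exact linear expression of the coefficient gap in terms of the mismatch.

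It remains to bound $\norm{C\eta}_2\le\norm{C}_{\mathrm{op}}\norm{\eta}_2$ and to compare $\norm{C}_{\mathrm{op}}$ with $\norm{c}_2$. Because $C$ is skew-symmetric with rank at most two, its nonzero singular values come as a pair $s,s$. Therefore
\[
  \norm{C}_F^2=2s^2=2\norm{C}_{\mathrm{op}}^2 .
\]
Each off-diagonal entry of $C$ appears twice in the Frobenius norm, so $\norm{C}_F^2=2\norm{c}_2^2$. Combining the two displays gives $\norm{C}_{\mathrm{op}}=\norm{c}_2$, and so $\norm{a-b}_2\le\norm{\eta}_2\norm{c}_2$.

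The main obstacle is avoiding the crude estimate $\norm{C}_{\mathrm{op}}\le\norm{C}_F=\sqrt2\norm{c}_2$, which would cost a factor $\sqrt2$. To close this step I would justify the paired-singular-value fact. Either note that the eigenvalues of a real skew matrix are $\pm i s$, so $C^\top C$ has eigenvalue $s^2$ with multiplicity two. Or compute directly: $C$ acts only on $\operatorname{span}\{a,b\}$, and after choosing an orthonormal basis of that plane it becomes $\begin{pmatrix}0&-s\\ s&0\end{pmatrix}$ with $s=\sqrt{\norm{a}^2\norm{b}^2-\ip{a}{b}^2}=\norm{c}_2$ by Lagrange's identity. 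Lagrange's identity itself gives the cleanest verification of $\norm{C}_{\mathrm{op}}=\norm{c}_2$. If $a$ and $b$ are linearly dependent, then $C=0$ and $c=0$, and the normalization forces $a=b$, so the bound holds trivially.
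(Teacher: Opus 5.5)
Your proof is correct and follows the paper's argument. You form the skew matrix $ab^\top-ba^\top$, use the normalizations $\ip{\eta}{a}=\ip{\eta}{b}=1$ to get $a-b=C\eta$, and prove the exact identity $\norm{C}_{\mathrm{op}}=\norm{c}_2$. The only difference is in that last step: you mainly use the paired singular values of a rank-two skew matrix together with $\norm{C}_F^2=2\norm{c}_2^2$, whereas the paper writes $a=\lambda b+w$ with $w\perp b$ and reads off the $2\times2$ block via Lagrange's identity, which is exactly your stated alternative.
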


\begin{proof}
Let $A := a b^\top - b a^\top \in \R^{m \times m}$. Its $(i,j)$ entry is
$a_i b_j-a_j b_i$, so $A$ is the skew matrix associated with the
decomposable bivector $c=a\wedge b$. Normalization gives
\begin{equation}
  A\eta=a\ip{b}{\eta}-b\ip{a}{\eta}=a-b. \label{eq:Aeta}
\end{equation}
It remains to identify $\norm{A}_{\mathrm{op}}$. Since $b\ne0$, write
$a=\lambda b+w$ with $w\perp b$. Then
$A=wb^\top-bw^\top$. On the orthonormal basis obtained from $w$ and $b$,
the only nonzero block of $A$ is
\[
  \norm{w}_2\norm{b}_2
  \begin{pmatrix}0&1\\-1&0\end{pmatrix}.
\]
(If $w=0$, both sides below are zero.) Hence
\[
  \norm{A}_{\mathrm{op}}
  =\norm{w}_2\norm{b}_2
  =\left(\sum_{i<j}(a_i b_j-a_j b_i)^2\right)^{1/2}
  =\norm{c}_2,
\]
where the middle equality is Lagrange's identity. Applying the operator-norm
inequality to \eqref{eq:Aeta} proves the claim.
\end{proof}

\begin{corollary}[Total-variation certificate within the model class]
\label{cor:tv}
Under (A1), with each $\phi_i\in L^1(\R^d)$, define
\begin{equation}
  \beta_\phi:=\frac12\norm{\eta}_2
  \left(\sum_{i=1}^m\norm{\phi_i}_{L^1}^2\right)^{1/2}. \label{eq:betaphi}
\end{equation}
Then
\[
  \TV(p,q)\le\beta_\phi\norm{c}_2.
\]
If, in addition, each $\phi_i$ is itself a probability density (e.g.\ a
Gaussian-mixture basis), then $\eta_i = 1$ and $\norm{\phi_i}_{L^1} = 1$
for every $i$, so
\[
  \TV(p,q)\le\frac{m}{2}\norm{c}_2.
\]
\end{corollary}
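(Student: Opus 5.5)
The plan is to write $p-q$ as a finite combination of basis functions, bound its $L^1$ norm by a Cauchy--Schwarz estimate in the coefficients, and then invoke \cref{lem:bridge}. Under (A1), $p-q=\sum_{i=1}^m (a_i-b_i)\phi_i$ pointwise. Since $p$ and $q$ are probability densities, $\TV(p,q)=\tfrac12\norm{p-q}_{L^1}$.

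The first step is the triangle inequality in $L^1$ followed by Cauchy--Schwarz in $\R^m$:
\[
  \norm{p-q}_{L^1}\le\sum_{i=1}^m\abs{a_i-b_i}\,\norm{\phi_i}_{L^1}
  \le\norm{a-b}_2\Bigl(\sum_{i=1}^m\norm{\phi_i}_{L^1}^2\Bigr)^{1/2}.
\]
Every quantity here is finite because each $\phi_i\in L^1$ by hypothesis. The integrability hypothesis also makes $\eta$ well defined, which is needed next.

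The second step applies \cref{lem:bridge}, whose hypotheses ((A1) and integrability of each $\phi_i$) are exactly the ones assumed here. It gives $\norm{a-b}_2\le\norm{\eta}_2\norm{c}_2$. Substituting this and multiplying by $\tfrac12$ yields $\TV(p,q)\le\beta_\phi\norm{c}_2$ with $\beta_\phi$ as defined in \eqref{eq:betaphi}.

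For the specialization, if each $\phi_i$ is a probability density then $\phi_i\ge0$ and $\int\phi_i=1$. Hence $\eta_i=\norm{\phi_i}_{L^1}=1$, so $\norm{\eta}_2=\sqrt m$ and the sum under the square root equals $m$. This gives $\beta_\phi=\tfrac12\sqrt m\sqrt m=m/2$.

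There is no real obstacle. The only point needing care is the identification $\TV=\tfrac12\norm{\cdot}_{L^1}$ for absolutely continuous laws, which (A1) guarantees. One should also note that the basis functions may take negative values in the general case, which is why $\norm{\phi_i}_{L^1}$ rather than $\eta_i$ appears in the second factor.
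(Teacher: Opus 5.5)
Your proposal is correct and matches the paper's proof essentially line for line. Both arguments apply the $L^1$ triangle inequality and then Cauchy--Schwarz in the coefficients, invoke \cref{lem:bridge}, and use $\TV=\tfrac12\norm{\cdot}_{L^1}$; in the density case, $\norm{\eta}_2=\sqrt m$ and the second factor also equals $\sqrt m$.
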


\begin{proof}
By \eqref{eq:finite-basis} and the triangle inequality in $L^1$, then
Cauchy--Schwarz on the finite sum,
\[
  \norm{p - q}_{L^1}
  = \Norm{\sum_{i=1}^m (a_i - b_i) \phi_i}_{L^1}
  \le \sum_{i=1}^m \abs{a_i - b_i}\, \norm{\phi_i}_{L^1}
  \le \norm{a - b}_2 \Big( \sum_{i=1}^m \norm{\phi_i}_{L^1}^2
    \Big)^{1/2}.
\]
Apply \cref{lem:bridge} and use
$\TV(p,q)=\tfrac12\norm{p-q}_{L^1}$ to obtain the first claim. In the
density case, $\norm{\eta}_2=\sqrt m$ and
$(\sum_i\norm{\phi_i}_{L^1}^2)^{1/2}=\sqrt m$, giving
$\norm{p-q}_{L^1}\le m\norm{c}_2$ and hence the stated TV bound.
\end{proof}

\begin{remark}[Bounded IPMs inherit the total-variation certificate]
\label{rem:ipm}
Every integral probability metric (IPM) with a uniformly bounded test-function class inherits the
total-variation certificate; this includes MMDs associated with bounded
kernels. Let $\mathcal F$ satisfy
$\sup_{f\in\mathcal F}\norm{f}_\infty \le B$. Then
\[
  \operatorname{IPM}_{\mathcal F}(p,q)
  := \sup_{f\in\mathcal F}
  \abs{\int f(y)(p(y)-q(y))\dd y}
  \le B\norm{p-q}_{L^1}.
\]
Hence, in the density-basis case of \cref{cor:tv},
\[
  \operatorname{IPM}_{\mathcal F}(p,q)
  \le Bm\norm{c}_2.
\]
For an important example, let $k$ be a positive-semidefinite kernel and let
$\mathcal H_k$ be its reproducing-kernel Hilbert space (RKHS): a Hilbert space
of functions whose kernel sections satisfy
$f(x)=\ip{f}{k(x,\cdot)}_{\mathcal H_k}$. The maximum mean discrepancy is the
IPM over the unit ball of $\mathcal H_k$. If
$\sup_x k(x,x)\le \kappa^2$, then the reproducing property and
Cauchy--Schwarz give $\abs{f(x)}\le \kappa$ whenever
$\norm{f}_{\mathcal H_k}\le 1$, and therefore
\[
  \operatorname{MMD}_k(p,q)
  \le \kappa\norm{p-q}_{L^1}
  \le \kappa m\norm{c}_2.
\]
These inequalities are generic consequences of the TV bridge. They do not
assert that the conditioned drift statistic is a tight or monotone surrogate
for MMD or Wasserstein distance across kernels and bandwidths. IPMs with
unbounded test-function classes, including transport metrics in general,
require additional support or moment assumptions and are not controlled by
\cref{cor:tv} alone. Likewise, TV alone does not upper-bound either direction
of KL divergence without appropriate absolute-continuity and density-ratio
control.
\end{remark}

\begin{remark}[Sharpness and the price of resolution]
\label{rem:bridgesharp}
The coefficient bridge can be attained exactly, so it is not merely a loose
generic estimate. For $m=2$ with density basis, let
$a=(1,0)$, $b=(1-t,t)$, and $t\in(0,1)$. Then
\[
  \norm{c}_2=t,
  \qquad
  \norm{a-b}_2=t\sqrt{2},
\]
which equals the bound $\sqrt{2}\,t$ from \cref{lem:bridge}. Thus the
coefficient bridge is attained in this example.

Increasing the basis size permits finer representational resolution, but the
generic worst-case TV conversion in the displayed bridge scales linearly in
$m$. Whether this dependence is optimal for a narrower basis family requires
separate analysis. All certificates in \cref{sec:certification} can be read in
total variation by composing them with \cref{cor:tv}.
\end{remark}

\section{Finite-Observation Certification}\label{sec:certification}

The deterministic inequality in \cref{prop:stability} gives the starting
point: observed drift controls mismatch only through an observability margin.
Allowing for observation and representation errors leads to the general
inverse inequality below. Calibrated drift and operator errors, together with
random-probe conditions for a positive margin, then yield the held-out
total-variation audit.

\subsection{Approximate observation model}\label{sec:approximation}

Let $p$ and $q$ be densities that need not lie exactly in
$\operatorname{span}\{\phi_1,\dots,\phi_m\}$. Write
\begin{equation}
  p=p_m+r_p,\qquad q=q_m+r_q, \label{eq:approx-split}
\end{equation}
where
\[
  p_m(y)=\sum_{i=1}^m a_i\phi_i(y),\qquad
  q_m(y)=\sum_{i=1}^m b_i\phi_i(y),
\]
and $r_p,r_q$ are residual terms. For the finite parts $(p_m,q_m)$,
define $M_m$ and $c_m$ exactly as in \cref{sec:observability}. The full
sampled drift decomposes as
\begin{equation}
  \vecop(V_X)=M_m c_m+R_m, \label{eq:approx-decomp}
\end{equation}
where $R_m\in\R^{dN}$ is the drift contribution of the representation
residuals. \Cref{prop:residual} bounds $R_m$ under a global stacked-kernel
envelope.  The drift observations do not estimate
$\norm{r_p}_{L^1}$ or $\norm{r_q}_{L^1}$: any radii for these quantities are
external approximation certificates. For the full-distribution TV
consequences, $p_m$ and $q_m$ must themselves be normalized probability
densities. Without such radii, the formulas below provide only a conditional
sensitivity curve for the chosen finite model.

For a held-out drift estimate, draw independent pairs
$(Y_s^+,Y_s^-)\overset{\mathrm{iid}}{\sim}p\otimes q$ and set
\begin{equation}
  \Vhat_X[:,\ell]
  =\frac1n\sum_{s=1}^n K(x_\ell,Y_s^+,Y_s^-),
  \qquad \ell=1,\dots,N. \label{eq:empdrift}
\end{equation}
An observation matrix may likewise be computed or approximated; denote it by
$\widehat M_m$.

\subsection{Master inverse certificate}\label{sec:master-certificate}

\begin{theorem}[End-to-end finite-observation certificate]
\label{thm:endtoend}
Assume the approximate observation equation \eqref{eq:approx-decomp}, let
$r=\binom m2$, and suppose $dN\ge r$ and $\widehat M_m\in\R^{dN\times r}$
has full column rank.  Suppose
\begin{equation}
  \Norm{\vecop(\Vhat_X)-\vecop(V_X)}_2\le\varepsilon_V,
  \qquad
  \Norm{\widehat M_m-M_m}_{\mathrm{op}}\le\varepsilon_M,
  \label{eq:master-errors}
\end{equation}
and assume (A6), namely $\smin(\widehat M_m)>\varepsilon_M$. Then
\begin{equation}
  \norm{c_m}_2
  \le
  \frac{\Norm{\vecop(\Vhat_X)}_2+\varepsilon_V+\norm{R_m}_2}
       {\smin(\widehat M_m)-\varepsilon_M}.
  \label{eq:endtoend}
\end{equation}
\end{theorem}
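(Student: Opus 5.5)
The plan is a two-sided perturbation argument: bound $\norm{\widehat M_m c_m}_2$ from below through the singular-value margin, and from above by triangle inequalities applied to the decomposition \eqref{eq:approx-decomp}. No probabilistic reasoning is needed. The theorem is deterministic given the two error bounds in \eqref{eq:master-errors}.

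\emph{Lower bound.} Since $\smin(\widehat M_m)$ is defined variationally in \eqref{eq:smin-definition}, we have $\norm{\widehat M_m u}_2\ge\smin(\widehat M_m)\norm{u}_2$ for every $u\in\R^r$. Apply this with $u=c_m$ and use the reverse triangle inequality:
\[
  \norm{M_m c_m}_2 \ge \norm{\widehat M_m c_m}_2-\norm{(\widehat M_m-M_m)c_m}_2
  \ge \bigl(\smin(\widehat M_m)-\varepsilon_M\bigr)\norm{c_m}_2 ,
\]
where the last step uses the operator-norm bound in \eqref{eq:master-errors}. In effect this is Weyl's inequality for the smallest singular value, proved directly. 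Doing it directly avoids any full-rank bookkeeping for $M_m$ itself, since (A6) already makes the right-hand factor positive.

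\emph{Upper bound.} From \eqref{eq:approx-decomp}, $M_m c_m=\vecop(V_X)-R_m$, so $\norm{M_m c_m}_2\le\norm{\vecop(V_X)}_2+\norm{R_m}_2$. Next, insert the held-out estimate: $\norm{\vecop(V_X)}_2\le\norm{\vecop(\Vhat_X)}_2+\varepsilon_V$ by the first bound in \eqref{eq:master-errors}.

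\emph{Conclusion.} Chaining the two bounds gives $(\smin(\widehat M_m)-\varepsilon_M)\norm{c_m}_2\le\norm{\vecop(\Vhat_X)}_2+\varepsilon_V+\norm{R_m}_2$. Dividing by the positive margin guaranteed by (A6) yields \eqref{eq:endtoend}.

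There is no real obstacle here. The only point needing care is to keep $c_m$ as an arbitrary vector, so that the variational $\smin$ applies without invoking feasibility. One should also note that the hypotheses $dN\ge r$ and full column rank are implied by (A6) under the paper's convention. This is because $\smin(\widehat M_m)>\varepsilon_M\ge0$ forces injectivity.
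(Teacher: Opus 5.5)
Your proposal is correct and follows the paper's proof. Both arguments use the same triangle-inequality upper bound on $\norm{M_m c_m}_2$, taken from the decomposition and the drift radius, together with the lower bound $(\smin(\widehat M_m)-\varepsilon_M)\norm{c_m}_2$; the only difference is that the paper cites Weyl's inequality to get $\smin(M_m)\ge\smin(\widehat M_m)-\varepsilon_M$, whereas you prove the needed vector-level instance directly with the reverse triangle inequality.
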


\begin{proof}
The decomposition and two triangle inequalities give
\[
  \norm{M_m c_m}_2
  \le \norm{\vecop(V_X)}_2+\norm{R_m}_2
  \le \Norm{\vecop(\Vhat_X)}_2+\varepsilon_V+\norm{R_m}_2.
\]
Weyl's singular-value perturbation inequality gives
$\smin(M_m)\ge\smin(\widehat M_m)-\varepsilon_M>0$. Hence
\[
  (\smin(\widehat M_m)-\varepsilon_M)\norm{c_m}_2
  \le\smin(M_m)\norm{c_m}_2
  \le\norm{M_m c_m}_2,
\]
and combining the displays proves \eqref{eq:endtoend}.
\end{proof}

\begin{remark}[Exact specializations]\label{rem:master-specializations}
The preceding population, empirical, exact-basis, and exact-operator settings
are recovered as special cases by setting the corresponding errors to zero.
If $p=p_m$ and $q=q_m$, then $R_m=0$, $c_m=c$, and $M_m=M$. If the
observation operator is known exactly, take $\widehat M_m=M_m$ and
$\varepsilon_M=0$. If the population drift is observed exactly, take
$\Vhat_X=V_X$ and $\varepsilon_V=0$. Combining these identifications yields
the empirical exact-basis, perturbed-operator, and approximate-basis bounds;
imposing all three exactness conditions recovers \cref{prop:stability}.
\end{remark}

The numerator in \eqref{eq:endtoend} consists of the observed held-out drift,
the statistical error radius, and the representation residual. The
denominator is a lower bound on the observability scale after accounting for
operator-estimation error. If this denominator is nonpositive, the inverse
bound is inconclusive.  Implementations must use the variational convention
\eqref{eq:smin-definition}. For $dN<r$,
$\smin(\widehat M_m)=0$, so the error-adjusted observability margin is
nonpositive and the audit must abstain. In particular, one must not use the
smallest of the $\min\{dN,r\}$ values returned by a compact SVD of a wide
matrix.

\subsection{Held-out drift error bounds}\label{sec:concentration}

For the held-out drift estimator, take $D=dN$ and
\begin{align*}
  \Xi_s&:=\vecop\big(K(x_1,Y_s^+,Y_s^-)\;\cdots\;
                    K(x_N,Y_s^+,Y_s^-)\big)\in\R^D,\\
  \hat z&:=\frac1n\sum_{s=1}^n \Xi_s,
  \qquad z:=\E \Xi_s=\vecop(V_X).
\end{align*}
The next lemma is stated for generic i.i.d. vectors; part (b) will also be
used for operator calibration.

\begin{lemma}[Held-out drift error radii]\label{lem:drift-radii}
Let $\Xi_1,\dots,\Xi_n$ be i.i.d. random vectors in $\R^D$, write
$\hat z=n^{-1}\sum_{s=1}^n \Xi_s$ and $z=\E \Xi_s$, and fix
$\delta\in(0,1)$.
\begin{enumerate}
\item[(a)] \emph{Coordinate Hoeffding.}
If every coordinate of $\Xi_s$ lies in $[-B_\infty,B_\infty]$ almost surely,
then, with probability at least $1-\delta$,
\begin{equation}
  \norm{\hat z-z}_2
  \le B_\infty\sqrt{\frac{2D\log(2D/\delta)}{n}}.
  \label{eq:hoeffding-bound}
\end{equation}

\item[(b)] \emph{Bounded vector.}
If $\norm{\Xi_s}_2\le B_2$ almost surely, then, with probability at least
$1-\delta$,
\begin{equation}
  \norm{\hat z-z}_2
  \le \frac{B_2}{\sqrt n}
  \left(1+\sqrt{2\log\frac1\delta}\right).
  \label{eq:vector-mean-bound}
\end{equation}
For the Gaussian-RBF interaction, \cref{lem:rbf-envelope} gives
$B_2=B_{N,\tau}=\sqrt{N\tau/e}$ and hence the explicit radius
\begin{equation}
  \varepsilon_V^{\mathrm{vec}}(\delta)
  :=\sqrt{\frac{N\tau}{en}}
    \left(1+\sqrt{2\log\frac1\delta}\right).
  \label{eq:epsV-vector}
\end{equation}

\item[(c)] \emph{Coordinate empirical Bernstein.}
Assume the coordinate envelope in part (a), $n\ge2$, and define
\[
  \hat v_\rho:=\frac1{n-1}\sum_{s=1}^n
    \big((\Xi_s)_\rho-\hat z_\rho\big)^2,
  \qquad \rho=1,\dots,D.
\]
Set
\begin{equation}
  r_\rho(\delta)
  :=\sqrt{\frac{2\hat v_\rho\log(4D/\delta)}{n}}
    +\frac{14B_\infty\log(4D/\delta)}{3(n-1)},
  \qquad
  \varepsilon_V^{\mathrm{EB}}(\delta)
  :=\left(\sum_{\rho=1}^D r_\rho(\delta)^2\right)^{1/2}.
  \label{eq:epsV-EB}
\end{equation}
Then, with probability at least $1-\delta$,
$\norm{\hat z-z}_2\le\varepsilon_V^{\mathrm{EB}}(\delta)$.
\end{enumerate}
\end{lemma}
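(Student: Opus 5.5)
Each part reduces to a standard concentration inequality followed by a conversion to the Euclidean norm of $\hat z-z$. Part (b) also needs a bounded-differences argument for the norm.

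\emph{Part (a).} For each coordinate $\rho$, the summands $(\Xi_s)_\rho$ are i.i.d. and take values in an interval of length $2B_\infty$. Hoeffding's inequality then gives
\[
  \PP\bigl(\abs{\hat z_\rho-z_\rho}>t\bigr)\le 2\exp\!\bigl(-nt^2/(2B_\infty^2)\bigr).
\]
We set the right-hand side equal to $\delta/D$, so that
$t=B_\infty\sqrt{2\log(2D/\delta)/n}$. A union bound over the $D$ coordinates shows that, with probability at least $1-\delta$, every coordinate deviation is at most $t$. On that event $\norm{\hat z-z}_2\le\sqrt D\,t$, which is \eqref{eq:hoeffding-bound}.

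\emph{Part (b).} We split into an expectation term and a fluctuation term.

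For the expectation, independence and centering give
\[
  \E\norm{\hat z-z}_2^2=\tfrac1n\E\norm{\Xi_1-z}_2^2=\tfrac1n\bigl(\E\norm{\Xi_1}_2^2-\norm z_2^2\bigr)\le B_2^2/n.
\]
Jensen's inequality then gives $\E\norm{\hat z-z}_2\le B_2/\sqrt n$.

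For the fluctuation, consider $f(\Xi_1,\dots,\Xi_n)=\norm{\hat z-z}_2$. Replacing a single $\Xi_s$ changes $\hat z$ by at most $\norm{\Xi_s-\Xi_s'}_2/n\le 2B_2/n$. By the reverse triangle inequality, $f$ therefore has bounded differences $c_s=2B_2/n$. McDiarmid's inequality gives
\[
  f-\E f\le\sqrt{\textstyle\sum_s c_s^2\log(1/\delta)/2}=B_2\sqrt{2\log(1/\delta)/n}
\]
with probability at least $1-\delta$. Adding the two terms yields \eqref{eq:vector-mean-bound}.

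For the Gaussian specialization, we substitute $B_2=B_{N,\tau}$ from \cref{lem:rbf-envelope}, since $\Xi_s=K_X(Y_s^+,Y_s^-)$.

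\emph{Part (c).} We apply the Maurer--Pontil empirical Bernstein inequality coordinatewise, after rescaling to $[0,1]$. The rescaled variable is $W=((\Xi_s)_\rho+B_\infty)/(2B_\infty)$.

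In one-sided form, that inequality states that with probability at least $1-\delta'$,
\[
  \E W-\bar W\le\sqrt{2V_n\log(2/\delta')/n}+7\log(2/\delta')/(3(n-1)),
\]
where $V_n$ is the unbiased sample variance.

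Two-sided control follows by applying this to both $W$ and $1-W$, at cost $2\delta'$. Undoing the scaling multiplies the variance term by $2B_\infty$ (because $V_n$ scales by $4B_\infty^2$) and the range term by $2B_\infty$. This produces the constants $\sqrt{2\hat v_\rho\log(\cdot)/n}$ and $14B_\infty\log(\cdot)/(3(n-1))$.

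We take $\delta'=\delta/(2D)$, so $\log(2/\delta')=\log(4D/\delta)$. The total failure probability over both sides and all $D$ coordinates is then $2D\delta'=\delta$. On the complementary event, $\abs{\hat z_\rho-z_\rho}\le r_\rho(\delta)$ for every $\rho$, and summing squares gives $\varepsilon_V^{\mathrm{EB}}$.

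The main obstacle is the constant bookkeeping in (c). One must check the exact form of Maurer--Pontil (the $\log(2/\delta)$ argument together with the $7/(3(n-1))$ term), confirm that the $[0,1]$ rescaling maps the constants to the stated $2$ and $14/3$, and confirm that the union-bound split gives exactly $\log(4D/\delta)$. For (b), the only care needed is to note that the bounded-difference constant uses the almost-sure envelope on both the original and the replaced sample.
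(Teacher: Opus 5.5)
Your proposal is correct and follows the paper's proof essentially step for step: coordinatewise Hoeffding with a union bound and $\norm{u}_2\le\sqrt D\norm{u}_\infty$ for (a); the second-moment bound $\E\norm{\hat z-z}_2\le B_2/\sqrt n$ plus McDiarmid with differences $2B_2/n$ for (b); and Maurer--Pontil Theorem~4 applied to the rescaled coordinate and its complement, each at level $\delta/(2D)$, for (c). Your constant bookkeeping also checks out, since rescaling by $2B_\infty$ gives the variance term $\sqrt{2\hat v_\rho\log(4D/\delta)/n}$ and the range term $14B_\infty\log(4D/\delta)/(3(n-1))$.
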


The three proofs are collected in \cref{app:radius-proofs}. Part (a) is the
most elementary; part (b) avoids an explicit coordinate union bound; part
(c) adapts to the observed coordinate variances.
Each displayed radius has level $1-\delta$ when used on its own.  The audit
must prespecify the radius rule.  If several radii are computed and the
smallest is selected after observing the batch, their failure probabilities
must instead be made simultaneous, for example by evaluating three methods
at level $\delta/3$ and taking their minimum.

\begin{corollary}[High-probability exact-basis certificate]
\label{cor:hpcert}
Assume the exact observation equation $\vecop(V_X)=Mc$, that $M$ has full
column rank, and the coordinate envelope in
\cref{lem:drift-radii}(a). Then, with probability at least $1-\delta$,
\begin{equation}
  \norm{c}_2
  \le
  \frac{\Norm{\vecop(\Vhat_X)}_2
    +B_\infty\sqrt{\dfrac{2dN\log(2dN/\delta)}{n}}}
       {\smin(M)}.
  \label{eq:hp-cert}
\end{equation}
The Hoeffding term may instead be replaced by the right-hand side of
\eqref{eq:vector-mean-bound}; for the Gaussian-RBF interaction this is
\eqref{eq:epsV-vector}. Under the assumptions of
\cref{lem:drift-radii}(c), it may instead be replaced by
$\varepsilon_V^{\mathrm{EB}}(\delta)$ from \eqref{eq:epsV-EB}.
\end{corollary}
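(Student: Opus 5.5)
The statement is \cref{cor:hpcert}, and it follows by composing \cref{prop:stability} with \cref{lem:drift-radii}(a) on a single high-probability event. I will take $\Xi_s$ to be the stacked summand of \cref{sec:concentration}, with $D=dN$. Then $\hat z=\vecop(\Vhat_X)$ and $z=\E\Xi_s=\vecop(V_X)$. The second identity holds because the pairs $(Y_s^+,Y_s^-)$ are i.i.d.\ from $p\otimes q$, so each coordinate of $\Xi_s$ has expectation equal to the corresponding entry of $V_X$ by \eqref{eq:kernel-drift}. The coordinate envelope makes these expectations finite, so no further integrability is needed.

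On the event $E$ of \cref{lem:drift-radii}(a), which has probability at least $1-\delta$, we have
\[
\norm{\vecop(\Vhat_X)-\vecop(V_X)}_2\le \varepsilon_V:=B_\infty\sqrt{2dN\log(2dN/\delta)/n}.
\]
The triangle inequality then gives $\norm{\vecop(V_X)}_2\le\norm{\vecop(\Vhat_X)}_2+\varepsilon_V$. The observation equation and full column rank allow \cref{prop:stability}, which yields $\norm{c}_2\le\norm{\vecop(V_X)}_2/\smin(M)$ with $\smin(M)>0$. Chaining the two inequalities on $E$ gives \eqref{eq:hp-cert}.

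Equivalently, this is \cref{thm:endtoend} with $\widehat M_m=M$, $\varepsilon_M=0$ and $R_m=0$, as in \cref{rem:master-specializations}. That specialization requires (A6), which reduces to $\smin(M)>0$ and is exactly full column rank.

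For the alternatives, the same chain works with whichever radius is prespecified. If $\norm{\Xi_s}_2\le B_2$, part (b) supplies \eqref{eq:vector-mean-bound}. For the Gaussian-RBF interaction, \cref{lem:rbf-envelope} gives $B_2=\sqrt{N\tau/e}$, which yields \eqref{eq:epsV-vector}. Under the assumptions of part (c), the empirical-Bernstein radius $\varepsilon_V^{\mathrm{EB}}(\delta)$ is used instead. Each alternative is valid at level $1-\delta$ only as a single, prespecified choice, in line with the remark after \cref{lem:drift-radii}.

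There is no real obstacle here. The step needing the most care is the identification $\E\Xi_s=\vecop(V_X)$ together with the fact that $M$ and $\smin(M)$ are deterministic, so the only randomness enters through $\vecop(\Vhat_X)$. Because of this, no union bound beyond the one already inside part (a) is required.
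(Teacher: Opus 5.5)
Your proposal is correct and follows the paper's own proof. The paper specializes \cref{thm:endtoend} with $R_m=0$, $M_m=\widehat M_m=M$, $\varepsilon_M=0$ and inserts the prespecified radius from \cref{lem:drift-radii}; that is exactly your chain of the triangle inequality followed by \cref{prop:stability}, and you note the equivalence yourself.
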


\begin{proof}
Apply \cref{thm:endtoend} with
$p=p_m$, $q=q_m$, $R_m=0$, $c_m=c$, $M_m=M$,
$\widehat M_m=M$, and $\varepsilon_M=0$, then insert the selected part of
\cref{lem:drift-radii} for $\varepsilon_V$.
\end{proof}

\subsection{Observation-operator calibration}\label{sec:estM}

When the basis integrals defining the observation matrix are evaluated by
quadrature or Monte Carlo, the matrix error in \eqref{eq:master-errors} must
be calibrated rather than ignored.

\begin{proposition}[Monte Carlo calibration of the observation matrix]
\label{prop:M-calibration}
Assume each $\phi_i$ is a samplable probability density. For each
$\alpha=(i,j)\in P$, let $s_\alpha\ge1$ and draw
$(A_{\alpha t},B_{\alpha t})\overset{\mathrm{iid}}{\sim}
\phi_i\otimes\phi_j$, $t=1,\dots,s_\alpha$. Define
\[
  W_{\alpha t}
  :=\vecop\big(K(x_1,A_{\alpha t},B_{\alpha t})\;\cdots\;
               K(x_N,A_{\alpha t},B_{\alpha t})\big)\in\R^{dN}
\]
and let column $\alpha$ of $\widehat M_m$ be
$\hat u_\alpha=s_\alpha^{-1}\sum_t W_{\alpha t}$. Suppose
$\norm{W_{\alpha t}}_2\le B_\alpha$ almost surely. For every
$\delta_M\in(0,1)$, with probability at least $1-\delta_M$,
\begin{equation}
  \norm{\widehat M_m-M_m}_{\mathrm{op}}
  \le\varepsilon_M^{\mathrm{MC}}
  :=\left[\sum_{\alpha\in P}\frac{B_\alpha^2}{s_\alpha}
    \left(1+\sqrt{2\log\frac{r}{\delta_M}}\right)^2\right]^{1/2}.
  \label{eq:epsM-MC}
\end{equation}
For the Gaussian-RBF interaction one may take
$B_\alpha=B_{N,\tau}=\sqrt{N\tau/e}$ for every $\alpha$.
\end{proposition}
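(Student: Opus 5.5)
The plan is to reduce the operator-norm error to per-column vector errors and then apply \cref{lem:drift-radii}(b) column by column, with a union bound over the $r$ columns.

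\textbf{Column identification.} First I will check that column $\alpha=(i,j)$ of $M_m$ equals $u_\alpha:=\E W_{\alpha t}$. By \eqref{eq:Uij} and \eqref{eq:Mdef}, this column is $\vecop(U_{ij})$, whose $\ell$-th block is $\int\!\!\int K(x_\ell,y,z)\phi_i(y)\phi_j(z)\dd y\dd z$. Since $\phi_i,\phi_j$ are probability densities and $(A_{\alpha t},B_{\alpha t})\sim\phi_i\otimes\phi_j$, this integral is exactly $\E K(x_\ell,A_{\alpha t},B_{\alpha t})$. The Bochner expectation is well defined because $\norm{W_{\alpha t}}_2\le B_\alpha$, which also gives (A0) for these pairs. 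Hence $\hat u_\alpha$ is an empirical mean of $s_\alpha$ i.i.d.\ bounded vectors with mean $u_\alpha$.

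\textbf{Per-column concentration.} For each $\alpha\in P$ I will apply \cref{lem:drift-radii}(b) with $D=dN$, $n=s_\alpha$, $B_2=B_\alpha$ and confidence level $\delta_M/r$. This gives
\[
\norm{\hat u_\alpha-u_\alpha}_2\le \frac{B_\alpha}{\sqrt{s_\alpha}}\Bigl(1+\sqrt{2\log(r/\delta_M)}\Bigr)
\]
with probability at least $1-\delta_M/r$. A union bound over the $r$ columns makes all $r$ inequalities hold simultaneously with probability at least $1-\delta_M$. Independence across columns is not needed for this step.

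\textbf{Operator norm.} On this event, write $E=\widehat M_m-M_m$ with columns $e_\alpha=\hat u_\alpha-u_\alpha$. Then
\[
\norm{E}_{\mathrm{op}}\le\norm{E}_F=\Bigl(\sum_\alpha\norm{e_\alpha}_2^2\Bigr)^{1/2},
\]
and substituting the per-column bounds gives exactly $\varepsilon_M^{\mathrm{MC}}$ in \eqref{eq:epsM-MC}.

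\textbf{Gaussian-RBF constant.} Each $W_{\alpha t}$ equals $K_X(A_{\alpha t},B_{\alpha t})$, so \cref{lem:rbf-envelope}, specifically \eqref{eq:stacked-envelope}, gives $B_\alpha=\sqrt{N\tau/e}$.

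\textbf{Main obstacle.} I expect the main care point to be the column identification: confirming that sampling from $\phi_i\otimes\phi_j$ reproduces the integral $U_{ij}$ with no normalization factor, which holds precisely because each $\phi_i$ is a probability density. A second point is that \cref{lem:drift-radii}(b) must be invoked at level $\delta_M/r$; that is where the $\log(r/\delta_M)$ term comes from. Everything else is the Frobenius bound on the operator norm.
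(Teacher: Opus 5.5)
Your proposal is correct and matches the paper's proof: the paper identifies $\E W_{\alpha t}$ with column $u_\alpha$ of $M_m$, applies \cref{lem:drift-radii}(b) to each column at failure probability $\delta_M/r$, takes a union bound, and bounds $\norm{\widehat M_m-M_m}_{\mathrm{op}}$ by the Frobenius norm of the column errors. Your additional checks, namely integrability from the almost-sure bound, the absence of any normalization factor because each $\phi_i$ is a density, and that the union bound needs no independence across columns, are correct details the paper leaves implicit.
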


The proof is deferred to \cref{app:proof-M-calibration}. In numerical work,
$\smin(\widehat M_m)$ in \cref{thm:endtoend} must be replaced by a
certified lower bound $\underline\sigma\le\smin(\widehat M_m)$.  One
software-independent construction follows.

\begin{proposition}[Outward-rounded Gram lower bound]
\label{prop:gram-lower}
Let $A\in\R^{D\times r}$ with $D\ge r$.  Suppose outward-rounded
arithmetic supplies intervals
$[g_{ij}^-,g_{ij}^+]$ containing every entry of $A^\top A$.  Define
\[
  \underline\lambda
  :=\min_{1\le i\le r}\left\{
    g_{ii}^--\sum_{j\ne i}
      \max\{\abs{g_{ij}^-},\abs{g_{ij}^+}\}\right\},
  \qquad
  \underline\sigma_{\mathrm{Gram}}
  :=\sqrt{\max\{0,\underline\lambda\}} .
\]
Then $\underline\sigma_{\mathrm{Gram}}\le\smin(A)$.  For $r=1$, this
reduces to an outward-rounded lower bound on the column norm (while
$\underline\lambda$ bounds its squared norm).
\end{proposition}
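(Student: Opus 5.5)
The plan is to combine a Gershgorin lower bound on the smallest eigenvalue of the Gram matrix $G:=A^\top A$ with the interval enclosures, and then take a square root. First I will record the standard identity $\smin(A)^2=\lmin(A^\top A)$ under the variational convention \eqref{eq:smin-definition}. For any unit $u\in\R^r$, $\norm{Au}_2^2=u^\top Gu$, and taking the infimum over unit vectors gives $\smin(A)^2=\inf_{\norm u_2=1}u^\top Gu=\lmin(G)$. This uses only that $G$ is symmetric positive semidefinite; the hypothesis $D\ge r$ is not needed for this identity, and if $A$ is noninjective both sides are zero.

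Second, I will apply Gershgorin's circle theorem to the symmetric matrix $G$. Every eigenvalue $\lambda$ of $G$ is real and lies in some disc centred at $G_{ii}$ with radius $\sum_{j\ne i}\abs{G_{ij}}$. Hence
\[
  \lmin(G)\ge\min_i\Bigl\{G_{ii}-\sum_{j\ne i}\abs{G_{ij}}\Bigr\}.
\]
Since the exact entries satisfy $G_{ij}\in[g_{ij}^-,g_{ij}^+]$, we have $G_{ii}\ge g_{ii}^-$ and $\abs{G_{ij}}\le\max\{\abs{g_{ij}^-},\abs{g_{ij}^+}\}$. Each bracketed term is therefore at least the corresponding term in the definition of $\underline\lambda$, so $\lmin(G)\ge\underline\lambda$.

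Third, since $\lmin(G)\ge0$, we also have $\lmin(G)\ge\max\{0,\underline\lambda\}$. Monotonicity of the square root then gives $\underline\sigma_{\mathrm{Gram}}\le\sqrt{\lmin(G)}=\smin(A)$. For $r=1$ the off-diagonal sum is empty, so $\underline\lambda=g_{11}^-\le\norm{A}_2^2$, and its nonnegative square root lower-bounds the column norm.

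I expect no serious obstacle. The only care points are to state Gershgorin for eigenvalues of the symmetric $G$ (so that the eigenvalues are real) and to note that the enclosures bound the exact entries of $A^\top A$. The bound must never use floating-point values of those entries, which is why the outward rounding matters. Because only the exact entries enter the argument, any rounding performed when computing $\underline\lambda$ itself must also be rounded downward, which I will note as part of the construction.
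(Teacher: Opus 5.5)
Your proposal is correct and follows the paper's proof: Gershgorin applied to the exact Gram matrix $A^\top A$, with the interval enclosures bounding each diagonal entry from below and each off-diagonal modulus from above, followed by $\smin(A)^2=\lmin(A^\top A)$. The only difference is cosmetic. You derive that identity directly from the variational definition for every $A$, which removes the paper's separate full-column-rank case split and shows that the hypothesis $D\ge r$ is not needed.
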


\begin{proof}
Every exact Gram entry lies in its interval.  Gershgorin's theorem therefore
gives $\lambda_{\min}(A^\top A)\ge\underline\lambda$.  The claim follows
from $\smin(A)^2=\lambda_{\min}(A^\top A)$ when $A$ has full column rank,
and is trivial after the maximum with zero otherwise.
\end{proof}

This bound is conservative but directly auditable; a verified interval
eigensolver may replace it.  If the entries of the stored matrix enclose an
ideal Monte Carlo or quadrature matrix only up to an operator-norm rounding
radius $\eta_{\mathrm{fp}}$, add $\eta_{\mathrm{fp}}$ to
$\varepsilon_M$ before forming the margin.  Deterministic quadrature and
storage errors are handled in the same way.  A nonpositive margin
$\underline\sigma-\varepsilon_M$ triggers abstention.

\subsection{Randomized probe selection}\label{sec:randprobe}

The inverse certificate takes the probe set, and hence $\smin(M)$, as given.
This subsection shows that i.i.d.\ random probes yield a full-rank,
quantitatively well-conditioned observation matrix with high probability,
with the conditioning governed by a single population quantity.

For a pair $\alpha = (i,j) \in P$, define the \emph{pair witness function}
\begin{equation}
  g_\alpha(x)
  := \int\!\!\int K(x, y^+, y^-)\, \phi_i(y^+) \phi_j(y^-)
     \dd y^+ \dd y^- \;\in\; \R^d, \label{eq:pairwitness}
\end{equation}
so that $U_{ij}[:, \ell] = g_{ij}(x_\ell)$ by \eqref{eq:Uij}. With $P$
ordered as in \eqref{eq:Mdef}, collect these functions into the
\emph{probe block}
\begin{equation}
  G(x) := [\, g_{\alpha_1}(x) \;\cdots\; g_{\alpha_r}(x) \,]
  \in \R^{d \times r}. \label{eq:probeblock}
\end{equation}
Under column-wise vectorization, column $s$ of $M$ is
\begin{equation}
  \vecop(U_{\alpha_s})
  = \big(g_{\alpha_s}(x_1);\, \dots;\, g_{\alpha_s}(x_N)\big).
  \label{eq:Mcolumn}
\end{equation}
Thus $M$ consists of the stacked blocks $G(x_1), \dots, G(x_N)$, and
\begin{equation}
  M^\top M = \sum_{\ell=1}^{N} G(x_\ell)^\top G(x_\ell).
  \label{eq:gramdecomp}
\end{equation}
(Any other consistent stacking permutes the rows of $M$ and leaves all
singular values unchanged, so \eqref{eq:gramdecomp} is
convention-independent.)

Let $\nu$ be a probability measure on $\R^d$ and
$X_1, \dots, X_N \overset{\mathrm{iid}}{\sim} \nu$, independent of the
minibatch in \eqref{eq:empdrift}. Define the \emph{population observability
Gram matrix}
\begin{equation}
  \Gamma(\nu) := \E\big[ G(X_1)^\top G(X_1) \big] \in \R^{r \times r},
  \qquad
  \Gamma(\nu)_{\alpha \beta}
  = \int \ip{g_\alpha(x)}{g_\beta(x)} \dd \nu(x), \label{eq:Gamma}
\end{equation}
i.e.\ the Gram matrix of $\{g_\alpha\}_{\alpha \in P}$ in
$L^2(\nu; \R^d)$, and set $\gamma(\nu) := \lmin(\Gamma(\nu))$.
Thus $\gamma(\nu)$ is the squared population observability margin, while
$\sqrt{\gamma(\nu)}$ is the singular-value scale. We use
assumptions (A9), $\norm{G(x)}_{\mathrm{op}} \le L$ for $\nu$-a.e.\ $x$
(a sufficient check is
$\sup_x (\sum_\alpha \norm{g_\alpha(x)}_2^2)^{1/2} \le L$, since the
operator norm is bounded by the Frobenius norm), and (A10),
$\gamma(\nu) > 0$.

\begin{lemma}[Structure and necessity of (A10)]\label{lem:gram}
Assume $g_\alpha\in L^2(\nu;\R^d)$ for every $\alpha\in P$.
\emph{(a)} $\gamma(\nu) > 0$ if and only if
$\{g_\alpha\}_{\alpha \in P}$ are linearly independent in
$L^2(\nu; \R^d)$, i.e.\
$\sum_\alpha \lambda_\alpha g_\alpha(x) = 0$ for $\nu$-a.e.\ $x$ implies
$\lambda = 0$.
\emph{(b)} If $\gamma(\nu) = 0$, then $\rk(M) < r$ almost surely, for
every $N$.
\end{lemma}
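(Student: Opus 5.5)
Part (a) reduces to a positive-semidefiniteness computation for $\Gamma(\nu)$. For $\lambda\in\R^r$, expand the quadratic form using the entrywise description \eqref{eq:Gamma}:
\[
  \lambda^\top\Gamma(\nu)\lambda
  =\sum_{\alpha,\beta}\lambda_\alpha\lambda_\beta
   \int\ip{g_\alpha(x)}{g_\beta(x)}\dd\nu(x)
  =\int\Norm{\sum_\alpha\lambda_\alpha g_\alpha(x)}_2^2\dd\nu(x)
  =\int\norm{G(x)\lambda}_2^2\dd\nu(x).
\]
The interchange of the finite sum and the integral is justified because each $g_\alpha\in L^2(\nu;\R^d)$. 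By Cauchy--Schwarz every cross term $\ip{g_\alpha}{g_\beta}$ is then $\nu$-integrable, so $\Gamma(\nu)$ is a finite, symmetric, positive semidefinite matrix.

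Since $\Gamma(\nu)$ is symmetric and finite-dimensional, $\gamma(\nu)=\lmin(\Gamma(\nu))=\min_{\norm{\lambda}_2=1}\lambda^\top\Gamma(\nu)\lambda$, and this minimum is attained by compactness of the unit sphere. Hence $\gamma(\nu)=0$ if and only if some $\lambda\neq0$ has $\int\norm{G(x)\lambda}_2^2\dd\nu=0$. This holds if and only if $G(x)\lambda=\sum_\alpha\lambda_\alpha g_\alpha(x)=0$ for $\nu$-a.e.\ $x$, which is exactly a nontrivial linear dependence in $L^2(\nu;\R^d)$. Taking the contrapositive gives (a).

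For part (b), fix such a unit vector $\lambda$ with $G(x)\lambda=0$ for all $x$ outside a $\nu$-null set $E$. The probes $X_1,\dots,X_N$ are i.i.d.\ $\nu$, so the event $\bigcup_\ell\{X_\ell\in E\}$ has probability at most $N\nu(E)=0$. Off this event, $\lambda^\top M^\top M\lambda=\sum_\ell\norm{G(X_\ell)\lambda}_2^2=0$ by \eqref{eq:gramdecomp}. Therefore $M\lambda=0$ with $\lambda\ne0$, and $\rk(M)<r$ almost surely. This works for every $N$ because $\lambda$ is chosen independently of $N$ and of the sample.

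The only step needing care is measurability and well-definedness. The null set $E$ must be fixed before sampling, and $G(X_\ell)$ must be a measurable random matrix so that the event $\{\rk(M)<r\}$ is measurable. Measurability follows because each $g_\alpha$ is Borel measurable, which is (A9) or a consequence of Fubini under (A0), and because rank deficiency is the zero set of the continuous function $\det(M^\top M)$. I expect no other obstacle.
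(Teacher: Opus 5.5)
Your proof is correct and follows the paper's argument: for (a), the identity $\lambda^\top\Gamma(\nu)\lambda=\norm{g_\lambda}_{L^2(\nu)}^2$; for (b), a single $N$-independent null vector $\lambda$ whose zero set has full $\nu$-measure, which forces $M\lambda=0$ almost surely. The differences are minor. You use a union bound over the $N$ probes where the paper invokes independence, so your version needs only that each $X_\ell$ has marginal law $\nu$. You also add measurability remarks that the paper leaves implicit.
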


\begin{proof}
(a) For $\lambda \in \R^r$ write
$g_\lambda := \sum_\alpha \lambda_\alpha g_\alpha$. By \eqref{eq:Gamma}
and linearity of the integral over the finite sum,
\[
  \lambda^\top \Gamma(\nu)\, \lambda
  = \sum_{\alpha, \beta} \lambda_\alpha \lambda_\beta
    \int \ip{g_\alpha(x)}{g_\beta(x)} \dd \nu(x)
  = \int \norm{g_\lambda(x)}_2^2 \dd \nu(x)
  = \norm{g_\lambda}_{L^2(\nu)}^2.
\]
Hence $\Gamma(\nu) \succ 0$ iff no $\lambda \ne 0$ has $g_\lambda = 0$
$\nu$-a.e.

(b) If $\gamma(\nu) = 0$, part (a) supplies $\lambda \ne 0$ with
$\nu(Z) = 1$ for $Z := \{x : g_\lambda(x) = 0\}$. By independence,
$\PP(X_\ell \in Z \text{ for all } \ell) = 1$, and on that event
$M \lambda = (g_\lambda(X_1); \dots; g_\lambda(X_N)) = 0$, so the
nullspace of $M$ is nontrivial.
\end{proof}

\Cref{lem:gram}(b) shows that (A10) is necessary for recovery over the full
ambient mismatch space from i.i.d.\ probes; together with (A9) and
sufficiently large $N$, it yields the quantitative high-probability
conditioning below.
\Cref{ex:midpoint} exhibits a natural
basis for which $\gamma(\nu) = 0$ for every $\nu$; actual distributional
non-identifiability additionally requires feasible coefficients as in
\cref{sec:feasible}.

\begin{theorem}[High-probability well-conditioning under random probes]
\label{thm:randprobe}
Assume (A9)--(A10) and let $M$ be built from
$X_1, \dots, X_N \overset{\mathrm{iid}}{\sim} \nu$ as in \eqref{eq:Mdef}.
Then for every $\varepsilon \in (0, 1)$,
\[
  \PP\Big( \lmin(M^\top M) \le (1 - \varepsilon)\, N \gamma(\nu) \Big)
  \le r \exp\Big( -\frac{\varepsilon^2 N \gamma(\nu)}{2 L^2} \Big).
\]
In particular, for any $\delta' \in (0, 1)$, if
$N \ge \dfrac{8 L^2}{\gamma(\nu)} \log \dfrac{r}{\delta'}$, then with
probability at least $1 - \delta'$ the matrix $M$ has full column rank
(so (A4) holds) and
\[
  \smin(M) \ge \sqrt{\tfrac{1}{2} N \gamma(\nu)}.
\]
\end{theorem}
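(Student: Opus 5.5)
The plan is to apply the matrix Chernoff inequality (Tropp's lower-tail form) to the independent positive-semidefinite summands in \eqref{eq:gramdecomp}. Put $Z_\ell := G(X_\ell)^\top G(X_\ell)\in\R^{r\times r}$ for $\ell=1,\dots,N$. These are i.i.d. because the probes are i.i.d.\ from $\nu$. Each $Z_\ell$ is symmetric positive semidefinite, and (A9) gives $\lmax(Z_\ell)=\norm{G(X_\ell)}_{\mathrm{op}}^2\le L^2$ almost surely. By \eqref{eq:gramdecomp}, $M^\top M=\sum_\ell Z_\ell$, and by \eqref{eq:Gamma} its mean is $\E[M^\top M]=N\Gamma(\nu)$. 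Hence the mean has smallest eigenvalue $\mu_{\min}:=N\gamma(\nu)$. Measurability of $Z_\ell$ follows from (A9), and integrability from boundedness, so $\Gamma(\nu)$ is well defined.

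The matrix Chernoff lower tail states that for $\varepsilon\in[0,1)$,
\[
  \PP\big(\lmin(\textstyle\sum_\ell Z_\ell)\le(1-\varepsilon)\mu_{\min}\big)
  \le r\left[\frac{e^{-\varepsilon}}{(1-\varepsilon)^{1-\varepsilon}}\right]^{\mu_{\min}/L^2}.
\]
I would then use the scalar inequality $-\varepsilon-(1-\varepsilon)\log(1-\varepsilon)\le-\varepsilon^2/2$ on $[0,1)$. One way to check it: set $h(\varepsilon)=-\varepsilon-(1-\varepsilon)\log(1-\varepsilon)+\varepsilon^2/2$. Then $h(0)=0$ and $h'(\varepsilon)=\log(1-\varepsilon)+\varepsilon\le0$. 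Substituting gives the first display of the theorem with exponent $-\varepsilon^2N\gamma(\nu)/(2L^2)$.

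For the ``in particular'' part, take $\varepsilon=1/2$. The failure probability is then at most $r\exp(-N\gamma(\nu)/(8L^2))$, which is $\le\delta'$ exactly when $N\ge (8L^2/\gamma(\nu))\log(r/\delta')$. On the complementary event, $\lmin(M^\top M)>\tfrac12N\gamma(\nu)>0$ by (A10). So $M^\top M$ is positive definite and $M$ has full column rank, which is (A4). Since $\smin(M)^2=\lmin(M^\top M)$ under the variational convention \eqref{eq:smin-definition}, we get $\smin(M)\ge\sqrt{N\gamma(\nu)/2}$.

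The main obstacle is not computational. It is verifying the hypotheses of the matrix Chernoff bound cleanly: the bound $L^2$ holds only $\nu$-a.e., so the summands are bounded only almost surely, and the dimension prefactor must be $r$, the size of the Gram matrix, not $dN$. I would cite the Chernoff inequality in its almost-sure-bound form, so the $\nu$-null exceptional set is harmless. I would also note that the symmetric matrix sum lives in $\R^{r\times r}$, which gives the factor $r$.
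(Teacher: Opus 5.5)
Your proposal is correct and matches the paper's proof step for step. You apply the matrix Chernoff lower tail (Tropp, Theorem~5.1.1) to the i.i.d.\ positive-semidefinite summands $G(X_\ell)^\top G(X_\ell)$ with $R=L^2$ and $\mu_{\min}=N\gamma(\nu)$, you use the same scalar inequality $-\varepsilon-(1-\varepsilon)\log(1-\varepsilon)\le-\varepsilon^2/2$ (your auxiliary $h$ is just the negative of the paper's), and you take $\varepsilon=1/2$ to obtain full column rank and $\smin(M)\ge\sqrt{N\gamma(\nu)/2}$.
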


The proof is deferred to \cref{app:proof-randprobe}.

\begin{corollary}[Random-probe empirical certificate]
\label{cor:randcert}
Assume (A0)--(A3), (A5), and (A9)--(A10). Let
$X_1,\dots,X_N\overset{\mathrm{iid}}{\sim}\nu$ be independent of the
minibatch pairs in \eqref{eq:empdrift}, and suppose the bound in (A5) holds
almost surely jointly in probes and minibatch. If
\[
  N \ge \frac{8L^2}{\gamma(\nu)}
    \log\!\left(\frac{r}{\delta'}\right),
\]
then, with probability at least $1-\delta-\delta'$,
\[
  \norm{c}_2
  \le \frac{\Norm{\vecop(\Vhat_X)}_2
    + B_\infty \sqrt{\dfrac{2 d N \log(2 d N / \delta)}{n}}}
    {\sqrt{N \gamma(\nu) / 2}}.
\]
\end{corollary}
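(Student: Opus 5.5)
The plan is to intersect two high-probability events and then apply the deterministic inverse bound of \cref{prop:stability} (equivalently \cref{thm:endtoend} with $R_m=0$, $\widehat M_m=M$, $\varepsilon_M=0$) on that intersection. Let $E_1$ be the probe event from \cref{thm:randprobe}: under (A9)--(A10) and the stated lower bound on $N$, with probability at least $1-\delta'$ the random matrix $M$ has full column rank and $\smin(M)\ge\sqrt{N\gamma(\nu)/2}$. Let $E_2$ be the drift-concentration event
\[
  \Norm{\vecop(\Vhat_X)-\vecop(V_X)}_2
  \le B_\infty\sqrt{\frac{2dN\log(2dN/\delta)}{n}},
\]
supplied by \cref{lem:drift-radii}(a) with $D=dN$.

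The key step is justifying $\PP(E_2^c)\le\delta$ even though $X$ is random. I would condition on the probes. Because $X_1,\dots,X_N$ are independent of the minibatch pairs, conditional on $X=x$ the summands $\Xi_s$ are i.i.d.\ with mean $\vecop(V_x)$. Assumption (A5) holds almost surely jointly, so for $\nu^{\otimes N}$-a.e.\ $x$ the coordinate envelope holds conditionally. \Cref{lem:drift-radii}(a) then gives $\PP(E_2^c\mid X=x)\le\delta$ for a.e.\ $x$. Integrating over $x$ (Fubini, using measurability from (A0) and (A9)) yields $\PP(E_2^c)\le\delta$. A union bound then gives $\PP(E_1\cap E_2)\ge1-\delta-\delta'$.

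On $E_1\cap E_2$, (A0)--(A3) give the exact observation equation $\vecop(V_X)=Mc$. Since (A4) holds on $E_1$, \cref{prop:stability} and the triangle inequality give
\[
  \norm{c}_2\le\frac{\norm{\vecop(V_X)}_2}{\smin(M)}
  \le\frac{\norm{\vecop(\Vhat_X)}_2+B_\infty\sqrt{2dN\log(2dN/\delta)/n}}{\sqrt{N\gamma(\nu)/2}},
\]
where the last step uses the lower bound on $\smin(M)$ in the denominator.

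I expect the only delicate point to be the conditioning argument. Two things need checking: that the Hoeffding radius does not depend on $x$ (it depends only on $d,N,n,\delta,B_\infty$, so the conditional bound is uniform), and that the mean $\vecop(V_X)$ is a measurable function of the probes. The rest is bookkeeping.
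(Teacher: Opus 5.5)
Your proposal is correct and follows the paper's proof essentially step for step: the probe event of \cref{thm:randprobe}, a Hoeffding drift event from \cref{lem:drift-radii}(a) bounded conditionally on the probes and then integrated, a union bound, and the exact-basis, exact-operator specialization of the inverse bound applied on the intersection. Your remarks that the Hoeffding radius does not depend on $x$ and that the conditional mean is measurable spell out what the paper's one-line conditioning step leaves implicit.
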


\begin{proof}
Let $E_1$ be the event of \cref{thm:randprobe}, with
$\PP(E_1^c)\le\delta'$. Conditionally on any probe realization,
\cref{lem:drift-radii}(a) gives a drift-error event $E_2$ satisfying
$\PP(E_2^c\mid X_1,\dots,X_N)\le\delta$, hence
$\PP(E_2^c)\le\delta$. On $E_1\cap E_2$, the exact-basis,
exact-operator specialization of \cref{thm:endtoend} applies with
$\smin(M)\ge\sqrt{N\gamma(\nu)/2}$. A union bound gives the stated
probability and substituting the lower bound gives the display.
\end{proof}

\begin{corollary}[Total-variation form of the empirical certificate]
\label{cor:tvchain}
Assume (A0)--(A5) with each $\phi_i$ a probability density. Then with
probability at least $1 - \delta$,
\[
  \TV(p, q)
  \le \frac{m}{2}\,
  \frac{\Norm{\vecop(\Vhat_X)}_2
    + B_\infty \sqrt{\dfrac{2 d N \log(2 d N / \delta)}{n}}}
    {\smin(M)}.
\]
Under the hypotheses of \cref{cor:randcert}, the denominator may be
replaced by $\sqrt{N \gamma(\nu) / 2}$ at the cost of decreasing the
probability to $1 - \delta - \delta'$.
\end{corollary}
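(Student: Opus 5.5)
The plan is to compose three earlier results: the exact-basis high-probability certificate of \cref{cor:hpcert}, the density-basis TV bridge of \cref{cor:tv}, and, for the random-probe variant, \cref{cor:randcert}. The argument is deterministic composition on a single probability event, so there is no new probabilistic content.

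First, fix the probe list $X$. Under (A0)--(A3), \cref{sec:mismatch} gives the exact observation equation $\vecop(V_X)=Mc$. Assumption (A4) supplies full column rank, so $\smin(M)>0$. Assumption (A5) is exactly the coordinate envelope of \cref{lem:drift-radii}(a) for the summands $\Xi_s$, with $D=dN$. Hence \cref{cor:hpcert} yields an event $E$ with $\PP(E)\ge1-\delta$ on which
\[
  \norm{c}_2\le\frac{\Norm{\vecop(\Vhat_X)}_2+B_\infty\sqrt{2dN\log(2dN/\delta)/n}}{\smin(M)}.
\]

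Second, on the same event, apply \cref{cor:tv}. Assumption (A1) puts $p,q$ in the span, and each $\phi_i$ being a probability density gives $\phi_i\in L^1$, $\eta_i=1$ and $\norm{\phi_i}_{L^1}=1$. Therefore $\TV(p,q)\le\frac m2\norm{c}_2$. This inequality is deterministic, so chaining it with the display above on $E$ gives the main claim with probability at least $1-\delta$.

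For the random-probe replacement, the hypotheses of \cref{cor:randcert} include independence of the probes from the minibatch and the joint almost-sure envelope. Under them, \cref{cor:randcert} provides an event of probability at least $1-\delta-\delta'$ on which the same numerator bounds $\norm{c}_2$ with denominator $\sqrt{N\gamma(\nu)/2}$. Composing again with the deterministic factor $\frac m2$ gives the second statement.

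The only delicate point is in this random-probe case. There (A4) is not assumed but holds on the event $E_1$ of \cref{thm:randprobe}, and the drift radius must be valid conditionally on the random probes. Both issues are already handled inside the proof of \cref{cor:randcert} through the conditioning and union-bound argument, so I would simply invoke that corollary rather than redo the argument. I expect no real obstacle beyond checking that the density-basis hypotheses of \cref{cor:tv} are met. They are, since (A0) with density $\phi_i$ already presumes integrability.
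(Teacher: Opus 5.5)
Your proposal is correct and takes the same approach as the paper. The paper's proof composes \cref{cor:hpcert} with the density case of \cref{cor:tv}, and for the random-probe variant composes \cref{cor:randcert} with \cref{cor:tv}; your extra remarks on (A4) and the conditional validity of the radius are accurate and are already handled inside \cref{cor:randcert}.
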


\begin{proof}
Compose \cref{cor:hpcert} with \cref{cor:tv} (density case); the second
statement composes \cref{cor:randcert} with \cref{cor:tv}.
\end{proof}

\begin{remark}[Scaling in the number of probes]\label{rem:scaling}
Both sides of the certificate scale consistently in $N$: the numerator
aggregates $N$ probe evaluations and generically grows like $\sqrt{N}$,
while the denominator grows like $\sqrt{N \gamma(\nu) / 2}$. Hence the
radius-free ratio $\norm{\vecop(\Vhat_X)}_2/\smin(M)$ typically stabilizes
as $N$ grows rather than degenerating, and $\sqrt{\gamma(\nu)}$ emerges as
the population-level singular-value scale.  In a certificate, the denominator is
estimated either by $\smin(\widehat M_m)$ (with the error margin of
\cref{thm:endtoend}) or, for the Gaussian benchmark model of
\cref{app:randprobes}, estimated from the closed-form witnesses. Note also that
$\gamma(\nu)$ can be positive but small (e.g.\ near-colliding pair
witnesses; cf.\ \cref{ex:midpoint}), in which case the probe requirement
$N \gtrsim L^2 / \gamma(\nu)$ becomes large. This is the random-probe
analogue of the conditioning phenomenon in \cref{prop:stability}.
\end{remark}

\subsection{Distributional and held-out consequences}\label{sec:distributional-consequences}

\begin{corollary}[Full-distribution total-variation certificate]
\label{cor:fulltv}
In addition to the assumptions of \cref{thm:endtoend}, suppose each
$\phi_i$ is a probability density and the finite parts $p_m$ and $q_m$ in
\eqref{eq:approx-split} are themselves normalized probability densities.
If $r_p,r_q\in L^1$, then
\begin{equation}
  \TV(p,q)
  \le \frac{1}{2}\norm{r_p}_{L^1}
    +\frac{m}{2}\,
      \frac{\Norm{\vecop(\Vhat_X)}_2+\varepsilon_V+\norm{R_m}_2}
           {\smin(\widehat M_m)-\varepsilon_M}
    +\frac{1}{2}\norm{r_q}_{L^1}.
  \label{eq:fulltv}
\end{equation}
\end{corollary}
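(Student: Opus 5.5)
The plan is to split $\TV(p,q)$ by the triangle inequality in $L^1$ through the two finite-span approximants, and then control the middle term by composing \cref{thm:endtoend} with the density-basis case of \cref{cor:tv}. Since $\TV(p,q)=\tfrac12\norm{p-q}_{L^1}$ and \eqref{eq:approx-split} gives $p-q=r_p+(p_m-q_m)-r_q$, we get
\[
  \TV(p,q)\le \tfrac12\norm{r_p}_{L^1}+\tfrac12\norm{p_m-q_m}_{L^1}+\tfrac12\norm{r_q}_{L^1}.
\]
This step only needs $r_p,r_q\in L^1$ and $p_m,q_m\in L^1$. The latter holds because each $\phi_i$ is a probability density, so finite combinations are integrable.

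For the middle term, note that by hypothesis $p_m$ and $q_m$ are normalized probability densities in $\operatorname{span}\{\phi_1,\dots,\phi_m\}$. The pair $(p_m,q_m)$ therefore satisfies (A1), with coefficient vectors $a,b$ and mismatch $c_m=a\wedge b$ as defined in \cref{sec:approximation}. The density-basis case of \cref{cor:tv}, applied to $(p_m,q_m)$ in place of $(p,q)$, gives $\tfrac12\norm{p_m-q_m}_{L^1}=\TV(p_m,q_m)\le \tfrac m2\norm{c_m}_2$.

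The one thing to verify is that \cref{cor:tv}, through \cref{lem:bridge}, uses only normalization $\ip{a}{\eta}=\ip{b}{\eta}=1$ and $b\neq0$, and does not use nonnegativity beyond that. Indeed \eqref{eq:Aeta} needs only the unit masses, and $b\ne0$ follows because $q_m$ integrates to one. So even if normalization were the only property available, the bridge would still apply.

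Finally, insert \eqref{eq:endtoend}, whose hypotheses are assumed verbatim, to bound $\norm{c_m}_2$ by $(\norm{\vecop(\Vhat_X)}_2+\varepsilon_V+\norm{R_m}_2)/(\smin(\widehat M_m)-\varepsilon_M)$. The denominator is positive by (A6). Summing the three pieces yields \eqref{eq:fulltv}.

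The only real obstacle is bookkeeping. One must make sure that the $c_m$ appearing in \cref{thm:endtoend}, which is defined through the decomposition \eqref{eq:approx-decomp}, is the same exterior product $a\wedge b$ of the coefficients of the normalized approximants to which \cref{lem:bridge} applies. This holds by construction of $M_m,c_m$ in \cref{sec:approximation}. One must also keep the residual drift term $R_m$ separate from the $L^1$ residual radii: the two enter through different parts of the argument and are not combined.
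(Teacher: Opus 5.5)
Your proposal is correct and matches the paper's proof. The route is the same: the $L^1$ triangle inequality through $p_m,q_m$, then the density-basis case of \cref{cor:tv} applied to the normalized finite parts to get $\TV(p_m,q_m)\le\frac m2\norm{c_m}_2$, then \cref{thm:endtoend} to bound $\norm{c_m}_2$. Your extra remark that \cref{lem:bridge} only needs the unit masses $\ip{a}{\eta}=\ip{b}{\eta}=1$ is accurate but unnecessary, since the hypotheses already make $(p_m,q_m)$ satisfy (A1).
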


\begin{proof}
By the triangle inequality in $L^1$ and \cref{cor:tv} applied to the
normalized finite parts,
\[
  \TV(p,q)
  =\tfrac12\norm{p-q}_{L^1}
  \le \tfrac12\norm{r_p}_{L^1}
      +\TV(p_m,q_m)
      +\tfrac12\norm{r_q}_{L^1}
  \le \tfrac12\norm{r_p}_{L^1}
      +\frac{m}{2}\norm{c_m}_2
      +\tfrac12\norm{r_q}_{L^1}.
\]
Apply \cref{thm:endtoend}.
\end{proof}

\subsection{Held-out application}\label{sec:heldout}

For a held-out use of the preceding bounds, let $\mathcal F_0$ be the
$\sigma$-field generated by the generator, basis, kernel, bandwidth, and
probes, all frozen before fresh audit pairs are drawn i.i.d.\ from
$p\otimes q$. Let $\delta_V,\delta_M,\delta_R\in[0,1]$ satisfy
$\delta_V+\delta_M+\delta_R\le1$. Let
$\varepsilon_V,\varepsilon_M,\rho_p,\rho_q$ be valid radii for which the
following conditional-probability statements hold almost surely: the drift
event has probability at least $1-\delta_V$; an operator-calibration event
provides a matrix $\widetilde M$ and certified lower bound
$\underline\sigma\le\smin(\widetilde M)$ with probability at least
$1-\delta_M$; and
\begin{equation}
  \norm{r_p}_{L^1}\le\rho_p,
  \qquad
  \norm{r_q}_{L^1}\le\rho_q
  \label{eq:residual-envelope}
\end{equation}
holds with probability at least $1-\delta_R$.  These are external
inputs to the audit, not quantities inferred from the drift batch.  Here the
drift and operator events
are respectively
$\norm{\vecop(\Vhat_X)-\vecop(V_X)}_2\le\varepsilon_V$ and
$\norm{\widetilde M-M_m}_{\mathrm{op}}\le\varepsilon_M$; the three events
need not be mutually independent.  Assume the bounded-kernel conditions of
\cref{prop:residual}, and set
\begin{align}
  \overline R_m&:=B_X(\rho_p+\rho_q+\rho_p\rho_q),
  \label{eq:Rbar}\\
  \Delta&:=\underline\sigma-\varepsilon_M.
  \label{eq:Delta-audit}
\end{align}
When $\Delta>0$, define
\begin{equation}
  U_c:=\frac{\norm{\vecop(\Vhat_X)}_2+\varepsilon_V+\overline R_m}{\Delta},
  \qquad
  U_{\mathrm{TV}}
  :=\min\left\{1,\frac{\rho_p+\rho_q}{2}+\beta_\phi U_c\right\}.
  \label{eq:operational-bounds}
\end{equation}
If $\Delta\le0$, set $U_c=+\infty$ and $U_{\mathrm{TV}}=1$.

\begin{corollary}[Held-out total-variation certificate]
\label{cor:operational}
Assume \eqref{eq:approx-decomp}, the conditions above, and that the finite
parts $p_m,q_m$ are normalized probability densities sharing the integrable
basis required by \cref{cor:tv}.  Then
\begin{equation}
  \PP\{\TV(p,q)\le U_{\mathrm{TV}}\mid\mathcal F_0\}
  \ge1-\delta_V-\delta_M-\delta_R
  \quad\text{almost surely}.
  \label{eq:operational-coverage}
\end{equation}
Taking expectations yields the same unconditional coverage bound.
Thus $\Delta\le0$ returns the universally valid bound $1$, interpreted as an
inconclusive audit rather than evidence of distribution matching.
\end{corollary}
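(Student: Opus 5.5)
The plan is to work on a single good event and show that, on it, the inequality $\TV(p,q)\le U_{\mathrm{TV}}$ holds deterministically. Coverage then follows from a union bound on the conditional failure probabilities. Let $E_V$ be the drift event, $E_M$ the operator-calibration event (with $\widetilde M$ and $\underline\sigma\le\smin(\widetilde M)$), and $E_R$ the residual event \eqref{eq:residual-envelope}. By hypothesis, $\PP(E_\bullet^c\mid\mathcal F_0)\le\delta_\bullet$ almost surely. Subadditivity of conditional probability gives $\PP(E_V\cap E_M\cap E_R\mid\mathcal F_0)\ge1-\delta_V-\delta_M-\delta_R$ almost surely, and no independence is needed. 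Taking expectations gives the unconditional statement.

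On the intersection event, split into cases according to $\Delta$.

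If $\Delta\le0$, then $U_{\mathrm{TV}}=1$. The claim is trivial, since $\TV\le1$ for probability measures.

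If $\Delta>0$, proceed in four steps.
\begin{enumerate}
\item Since $\underline\sigma\le\smin(\widetilde M)$, we have $\smin(\widetilde M)-\varepsilon_M\ge\Delta>0$. This gives (A6) for $\widetilde M$, and in particular $\widetilde M$ has full column rank. Apply \cref{thm:endtoend} with $\widehat M_m=\widetilde M$ to get $\norm{c_m}_2\le(\norm{\vecop(\Vhat_X)}_2+\varepsilon_V+\norm{R_m}_2)/(\smin(\widetilde M)-\varepsilon_M)$. Shrinking the denominator to $\Delta$ only enlarges the bound.
\item Invoke \cref{prop:residual} (the bounded-kernel residual bound, assumed here) to get $\norm{R_m}_2\le B_X(\norm{r_p}_{L^1}+\norm{r_q}_{L^1}+\norm{r_p}_{L^1}\norm{r_q}_{L^1})$. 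This is monotone in the residual norms, so on $E_R$ it is at most $\overline R_m$. Hence $\norm{c_m}_2\le U_c$.
\item Argue as in \cref{cor:fulltv}, but with general $\beta_\phi$ from \cref{cor:tv} applied to the normalized finite parts. The triangle inequality gives $\TV(p,q)\le\tfrac12\norm{r_p}_{L^1}+\beta_\phi\norm{c_m}_2+\tfrac12\norm{r_q}_{L^1}\le(\rho_p+\rho_q)/2+\beta_\phi U_c$.
\item Combine with $\TV\le1$ to obtain the minimum in \eqref{eq:operational-bounds}.
\end{enumerate}

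The main obstacle is bookkeeping rather than analysis. First, the deterministic theorems must be applied pointwise on the intersection event; here I will note that $\widetilde M$, $\underline\sigma$ and the radii are random, but the inequalities hold realization-wise. Second, the residual-to-drift bound of \cref{prop:residual} must really have the product form above, which should come from expanding $p\otimes q-p_m\otimes q_m=r_p\otimes q_m+p_m\otimes r_q+r_p\otimes r_q$. With $p_m,q_m$ normalized densities, $\norm{p_m}_{L^1}=\norm{q_m}_{L^1}=1$. The stacked envelope (A7) then bounds each Bochner integral by $B_X$ times the product of the $L^1$ norms.
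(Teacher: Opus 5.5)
Your proposal is correct and follows essentially the same route as the paper. On the intersection of the drift, operator, and residual events, \cref{thm:endtoend} together with \cref{prop:residual} (using $\norm{p_m}_{L^1}=\norm{q_m}_{L^1}=1$) gives $\norm{c_m}_2\le U_c$; the $L^1$ triangle inequality with \cref{cor:tv} then gives the TV bound, capping at $1$ covers $\Delta\le0$, and a conditional union bound given $\mathcal F_0$ yields the coverage. The only cosmetic difference is that the paper states the Weyl step directly as $\smin(M_m)\ge\Delta$, while you plug $\widehat M_m=\widetilde M$ into \cref{thm:endtoend} and then shrink the denominator $\smin(\widetilde M)-\varepsilon_M$ to $\Delta$.
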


\begin{proof}
On the intersection of the three radius events, Weyl's inequality gives
$\smin(M_m)\ge\smin(\widetilde M)-\varepsilon_M\ge\Delta$.
If $\Delta>0$, \cref{thm:endtoend,prop:residual} give
$\norm{c_m}_2\le U_c$, while the $L^1$ triangle inequality and
\cref{cor:tv} give
\[
  \TV(p,q)
  \le\tfrac12\norm{r_p}_{L^1}+\beta_\phi\norm{c_m}_2
      +\tfrac12\norm{r_q}_{L^1}
  \le\tfrac{\rho_p+\rho_q}{2}+\beta_\phi U_c.
\]
Capping at $1$ preserves validity; when $\Delta\le0$, the reported bound is
already $1$.  A conditional union bound given $\mathcal F_0$ proves
\eqref{eq:operational-coverage}.
\end{proof}

\begin{corollary}[Total-variation equivalence test]
\label{cor:operational-equivalence}
For a prespecified $\epsilon>0$, test $H_0:\TV(p,q)\ge\epsilon$ against
$H_1:\TV(p,q)<\epsilon$ and reject only when
$U_{\mathrm{TV}}<\epsilon$. Under the assumptions of
\cref{cor:operational}, the type-I error is at most
$\delta_V+\delta_M+\delta_R$.
\end{corollary}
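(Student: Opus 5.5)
The test rejects only when $U_{\mathrm{TV}}<\epsilon$, so a type-I error can occur only when the null is true and the certificate undershoots it. My plan is to show that this event sits inside the failure event of \cref{cor:operational}, and then read off its probability.

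Fix any pair $(p,q)$ with $\TV(p,q)\ge\epsilon$, so that $H_0$ holds. Consider the rejection event $\{U_{\mathrm{TV}}<\epsilon\}$. On this event,
\[
  U_{\mathrm{TV}}<\epsilon\le\TV(p,q),
\]
so the event is contained in $\{\TV(p,q)>U_{\mathrm{TV}}\}$. This is exactly the complement of the coverage event in \eqref{eq:operational-coverage}.

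I would work conditionally on $\mathcal F_0$, matching how \cref{cor:operational} is stated. The corollary gives
\[
  \PP\{\TV(p,q)>U_{\mathrm{TV}}\mid\mathcal F_0\}\le\delta_V+\delta_M+\delta_R
\]
almost surely. Monotonicity of conditional probability then bounds the conditional rejection probability by the same quantity. Taking expectations gives the unconditional type-I bound.

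The abstention case needs no separate argument. If $\Delta\le0$, then $U_{\mathrm{TV}}=1$. Since $\epsilon\le\TV(p,q)\le1$ under $H_0$, the inequality $1<\epsilon$ is impossible, so the test never rejects there. This is consistent with the inclusion above in any case.

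There is no real obstacle; the test simply inverts the one-sided confidence bound. The only care needed is that $\epsilon$ is prespecified, meaning it is $\mathcal F_0$-measurable or deterministic, so that it is not chosen after seeing the audit batch. This is what lets the conditional coverage statement apply uniformly over every null pair $(p,q)$ satisfying the assumptions.
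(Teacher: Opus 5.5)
Your proposal is correct and matches the paper's proof: under $H_0$, rejection forces $U_{\mathrm{TV}}<\epsilon\le\TV(p,q)$, so the rejection event lies in the complement of the coverage event of \cref{cor:operational}, whose probability is at most $\delta_V+\delta_M+\delta_R$. Your additional remarks on conditioning on $\mathcal F_0$, the abstention case, and prespecification of $\epsilon$ are consistent with that argument but not needed for it.
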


\begin{proof}
Under $H_0$, rejection implies $U_{\mathrm{TV}}<\TV(p,q)$ and is contained
in the complement of the coverage event in
\eqref{eq:operational-coverage}.
\end{proof}

In the exact finite-basis setting, identify $p=p_m$ and $q=q_m$ and take
$\rho_p=\rho_q=\overline R_m=0$ and $\delta_R=0$. If the operator is
available exactly, identify $\widetilde M=M_m$ and take
$\varepsilon_M=\delta_M=0$. If $\zeta_p,\zeta_q\in[0,1]$ and
$p_m,q_m,h_p,h_q$ are probability densities satisfying
$p=(1-\zeta_p)p_m+\zeta_p h_p$ and
$q=(1-\zeta_q)q_m+\zeta_q h_q$, use
$\rho_p=2\zeta_p$ and $\rho_q=2\zeta_q$, since
$\norm{h_p-p_m}_{L^1},\norm{h_q-q_m}_{L^1}\le2$. Without a justified
residual envelope, report the conditional sensitivity curve
$(\rho_p,\rho_q)\mapsto U_{\mathrm{TV}}$ rather than an unqualified
full-distribution certificate. Reusing the audit batch to select the
generator, probes, bandwidth, or basis requires a separate uniform or
sequential correction.

The end-to-end certificate has the form
\[
  \text{mismatch}
  \;\lesssim\;
  \frac{\text{drift}+\text{sampling error}+\text{basis residual}}
    {\text{estimated observability}-\text{matrix error}}
  +\text{density residual}.
\]
The main extension beyond exact finite-basis identifiability is the explicit
treatment of approximation and estimation error.
Equation~\eqref{eq:approx-decomp} identifies the field residual induced by model
approximation; \eqref{eq:endtoend} propagates drift, operator, and
representation errors through the inverse bound; and \eqref{eq:fulltv} adds
the $L^1$ density residuals. \Cref{cor:operational} assigns valid radii to
these terms, combines the corresponding events by a union bound, and returns
the trivial TV bound when the error-adjusted observability margin is
nonpositive.

\section{Large-Bandwidth Collapse}\label{sec:collapse}

A similarity kernel that is too flat creates a basic failure mode for
drift-based training: very small drift can coexist with a large
distributional mismatch.

Consider the unnormalized mean-shift-style drift with the Gaussian-RBF
kernel \eqref{eq:rbf}. The corresponding drift is
\begin{equation}
  V^{(\tau)}_{p,q}(x)
  = \E_{Y^+ \sim p,\, Y^- \sim q}
    \big[ k_\tau(x, Y^+)\, k_\tau(x, Y^-)\, (Y^+ - Y^-) \big].
  \label{eq:tau-drift}
\end{equation}

\begin{theorem}[Large-bandwidth collapse to mean matching]
\label{thm:collapse}
Assume $p$ and $q$ have finite first moments. Then, for every fixed
$x \in \R^d$,
\begin{equation}
  V^{(\tau)}_{p,q}(x)
  \longrightarrow \E_{Y \sim p}[Y] - \E_{Y \sim q}[Y]
  \quad \text{as } \tau \to \infty. \label{eq:collapse}
\end{equation}
\end{theorem}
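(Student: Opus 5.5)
This is a dominated-convergence argument applied to the integrand of \eqref{eq:tau-drift}. Fix $x\in\R^d$ and let $(Y^+,Y^-)\sim p\otimes q$ be independent. For each realization the integrand is
\[
  F_\tau(Y^+,Y^-):=k_\tau(x,Y^+)\,k_\tau(x,Y^-)\,(Y^+-Y^-).
\]

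\emph{Pointwise limit.} For every fixed $y$, $\norm{x-y}_2^2/\tau\to0$ as $\tau\to\infty$, so $k_\tau(x,y)\to1$. Hence $F_\tau(Y^+,Y^-)\to Y^+-Y^-$ pointwise on $\R^d\times\R^d$.

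\emph{Domination.} We need a $\tau$-independent integrable majorant; the global envelope of \cref{lem:rbf-envelope} grows like $\sqrt\tau$, so it cannot be used here. Instead use $0<k_\tau\le1$:
\[
  \norm{F_\tau(Y^+,Y^-)}_2\le\norm{Y^+-Y^-}_2\le\norm{Y^+}_2+\norm{Y^-}_2 .
\]
The right-hand side is integrable under $p\otimes q$ because $p$ and $q$ have finite first moments (A8).

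\emph{Conclusion.} The vector dominated convergence theorem, or equivalently the scalar version applied coordinatewise, gives
\[
  V^{(\tau)}_{p,q}(x)=\E F_\tau\to\E[Y^+-Y^-]=\E_pY-\E_qY .
\]
The last equality uses linearity of expectation, and both expectations exist by the first-moment assumption. Passing to the limit through an arbitrary sequence $\tau_k\to\infty$ yields the continuous-parameter limit \eqref{eq:collapse}.

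The only step that needs care is choosing a $\tau$-uniform majorant instead of the $\tau$-dependent envelope; once the crude bound $k_\tau\le1$ is used, everything else is routine. The argument also never requires $p$ and $q$ to have densities, so it applies to general laws with finite first moments.
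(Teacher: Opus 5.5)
Your proposal is correct and matches the paper's proof: pointwise convergence $k_\tau(x,\cdot)\to1$, the $\tau$-uniform majorant $\norm{Y^+}_2+\norm{Y^-}_2$ obtained from $0<k_\tau\le1$, and dominated convergence. Your added remarks, that the limit passes through arbitrary sequences $\tau_k\to\infty$ and that densities are not needed, are valid refinements the paper leaves implicit.
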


The proof is deferred to \cref{app:collapse-proofs}.

\begin{corollary}[The normalized population field has the same limit]
\label{cor:normalized-collapse}
Under the assumptions of \cref{thm:collapse}, let
$\bar V^{(\tau)}_{p,q}(x)=V^{(\tau)}_{p,q}(x)/Z_\tau(x)$ use the
normalization of \cref{rem:normalized}, with $k=k_\tau$. Then, for every
fixed $x\in\R^d$,
\[
  \bar V^{(\tau)}_{p,q}(x)
  \longrightarrow \E_p[Y]-\E_q[Y].
\]
\end{corollary}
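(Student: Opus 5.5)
The plan is to write the normalized field as a ratio and pass to the limit in the numerator and the denominator separately. For the numerator, \cref{thm:collapse} gives $V^{(\tau)}_{p,q}(x)\to\E_p[Y]-\E_q[Y]$ for every fixed $x$, and I will take it as given. The only remaining work is to show that the normalizer converges to $1$:
\[
  Z_\tau(x)=\E_p[k_\tau(x,Y^+)]\,\E_q[k_\tau(x,Y^-)]\longrightarrow 1 .
\]
The factorization into two one-sample expectations is the one stated in \cref{rem:normalized}; it is legitimate here because $0<k_\tau\le1$.

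Each factor is handled by dominated convergence. Fix $x$. For every $y$ we have $k_\tau(x,y)=\exp(-\norm{x-y}_2^2/\tau)\to1$ as $\tau\to\infty$. The integrand is bounded by $1$, which is integrable against any probability measure. Hence $\E_p[k_\tau(x,Y^+)]\to1$, and likewise $\E_q[k_\tau(x,Y^-)]\to1$. Note that this step does not use the first-moment assumption at all.

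Next I need the ratio to be well defined along the limit. \Cref{rem:normalized} already shows $Z_\tau(x)>0$ for every $\tau$, since it is the expectation of a strictly positive function. So $\bar V^{(\tau)}_{p,q}(x)$ is well defined for all $\tau>0$. Since $Z_\tau(x)\to1\neq0$, continuity of division in each coordinate gives
\[
  \bar V^{(\tau)}_{p,q}(x)=\frac{V^{(\tau)}_{p,q}(x)}{Z_\tau(x)}\to\frac{\E_p[Y]-\E_q[Y]}{1}.
\]

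I do not expect a real obstacle. The substantive limit, which needs the first moments, is already contained in \cref{thm:collapse}. The only point needing care is the pointwise convergence of $k_\tau\to1$ together with the uniform bound by $1$ that justifies dominated convergence for $Z_\tau$. If one also wants a rate, combining the $O(1/\tau)$ rate of \cref{lem:taurate} with $1-Z_\tau(x)=O(\E\norm{x-Y}_2^2/\tau)$, which follows from $1-e^{-u}\le u$, would give one under second moments. That is not required for the stated claim.
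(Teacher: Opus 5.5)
Your proposal is correct and is essentially the paper's proof: dominated convergence with the envelope $0<k_\tau\le1$ gives $\E[k_\tau(x,Y)]\to1$ under both $p$ and $q$, independence gives $Z_\tau(x)\to1$, and \cref{thm:collapse} supplies the numerator limit. One small slip in your optional closing aside: \cref{lem:taurate} itself requires finite third moments, so combining it with your $O(1/\tau)$ bound on $1-Z_\tau(x)$ gives a rate under third moments, not second; only the denominator estimate needs just second moments.
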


The proof is included in \cref{app:collapse-proofs}.

\begin{corollary}[Mean-matched distributions can become
drift-indistinguishable]\label{cor:meanmatched}
Let $p \ne q$ be two distributions with finite first moments and equal
means: $\E_{Y \sim p}[Y] = \E_{Y \sim q}[Y]$. Then, for every fixed
$x \in \R^d$,
\[
  V^{(\tau)}_{p,q}(x) \to 0 \quad \text{as } \tau \to \infty,
\]
even though $p \ne q$; the normalized population field of
\cref{cor:normalized-collapse} also converges to zero.
\end{corollary}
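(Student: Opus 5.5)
The plan is to obtain both claims directly from \cref{thm:collapse} and \cref{cor:normalized-collapse}. The only extra input is the hypothesis $\E_p[Y]=\E_q[Y]$.

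Fix $x\in\R^d$. Since $p$ and $q$ have finite first moments, \cref{thm:collapse} applies and gives $V^{(\tau)}_{p,q}(x)\to\E_p[Y]-\E_q[Y]$ as $\tau\to\infty$. Under the equal-means hypothesis this limit vector is $0$, so the unnormalized field vanishes in the limit at every fixed probe.

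For the normalized field, recall from \cref{rem:normalized} that for the Gaussian-RBF kernel $0<Z_\tau(x)\le1$ for every $\tau$. Hence $\bar V^{(\tau)}_{p,q}(x)$ is well defined, and \cref{cor:normalized-collapse} gives the same limit $\E_p[Y]-\E_q[Y]=0$.

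The assertion ``even though $p\ne q$'' is simply part of the hypothesis, so nothing further needs to be proved for it. To show that the statement is not vacuous, one can note an example. Take $p=\mathcal N(0,I)$ and $q=\mathcal N(0,2I)$: these are distinct laws with equal means and finite first moments.

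There is no substantive obstacle, since all the analytic work (dominated convergence as $k_\tau\to1$) is contained in \cref{thm:collapse}. The only point worth checking is that the normalizer in \cref{cor:normalized-collapse} stays finite and nonzero along the limit. This holds by the pointwise bound $0<k_\tau\le1$, and in fact $Z_\tau(x)\to1$ by dominated convergence.
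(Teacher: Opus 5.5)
Your proposal is correct and matches the paper's proof: with equal means, the limit in \cref{thm:collapse} is zero, and the normalized claim follows directly from \cref{cor:normalized-collapse}. Your extra remarks (the non-vacuity example and the check that $0<Z_\tau(x)\le1$) are harmless additions to the same two-line argument.
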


\begin{proof}
Because the means are equal, the limit in \cref{thm:collapse} is zero.
The normalized conclusion follows from \cref{cor:normalized-collapse}.
\end{proof}

The next lemma makes the collapse quantitative, under third moments.
Throughout, $\Delta\mu := \E_{Y \sim p}[Y] - \E_{Y \sim q}[Y]$.

\begin{lemma}[Quantitative large-bandwidth collapse]\label{lem:taurate}
Let $k_\tau$ be the Gaussian-RBF kernel \eqref{eq:rbf}, let
$Y^+ \sim p$ and $Y^- \sim q$ be independent with
$\E \norm{Y^+}_2^3 < \infty$ and $\E \norm{Y^-}_2^3 < \infty$. Then for
every fixed $x \in \R^d$,
\[
  \Norm{V^{(\tau)}_{p,q}(x) - \Delta\mu}_2
  \le \frac{C(x, p, q)}{\tau},
  \qquad
  C(x, p, q)
  := \E\Big[ \big( \norm{x - Y^+}_2^2 + \norm{x - Y^-}_2^2 \big)\,
    \norm{Y^+ - Y^-}_2 \Big] < \infty.
\]
\end{lemma}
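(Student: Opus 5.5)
Since $\Delta\mu=\E[Y^+-Y^-]$ by independence and finite first moments, the difference can be written as a single expectation,
\[
  V^{(\tau)}_{p,q}(x)-\Delta\mu
  =\E\Big[\big(k_\tau(x,Y^+)\,k_\tau(x,Y^-)-1\big)(Y^+-Y^-)\Big],
\]
a Bochner integral of an integrable integrand. Here $\abs{k_\tau k_\tau-1}\le1$ and $\E\norm{Y^+-Y^-}_2<\infty$, so the integrand is dominated. Taking norms inside the expectation (Jensen for the Bochner integral) gives
\[
  \Norm{V^{(\tau)}_{p,q}(x)-\Delta\mu}_2
  \le\E\Big[\abs{1-k_\tau(x,Y^+)\,k_\tau(x,Y^-)}\,\norm{Y^+-Y^-}_2\Big].
\]

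The key pointwise step uses the product form of the Gaussian factors:
\[
  k_\tau(x,Y^+)\,k_\tau(x,Y^-)=\exp\big(-s/\tau\big),
  \qquad s:=\norm{x-Y^+}_2^2+\norm{x-Y^-}_2^2\ge0.
\]
The elementary inequality $1-e^{-u}\le u$ for $u\ge0$ then yields
\[
  0\le1-k_\tau(x,Y^+)\,k_\tau(x,Y^-)\le\frac{s}{\tau}.
\]
Substituting this bound gives exactly $C(x,p,q)/\tau$, with the constant as displayed in the statement.

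It remains to verify $C(x,p,q)<\infty$ under third moments. This is the only mildly technical step. Bound
\[
  \norm{x-Y^\pm}_2^2\le2\norm{x}_2^2+2\norm{Y^\pm}_2^2,
  \qquad
  \norm{Y^+-Y^-}_2\le\norm{Y^+}_2+\norm{Y^-}_2,
\]
and expand. The resulting terms are cubic monomials in $\norm{x}_2$, $\norm{Y^+}_2$, $\norm{Y^-}_2$, of the forms $\norm{Y^+}_2^3$, $\norm{Y^+}_2^2\norm{Y^-}_2$, $\norm{Y^+}_2^2$, and so on. Mixed terms factor by independence, for example $\E\norm{Y^+}_2^2\,\E\norm{Y^-}_2$. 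All of these are finite because third moments imply lower moments, via Lyapunov/Hölder. Hence the expectation is finite, which also justifies every interchange made above, and the lemma follows.
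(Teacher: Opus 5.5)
Your proposal is correct and follows essentially the same route as the paper. You rewrite $V^{(\tau)}_{p,q}(x)-\Delta\mu$ as $\E[(e^{-s/\tau}-1)(Y^+-Y^-)]$, apply $\abs{e^{-u}-1}\le u$ together with Jensen's inequality for the norm, and obtain finiteness of $C(x,p,q)$ from $\norm{x-Y}_2^2\le2\norm{x}_2^2+2\norm{Y}_2^2$, the triangle inequality, and factorization of mixed moments by independence; the only extra is your explicit domination $\abs{k_\tau k_\tau-1}\le1$ justifying the single-expectation form, which the paper leaves implicit.
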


Its proof is included in \cref{app:collapse-proofs}.

\begin{remark}[Consequences]\label{rem:taurate}
(i) \Cref{thm:collapse} follows from \cref{lem:taurate} by letting
$\tau \to \infty$, under the (stronger) third-moment hypothesis.
(ii) Since $C(x, p, q)$ grows at most quadratically in $\norm{x}_2$,
summing over probes gives the probe-set bound
$\norm{V^{(\tau)}_X - \Delta\mu\, \mathbf{1}_N^\top}_F
\le \tau^{-1} \sum_{\ell=1}^N C(x_\ell, p, q)$.
(iii) For mean-matched pairs ($\Delta\mu = 0$), the lemma guarantees the
upper rate $\norm{V_X^{(\tau)}}_F=O(1/\tau)$ on every fixed finite probe
set. It does not supply a matching lower bound: cancellation of the
first-order term can produce faster decay. \Cref{sec:exp-bandwidth} reports
numerical decay with slope near $-1$ for the chosen benchmark, consistent
with this upper rate,
while $W_2$ and MMD computed with a separate, fixed evaluation kernel remain
positive and unchanged by the audit bandwidth $\tau$.  An MMD kernel whose
own bandwidth equals $\tau$ would instead generally change with $\tau$ and
is not meant here.
\end{remark}

\begin{proposition}[Laplace-kernel analogue]\label{prop:laplace-collapse}
Let $V^{(\tau),\mathrm{Lap}}_{p,q}$ denote \eqref{eq:tau-drift} with
$k_\tau$ replaced by $\ell_\tau$ from \eqref{eq:laplace}.  If $p$ and $q$
have finite first moments, then for every fixed $x$,
\[
  V^{(\tau),\mathrm{Lap}}_{p,q}(x)\longrightarrow\Delta\mu .
\]
If they have finite second moments, then
\[
  \Norm{V^{(\tau),\mathrm{Lap}}_{p,q}(x)-\Delta\mu}_2
  \le\frac{C_{\mathrm{Lap}}(x,p,q)}{\tau},
\]
where
\[
  C_{\mathrm{Lap}}(x,p,q)
  :=\E\!\left[
    \big(\norm{x-Y^+}_2+\norm{x-Y^-}_2\big)
    \norm{Y^+-Y^-}_2\right]<\infty.
\]
The corresponding ratio-normalized population field has the same limit.
\end{proposition}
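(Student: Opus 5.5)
We will prove the quantitative bound directly and read off the convergence statements from it, with a separate argument for the first-moment case. Write $u^\pm:=\norm{x-Y^\pm}_2$ and $D:=Y^+-Y^-$. By independence and Fubini, $\Delta\mu=\E[D]$, which is finite under first moments. Hence
\[
  V^{(\tau),\mathrm{Lap}}_{p,q}(x)-\Delta\mu
  =\E\big[(\ell_\tau(x,Y^+)\ell_\tau(x,Y^-)-1)\,D\big]
  =\E\big[(e^{-(u^++u^-)/\tau}-1)\,D\big].
\]
The key elementary inequality is $0\le 1-e^{-s}\le\min\{1,s\}$ for $s\ge0$. With $s=(u^++u^-)/\tau$ this gives, by Jensen for the Bochner norm,
\[
  \Norm{V^{(\tau),\mathrm{Lap}}_{p,q}(x)-\Delta\mu}_2
  \le\E\big[\min\{1,(u^++u^-)/\tau\}\norm{D}_2\big].
\]

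\emph{First-moment limit.} The integrand above is dominated by $\norm{D}_2\le\norm{Y^+}_2+\norm{Y^-}_2$, which is integrable. It also tends to $0$ pointwise as $\tau\to\infty$ for each fixed $x$, since $u^\pm<\infty$ almost surely. Dominated convergence therefore gives the first claim.

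\emph{Rate.} Drop the minimum to get the bound $\tau^{-1}\E[(u^++u^-)\norm{D}_2]=C_{\mathrm{Lap}}(x,p,q)/\tau$. It remains to check that $C_{\mathrm{Lap}}<\infty$ under finite second moments. Expand
\[
  (u^++u^-)\norm{D}_2\le(\norm{x}_2+\norm{Y^+}_2+\norm{x}_2+\norm{Y^-}_2)(\norm{Y^+}_2+\norm{Y^-}_2).
\]
The resulting terms are linear terms, squares $\norm{Y^\pm}_2^2$, and cross terms $\norm{Y^+}_2\norm{Y^-}_2$. The cross terms factor by independence into products of first moments, and everything else is finite by the second-moment hypothesis. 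This is why second moments suffice here, whereas the Gaussian case in \cref{lem:taurate} needs third moments: $1-e^{-s}$ is now applied to a distance rather than a squared distance, so the weight is one power lower. Measurability and Bochner integrability of the Laplace interaction follow from \cref{lem:laplace-envelope}.

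\emph{Normalized field.} Following the argument for \cref{cor:normalized-collapse}, $Z_\tau(x)=\E[\ell_\tau(x,Y^+)]\,\E[\ell_\tau(x,Y^-)]$ lies in $(0,1]$. Each factor tends to $1$ by dominated convergence, with the integrand bounded by $1$ and converging pointwise to $1$. Hence $Z_\tau(x)\to1$, and the ratio $V^{(\tau),\mathrm{Lap}}_{p,q}(x)/Z_\tau(x)$ converges to $\Delta\mu/1=\Delta\mu$.

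The only real obstacle is this moment bookkeeping for $C_{\mathrm{Lap}}$, specifically using independence for the cross terms. The remaining steps are routine.
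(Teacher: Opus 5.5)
Your proposal is correct and follows essentially the same route as the paper. The limit uses dominated convergence with the Laplace product bounded by one and the integrable dominating term $\norm{Y^+}_2+\norm{Y^-}_2$. The rate uses $\abs{e^{-u}-1}\le u$, with $C_{\mathrm{Lap}}<\infty$ from the triangle inequality and independence for the mixed products. The normalized claim follows because each normalizing expectation tends to one. The only cosmetic difference is that you apply dominated convergence to the scalar bound $\min\{1,(u^++u^-)/\tau\}\norm{D}_2$ rather than directly to the vector integrand.
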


The proof is deferred to \cref{app:collapse-proofs}.

For both the Gaussian and Laplace similarities, the large-bandwidth limit
turns a distributional comparison into a first-moment comparison. Thus a
flat kernel can destroy the ability of the drift field to see multimodal or
higher-order differences between $p$ and $q$. This explains why raw drift
loss can become small in a regime where Wasserstein or MMD errors remain
large.

\section{Experiments}\label{sec:experiments}

The experiments test the paper's inverse-problem claims rather than
generative-model quality.  They examine cross-bandwidth calibration,
held-out coefficient coverage, operator/residual/abstention behavior, the
benefit of variance adaptation, portability to the Laplace numerator, a
joint basis-size/dimension stress path, Gram-guided probe design, and the
predicted degeneracies.  All basis distributions are Gaussian mixtures with
component covariance $0.25I_d$, where $I_d$ is the $d\times d$ identity
matrix; the end-to-end audits use both Gaussian-RBF and
Laplace interactions.  The mismatch $c$ is available exactly.  For the
Gaussian-RBF interaction, the population observation matrix $M$ and
Gaussian-probe Gram matrix are available in closed form for diagnostic
comparisons.  Laplace population comparisons instead use independent Monte
Carlo references that never enter the reported certificate.

We distinguish three roles.  A \emph{population} quantity is evaluated from
the closed-form model; a \emph{statistical} quantity is computed from stated
random samples; and a \emph{plug-in} quantity omits a confidence radius and
has no coverage claim.  Gaussian audit samples are not truncated: the global
envelope of \cref{lem:rbf-envelope} applies directly.  The calibration,
  held-out component study, end-to-end audits, joint stress path, and
  probe-design experiments are reproduced by fixed-seed scripts.  The public
  companion repository provides the scripts, frozen configurations, raw rows,
  validation reports, and figure data.  The end-to-end audits estimate $M$ by
  Monte Carlo and use
the outward-rounded Gram lower bound of \cref{prop:gram-lower}; other
singular values and ranks reported in descriptive experiments remain
floating-point evaluations, not symbolic certificates.

\subsection{Conditioning is not cross-bandwidth distance calibration}
\label{sec:exp-calibration}

The first study uses two two-dimensional mixture pairs: a plus-shaped mixture
against a cross-shaped mixture ($m=9$, $r=36$), and a compact outer ring
against an inner ring ($m=8$, $r=28$).  For both designs $N=40$, so
$dN=80\ge r$.  Every one of the $50$ basis/probe/bandwidth matrices---five
seeds and five bandwidths for each design---has numerical column rank $r$.

For $b_t=(1-t)a+t b_{\mathrm{target}}$, with
$t\in\{0,0.1,\ldots,1\}$, and
$\tau\in\{0.5,2,8,32,128\}$, we compare sliced $W_2$ with the empirical raw
drift and with
\begin{equation}
  C_X:=\frac{\norm{\vecop(\Vhat_X)}_2}{\smin(M)}.
  \label{eq:CX}
\end{equation}
Here $C_X$ is deliberately a radius-free plug-in statistic, not the
certificate of \cref{cor:operational}.  The drift batch has $n=1024$; sliced
$W_2$ uses $4{,}000$ samples and $64$ fixed directions per configuration.

Across all $550$ rows, Spearman correlation with sliced $W_2$ is $0.345$ for
raw drift and $0.165$ for $C_X$; restricting to $t>0$ gives $0.337$ and
$0.155$.  These pooled values test comparability across bandwidths, not
generic ordering at a fixed bandwidth: because
$a\wedge b_t=t(a\wedge b_{\mathrm{target}})$, the raw and conditioned
population drifts are exactly linear in $t$ for each frozen
design and $\tau$. Restricting the following within-configuration and
within-family summaries to $t>0$, across the $50$ truly frozen
(dataset, seed, bandwidth) configurations the empirical raw drift and $C_X$
have identical ranks because $\smin(M)$ is a fixed positive scale. Their
Spearman correlations with sliced $W_2$ therefore coincide: both range from
$0.200$ to $1.000$, with median $0.909$. If the five seeds are instead
pooled within each of the ten dataset/bandwidth families, the correlations
range from $0.527$ to $0.943$ for empirical raw drift and from $0.371$ to
$0.976$ for $C_X$; these are pooled family summaries, not fixed-design
results. The supported conclusion is that neither statistic exhibits a
bandwidth-invariant Wasserstein calibration in these benchmarks. Dividing by a
worst-direction singular value produces a conservative inverse scale, not a
generic distance estimator.  The smallest observed numerical ratio
$\smin(M)/\smax(M)$ is $6.73\times10^{-11}$, where
$\smax(M):=\sup_{\norm{u}_2=1}\norm{Mu}_2$ is the largest singular value.
This small ratio explains the very large
plug-in values in some configurations.

The cross-bandwidth plots and the corresponding comparison with the exact
mismatch appear in
\cref{fig:calibration-sw,fig:calibration-c}.  The latter shows how a nearly
singular worst direction can inflate the radius-free plug-in.  The next
experiment incorporates the sampling-error radius.

\subsection{Held-out component coverage and TV usefulness}
\label{sec:exp-operational}

We next isolate the coefficient-bound component of
\cref{cor:operational} in an exactly specified $m=2$ model.  In one
dimension, the component means are $(-1,1)$,
$a=(0.75,0.25)$, $b=(0.25,0.75)$, and hence $\abs{c}=0.5$.  We use
$\tau=2$, $N=20$ probes drawn once per seed from $\mathcal N(0,1)$ and then
frozen, and independent held-out batches of sizes
$n\in\{64,128,256,512,1024,2048\}$.  The observation matrix is closed form,
so the theoretical exact-$M$ bound has $\varepsilon_M=0$, and the exact-basis
setting gives zero residual radius.  The plots evaluate its singular value by
floating-point SVD; they are therefore a numerical evaluation of that
theoretical bound, not an interval-arithmetic certification of
$\underline\sigma$.  At confidence level $1-\delta=0.95$, we compare the empirical-Bernstein,
bounded-vector, and coordinate-Hoeffding radii of
\cref{lem:drift-radii}(a)--(c).  The envelope is the
rigorous value $B_\infty=\sqrt{\tau/e}=0.857764$.
Each method is evaluated as a separately prespecified procedure; we do not
select the smallest radius after observing a batch.

There are $400$ independent audit batches for each of five frozen probe sets
at every $n$, for $12{,}000$ rows total.  All three error-augmented coefficient
bounds cover $\abs{c}$ in every observed repetition.  The observed coverage
rate of the plug-in alone ranges from $0.492$ to $0.521$ across batch sizes,
close to one half; omitting a radius is therefore not a harmless
approximation.  The lower endpoint of the
probe-set-specific two-sided $95\%$ Wilson interval is $0.99049$ for each of
the three rigorous methods.

The actual TV is
$\tfrac12(2\Phi(2)-1)=0.47725$, while the exact-basis output is
$U_{\mathrm{TV}}=\min\{1,U_c\}$, where $\Phi$ is the standard-normal
cumulative distribution function.  The median rigorous output is $1$ at
$n=64$ and $128$ for all methods.  At $n=256$, the median values are
$0.992$ (empirical Bernstein), $0.978$ (bounded vector), and $1.000$
(Hoeffding), with informative-certificate rates $0.573$, $0.679$, and
$0.453$, respectively.  At $n=2048$ their median outputs fall to $0.606$,
$0.668$, and $0.678$.  Thus this component study demonstrates conservative
coverage and the batch size needed for nontrivial TV output; it does not
claim nominally calibrated coverage. \Cref{fig:operational-coverage}
summarizes these results.

\begin{figure}[H]
  \centering
  \includegraphics[width=\textwidth]{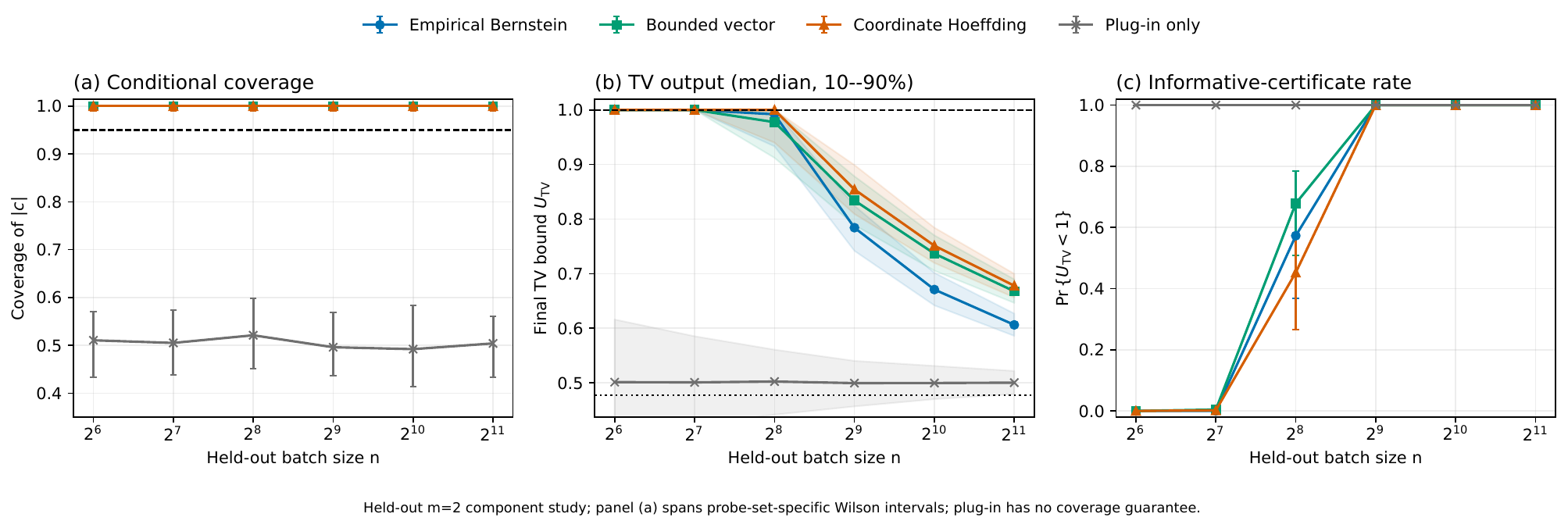}
  \caption{Held-out exact-model component study. (a) Conditional coefficient
  coverage;
  error bars span the five probe-set-specific Wilson intervals and the dashed
  line is $0.95$. (b) Final TV output, with median and $10$--$90\%$ range;
  the dotted line is the true TV and the dashed line is the trivial bound.
  (c) Fraction of audits with $U_{\mathrm{TV}}<1$, with ranges across probe
  sets. The
  three error-augmented coefficient bounds use the global envelope and the
  floating-point singular value of the closed-form $M$; the plug-in has no
  coverage guarantee.}
  \label{fig:operational-coverage}
\end{figure}

\subsection{Variance-adaptive Gaussian and Laplace end-to-end audits}
\label{sec:exp-full-audit}

We next instantiate every term of \cref{cor:operational}.  The basis is the
$m=3$ equilateral Gaussian mixture with means
\[
  (-1,-1/\sqrt3),\quad(1,-1/\sqrt3),\quad(0,2/\sqrt3),
\]
$a=(1/3,1/3,1/3)$,
$b=(0.283\overline3,0.383\overline3,1/3)$, and $\tau=2$.
To make the residual budget nonzero, each full law has contamination
$\zeta=0.001$: $p$ mixes in $\mathcal N((-4,0),0.25I_2)$ and $q$ mixes in
$\mathcal N((4,0),0.25I_2)$.  We declare the valid external radii
$\rho_p=\rho_q=2\zeta=0.002$. Grid quadrature gives
$\TV(p,q)\approx0.04868$ for evaluation only.

For each of $100$ repetitions, each of the three columns of $M$ is estimated
from $s=50{,}000$ independent basis pairs.  We split total failure
probability $0.05$ as $\delta_V=\delta_M=0.025$ and $\delta_R=0$; the
mixture construction deterministically proves the declared residual radii.
Held-out batch sizes are
$n\in\{4{,}096,16{,}384,65{,}536\}$.  The ``pair-midpoint'' design places
one probe at each pair midpoint.  The negative control repeats the origin
three times, so its population operator has rank at most two and a correct
audit must abstain.

For each kernel, we evaluate two separately prespecified procedures on the
same stored operator and held-out samples: bounded-vector radii from
\cref{lem:drift-radii}(b) and coordinate empirical-Bernstein radii from
\cref{lem:drift-radii}(c).  No post-hoc minimum is taken.  For the
empirical-Bernstein operator calibration, column $\alpha$ receives failure
probability $\delta_M/r$.  If $e_\alpha$ is its resulting Euclidean radius,
a union bound and
$\norm{\widehat M-M}_{\rm op}\le\norm{\widehat M-M}_F$ give
\[
  \varepsilon_M^{\rm EB}
  =\left(\sum_{\alpha=1}^{r}e_\alpha^2\right)^{1/2}.
\]
The bounded-vector operator radius is \cref{eq:epsM-MC}.  Both constructions
use the appropriate global Gaussian or Laplace envelope.

Each stored matrix receives the 100-decimal-digit, directed-rounding Gram
lower bound of \cref{prop:gram-lower}.  We separately add a $10^{-9}$ drift
allowance and a $5\times10^{-10}$ operator allowance for ordinary
double-precision evaluation.  These allowances do not interval-enclose every
arithmetic operation, so the study is an implementation demonstration rather
than a formally verified floating-point proof.  Gaussian population checks
use float64 evaluations of closed-form objects.  Laplace checks use an
independent $10^6$-sample Monte Carlo reference with its own error budget and
are descriptive only; neither reference enters the certificate.
\Cref{fig:full-end-to-end} shows the bound distributions, and
\cref{tab:full-audit} reports the largest-batch outcomes.

\begin{figure}[H]
  \centering
  \includegraphics[width=\textwidth]{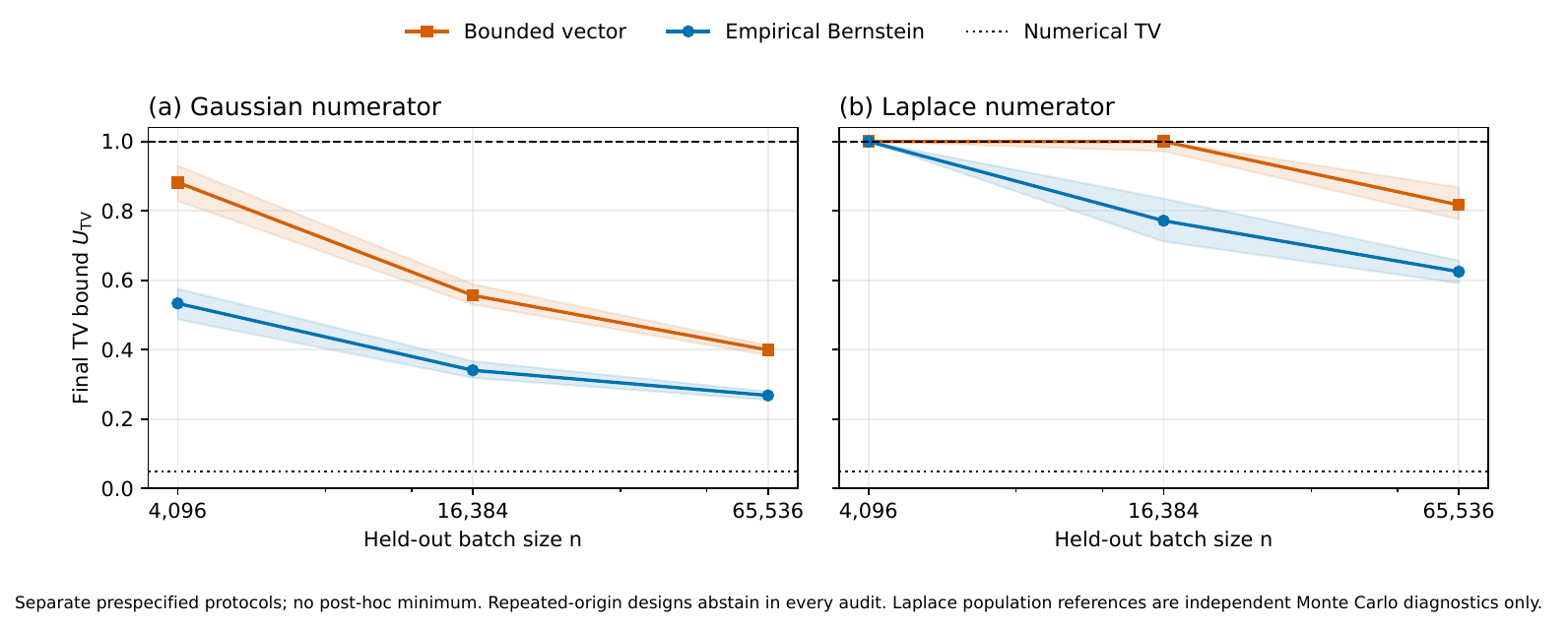}
  \caption{Variance-adaptive end-to-end audits for the observable
  pair-midpoint design.  Curves show median final TV bounds and bands show the
  $10$--$90\%$ range over $100$ repetitions.  The dotted line is the numerical
  TV reference and the dashed line is the trivial bound.  Repeated-origin
  controls are not plotted because every procedure abstains in every audit.}
  \label{fig:full-end-to-end}
\end{figure}

\begin{table}[H]
\centering
\scriptsize
\begin{tabular}{@{}llrlrr@{}}
\toprule
Kernel & Procedure & median $U_{\rm TV}$ & $10$--$90\%$
& informative & reject $H_0$ \\
\midrule
Gaussian & Bounded vector & 0.399 & 0.384--0.412 & 1.00 & 1.00 \\
Gaussian & Empirical Bernstein & 0.268 & 0.255--0.279 & 1.00 & 1.00 \\
Laplace & Bounded vector & 0.817 & 0.775--0.868 & 1.00 & 0.00 \\
Laplace & Empirical Bernstein & 0.625 & 0.591--0.657 & 1.00 & 0.00 \\
\bottomrule
\end{tabular}
\caption{Pair-midpoint outcomes at $n=65{,}536$ over $100$ repetitions.
``Informative'' means $U_{\rm TV}<1$, and ``reject $H_0$'' refers to the
prespecified null $H_0:\TV(p,q)\ge0.5$.  All four procedures non-abstain and
cover both $\norm{c}_2$ and the numerical TV reference in all repetitions.}
\label{tab:full-audit}
\end{table}

For the Gaussian numerator, empirical Bernstein lowers the largest-$n$
median by $32.8\%$.  At $n=16{,}384$ it rejects the equivalence null in all
repetitions, whereas the bounded-vector procedure rejects it in none.  For
the Laplace numerator, empirical Bernstein lowers the largest-$n$ median by
$23.5\%$ and is informative in every repetition already at $n=16{,}384$,
compared with $28\%$ for bounded vector.  Neither Laplace procedure rejects
the equivalence null at the largest $n$.  Equal numerical $\tau$ does not
represent equal effective bandwidth across kernel families, so this is a
portability check rather than a kernel ranking.

Every positive-design configuration has $100/100$ observed coefficient and
numerical-TV coverage; the lower endpoint of the corresponding two-sided
$95\%$ Wilson interval is $0.963$.  This is simulation evidence, not a claim
of exact nominal calibration.  Every repeated-origin procedure has
$\underline\sigma=0$, abstains, and returns $U_{\rm TV}=1$ in all
repetitions.  Even the tighter Gaussian median remains about $5.5$ times the
numerical TV reference, so variance adaptation reduces but does not eliminate
the price of worst-case envelopes.

\subsection{Joint basis-size and dimension stress path}
\label{sec:exp-basis-stress}

To test whether the numerical certificate remains operational beyond $m=3$,
we use regular-simplex Gaussian bases with
$m\in\{3,5,8\}$, dimension $d=m-1$, pairwise mean distance two, and
covariance $0.25I_d$.  Every pair midpoint is a probe, so
$N=r=\binom{m}{2}$, and $\tau=2$.  The basis weights are initially uniform;
$b$ is formed by transferring mass $0.05$ from the first component to the
second.  Consequently, throughout the path,
\[
  \TV(p,q)=0.05\{2\Phi(2)-1\}=0.047725.
\]
We evaluate the theoretical exact-operator specialization using the
closed-form Gaussian $M$, a float64 singular value, and the
empirical-Bernstein drift radius at level $0.95$.  For each $m$, $100$
repetitions are run at
$n\in\{4{,}096,16{,}384,65{,}536,262{,}144\}$.
\Cref{fig:basis-scaling-stress} summarizes the resulting stress path.

\begin{figure}[H]
  \centering
  \includegraphics[width=\textwidth]{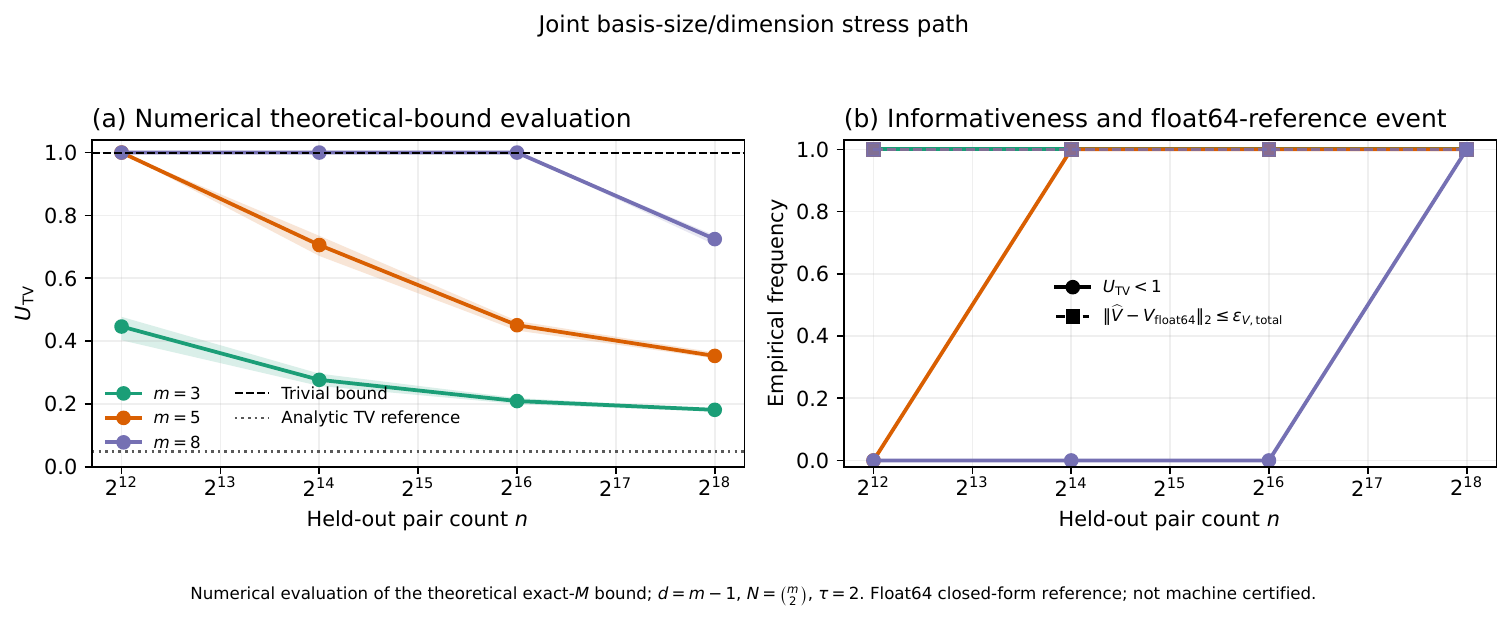}
  \caption{Joint basis-size/dimension stress path.  Left: median
  $U_{\rm TV}$ with $10$--$90\%$ bands.  Right: observed informativeness and
  the float64-reference drift event.  All coefficient and analytic-TV
  references are covered in every repetition.  The displayed exact-$M$
  bounds and singular values are numerical evaluations, not
  machine-certified quantities.}
  \label{fig:basis-scaling-stress}
\end{figure}

At $n=262{,}144$, the median bounds are $0.181$, $0.353$, and $0.725$ for
$m=3,5,$ and $8$, respectively, and every repetition is informative.  The
$m=5$ certificate becomes informative at $n=16{,}384$, whereas the $m=8$
certificate remains trivial through $n=65{,}536$.  Along the same path, the
float64 smallest singular value decreases from $0.253$ to $0.118$ and the
condition number increases from $2.54$ to $4.85$.  All $1{,}200$ drift
events and coefficient/TV reference checks hold; for each $100/100$
frequency, the lower endpoint of the two-sided $95\%$ Wilson interval is
$0.963$.

This study shows operation beyond the three-component example while exposing
a steep finite-sample cost.  It is not an isolated or asymptotic scaling law
in $m$: the ambient dimension, coefficient dimension, probe count,
conditioning, and TV bridge factor $\beta_\phi=m/2$ all change together.

\subsection{The population Gram matrix as a probe-design objective}
\label{sec:exp-probedesign}

For the same $m=3$ equilateral Gaussian basis and $\tau=2$, we vary the standard
deviation $s$ of a centered Gaussian probe law over ten values from $0.15$
to $4$.  For each $s$, the closed form in \cref{app:randprobes} gives
$\sqrt{\gamma(\nu)}$, evaluated by floating-point eigendecomposition.  We compare it with
$\smin(M)/\sqrt N$ for $N\in\{10,40,160,640\}$ over $100$ random probe
realizations per design.

The best value on the prespecified grid is $s=0.5$, with
$\sqrt{\gamma(\nu)}=0.119347$, but $s=0.75$ is nearly tied at $0.119039$;
the result identifies a broad useful region rather than a sharp optimum.
At $s=0.5$ and $N=640$, the median ratio
$(\smin(M)/\sqrt N)/\sqrt{\gamma(\nu)}$ is $0.99870$.  Across all ten probe
spreads, the corresponding $N=640$ median ratios range from $0.976$ to
$1.000$.  The event
$\smin(M)\ge\sqrt{N\gamma(\nu)/2}$ from \cref{thm:randprobe} occurs in
$17$--$99\%$ of cells at $N=10$, $50$--$100\%$ at $N=40$,
$93$--$100\%$ at $N=160$, and every cell at $N=640$.  Thus the population
criterion identifies a useful probe scale and predicts the large-$N$
singular-value scale in this benchmark; finite-$N$ spread remains substantial
for diffuse probe laws. \Cref{fig:probe-law-design} displays both the
cross-design comparison and convergence at the selected grid optimum.

\begin{figure}[H]
  \centering
  \includegraphics[width=0.92\textwidth]{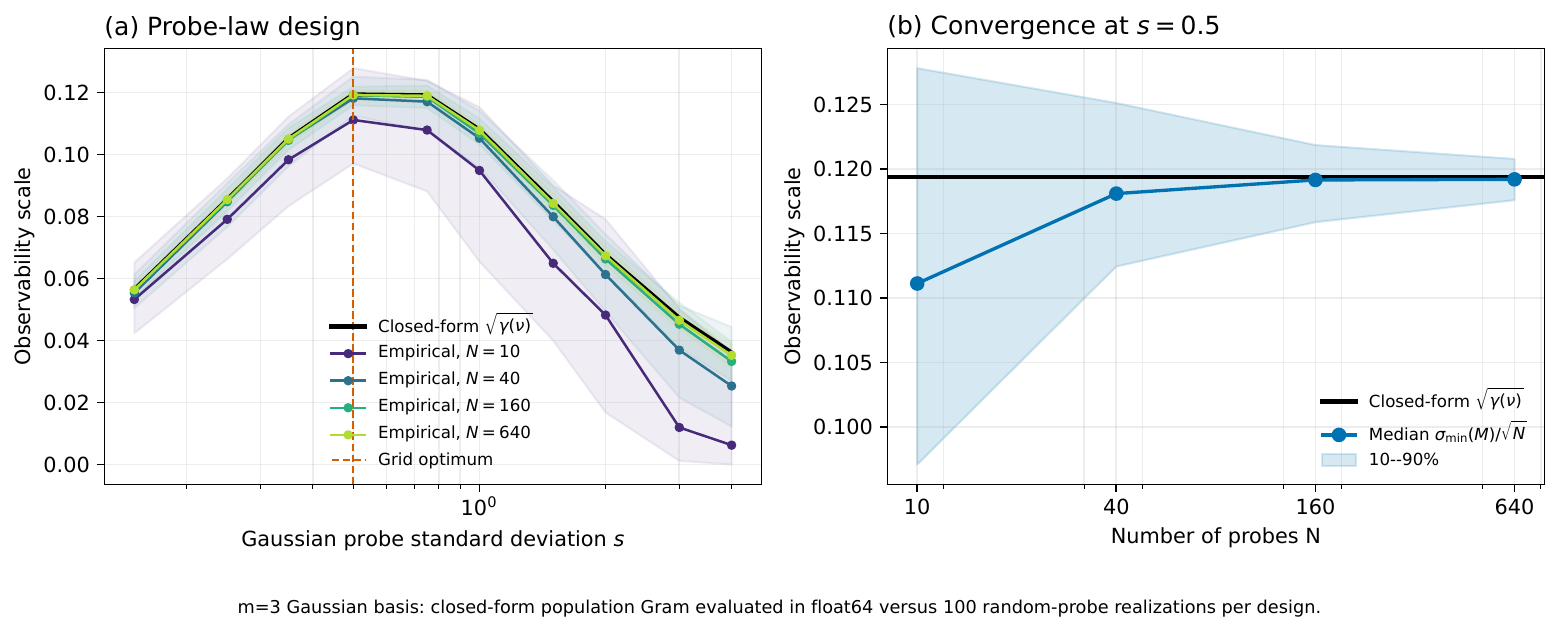}
  \caption{Probe-law design. Left: closed-form population scale evaluated numerically,
  $\sqrt{\gamma(\nu)}$ and empirical $\smin(M)/\sqrt N$ over Gaussian probe
  spreads; bands show the $10$--$90\%$ range over $100$ realizations. Right:
  convergence at the grid-optimal spread $s=0.5$.}
  \label{fig:probe-law-design}
\end{figure}

\newpage

\subsection{Large-bandwidth collapse}\label{sec:exp-bandwidth}

The bandwidth study uses the mean-matched pair
$a=(1/2,0,1/2)$ and $b=(0,1,0)$ at means
$(-1.5,0),(0,0),(1.5,0)$.  The laws differ, but their means agree.
We use $N=40$ probes from $\mathcal N(0,4I)$.  Over
$\tau\in[10^{-1},10^5]$, the population drift
$\norm{V_X}_F=\norm{Mc}_2$ is evaluated from the closed form.  The fitted
log--log slope over the largest decade and a half is $-0.9975$ on average
across five probe seeds, with range $[-0.9979,-0.9971]$.  This benchmark
exhibits numerical decay consistent with the $O(1/\tau)$ upper rate in
\cref{rem:taurate}; the lemma itself does not assert a matching lower bound.
Empirical drift reaches a sampling-noise floor while the population inverse
bound remains above $\norm{c}_2$.  The sliced $W_2$ and MMD diagnostics
recorded in the released per-seed data are computed once per seed and then
held fixed across the $\tau$ sweep.  The MMD evaluation kernel is
$k_{\rm eval}(x,y)=\exp(-\norm{x-y}_2^2/h)$, where $h$ is the median pooled
pairwise squared distance from $2{,}000$ samples per law for that seed; it is
not the drifting kernel $k_\tau$. \Cref{fig:bandwidth} displays the drift,
sampling-floor, and conditioning diagnostics.

\begin{figure}[H]
  \centering
  \begin{minipage}{0.32\textwidth}
    \centering
    \includegraphics[width=\linewidth]{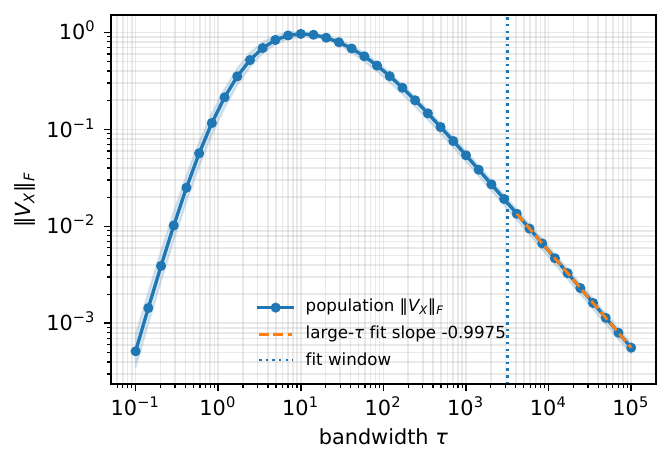}
  \end{minipage}\hfill
  \begin{minipage}{0.32\textwidth}
    \centering
    \includegraphics[width=\linewidth]{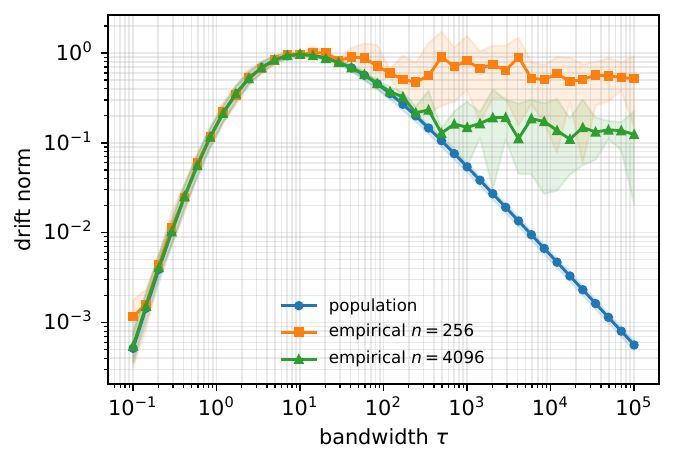}
  \end{minipage}\hfill
  \begin{minipage}{0.32\textwidth}
    \centering
    \includegraphics[width=\linewidth]{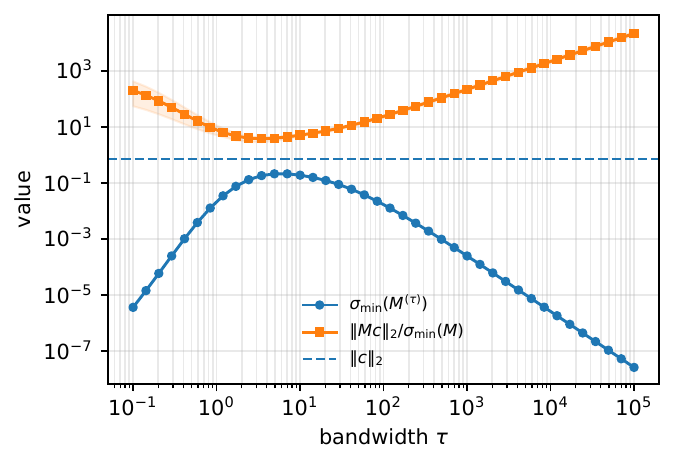}
  \end{minipage}
  \caption{Large-bandwidth collapse for a mean-matched pair. Left: closed-form
  population drift with slope near $-1$. Middle: mean population and
  empirical drift at two batch sizes; matching-color bands span the min--max
  range over the five probe seeds. Right: numerical $\smin(M^{(\tau)})$ and the
  population inverse bound $\norm{Mc}_2/\smin(M)$, with exact $\norm{c}_2$ as
  reference.}
  \label{fig:bandwidth}
\end{figure}

\subsection{Rank boundaries and structural collisions}\label{sec:exp-rank}

The final diagnostics separate floating-point ill-conditioning from exact
structural failure.  For general-position Gaussian means, the ambient-rank
study tests \cref{conj:minprobes}.  Matrices are column normalized, and
numerical rank uses NumPy's default tolerance
$\sigma_1\max\{dN,r\}\epsilon_{\rm mach}$, where
$\epsilon_{\rm mach}$ denotes float64 machine precision. A row with
$10^{-12}\le\sigma_\rho/\sigma_1\le10^{-8}$ and
$\rho=\min\{dN,r\}$ is declared ambiguous.  Of $290$ double-precision rows,
$250$ match the predicted rank, $37$ are ambiguous, and three fall below the
default tolerance.  High-precision outcomes are reported separately rather
than converted into exact-rank claims.  Recomputing all $40$ boundary rows at
$60$ and $100$ decimal digits yields a positive ratio in every case, from
$8.15\times10^{-18}$ to $8.81\times10^{-9}$; the largest relative change
between precisions is $1.05\times10^{-27}$.  This is strong numerical support,
not an exact-rank proof.

The full configuration-by-configuration breakdown is reported in
\cref{tab:rank-summary}. Thresholded double-precision rank is a numerical
diagnostic and does not establish exact rank.

Finally, \cref{fig:midpoint} reproduces the exact midpoint-collision mechanism
of \cref{ex:midpoint}.  For $\theta=(0,1,2,3)$, the witnesses for pairs
$(1,4)$ and $(2,3)$ are proportional to relative discrepancy
$2.38\times10^{-15}$ on the evaluation grid, and the population Gram matrix
has relative smallest eigenvalue $2.86\times10^{-17}$.  Moving the final
mean to $3.15$ breaks the collision and gives
$\gamma(\nu)=1.19\times10^{-8}$ for
$\nu=\mathcal N(0,4)$. As \cref{prop:midpoint-infeasible}
shows, the displayed formal null line is not itself a feasible coefficient
mismatch; the experiment diagnoses failure of the ambient inverse
certificate, not a pair of distinct indistinguishable distributions.

\begin{figure}[H]
  \centering
  \includegraphics[width=0.58\textwidth]{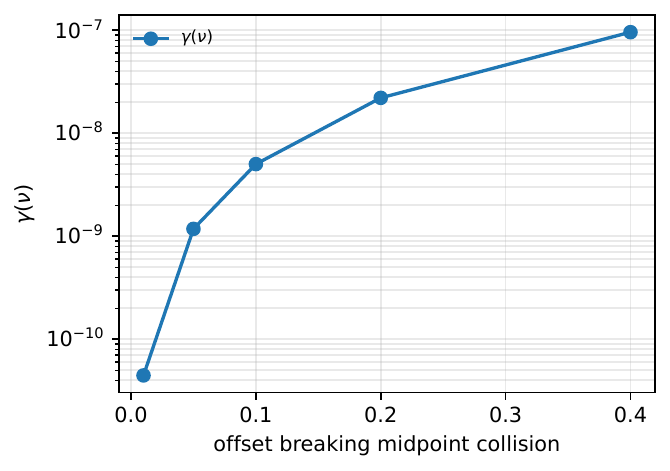}
  \caption{Midpoint collision and symmetry breaking. The symmetric basis has
  exact $\gamma(\nu)=0$ and a formal null direction for every probe set; zero
  offset is omitted from the logarithmic horizontal scale. Perturbing one
  mean separates the pair midpoints and restores a positive population
  observability scale.}
  \label{fig:midpoint}
\end{figure}

\paragraph{Reproducibility.}
The script \texttt{run\_operational\_experiments.py} regenerates
the calibration, held-out component study, original bounded-vector
end-to-end audit, and probe-law study,
their CSV files,
and all figures used in
\cref{fig:calibration-sw,fig:calibration-c,fig:operational-coverage,%
fig:probe-law-design}.
It fixes a master seed,
asserts all required matrix ranks, checks the global Gaussian envelope on
every generated interaction in the held-out component study, and records
every repetition.  The script
\texttt{run\_strengthening\_experiments.py} regenerates and validates the
Gaussian/Laplace audits and joint stress path underlying
\cref{fig:full-end-to-end,fig:basis-scaling-stress}; its procedures,
configurations, and stopping gates are frozen in
\texttt{strengthening\_protocol.json}.  The companion
\texttt{run\_retained\_diagnostics.py} regenerates the rows and figures
underlying \cref{fig:bandwidth,tab:rank-summary,fig:midpoint} from their
fixed configurations, while
\texttt{recompute\_rank\_boundaries\_high\_precision.py} reproduces all
$40$ boundary rows at $60$ and $100$ decimal digits.  The experimental code,
configurations, per-repetition outputs, and validation reports are available
in the accompanying public repository.

\section{Limitations and Discussion}\label{sec:discussion}

The practical lesson is that small held-out drift is informative only after
three quantities are declared: the finite model or residual class, a lower
confidence bound on probe observability, and a sampling-error budget.
\Cref{cor:operational} combines them into a total-variation upper
confidence bound and abstains when the calibrated observability margin is
nonpositive.  This is why the empirical plug-in in
\cref{sec:exp-calibration} cannot be interpreted as a certificate, even when
its denominator is computed from the exact population matrix.

The primary object controlled by the inverse step is $c$ (or $c_m$).  The
exact operator-norm identity underlying \cref{lem:bridge} converts this to
total variation within the declared basis; \cref{cor:fulltv} adds explicit
$L^1$ residuals for the full distributions.  Bounded IPMs, including
bounded-kernel MMD, inherit a conservative guarantee through
\cref{rem:ipm}.  No Wasserstein or KL bound follows from this TV argument
without additional support, moment, or density-ratio assumptions.  The
operational bound
also induces the level-$(\delta_V+\delta_M+\delta_R)$ TV-equivalence test of
\cref{cor:operational-equivalence}: a failure to certify remains
inconclusive rather than evidence that $p\ne q$.

Bandwidth and probe choice define the measurement system rather than acting
as ordinary tuning parameters.  The population Gram matrix predicts which
probe laws are informative, and \cref{sec:exp-probedesign} shows that its
smallest eigenvalue accurately predicts the large-$N$ singular-value scale
in the Gaussian benchmark.  The midpoint and large-bandwidth examples expose
two different failures.  A midpoint symmetry creates an ambient blind
direction for every probe set, although feasibility must still be checked;
a flat kernel instead collapses toward low-order moment comparison at
$O(1/\tau)$ on fixed probes under the stated higher-moment assumptions.
Poor but nonzero conditioning lies between
these extremes and yields valid yet weak bounds.
The end-to-end experiments sharpen this point: empirical-Bernstein radii
materially reduce slack without changing the certificate, but the tightest
Gaussian median remains about $5.5$ times the numerical TV reference.  The
joint stress path remains informative through $m=8$ only at the largest
tested batch size, illustrating rather than resolving the basis-size cost.

Several limitations are structural.  The guarantee is conditional on a
finite basis or a valid residual envelope, and on independence between the
design choices and the evaluation batch.  It applies directly to the
unnormalized numerator field; the implemented audit recomputes that numerator
from fresh samples.  Ratio-normalized fields need a calibrated joint
numerator--denominator event beyond \cref{rem:normalized}.  It does not cover an adaptive training
loss that reuses the same samples as probes and negatives.  Ambient full
column rank is sufficient but conservative because feasible mismatches form
a lower-dimensional algebraic set. For the probability-density-basis bound
used in the experiments, the TV conversion constant is $m/2$ and therefore
grows with the basis size, so finer resolution weakens the worst-case
per-unit bound.
The density formulation excludes singular generator pushforwards and
empirical measures.  Finally, the synthetic experiments
evaluate the audit mechanism and its predicted degeneracies; they do not
establish image-generation quality or universal behavior outside the
benchmark classes.

\paragraph{External residual radii.}
The drift evaluations do not determine $\rho_p$ or $\rho_q$.  For example,
if $\widetilde p=(1-\zeta_p)p_m+\zeta_p h_p$, where $p_m,h_p$ are probability
densities and $\zeta_p\in[0,1]$, and an external analysis certifies
$\norm{p-\widetilde p}_{L^1}\le\eta_p$, then the triangle inequality gives
$\norm{p-p_m}_{L^1}\le\eta_p+2\zeta_p$; hence
$\rho_p=\eta_p+2\zeta_p$ is valid, and analogously for $q$.  An in-sample fit
score alone gives no certificate, and any stochastic validation failure
must enter the audit's joint error budget.

Natural extensions include restricted observability over feasible
mismatches, confidence sequences for repeated audits, statistically valid
adaptive probe design, and residual envelopes for concrete approximation
spaces.  A target-aware refinement could propagate the existing coefficient
confidence region directly to the coefficient gap $a-b$, rather than first
controlling the full mismatch norm through the worst-direction scale
$\smin(M)$; constructing a calibrated version under estimated-operator
uncertainty is left for future work.  Neural optimization analyses could
also connect generator training dynamics to the conditioning of the field
observed by the audit.

\paragraph{Reproducibility statement.}
All theoretical claims are accompanied by proofs in the main text or
appendices.  Code and numerical outputs are available at
\url{https://github.com/SamAndersson-C/finite-drift-certification}.
The repository contains fixed-seed code for the calibration,
held-out component study, Gaussian/Laplace end-to-end audits, joint stress
path, and probe-law experiments, plus deterministic runners for the retained
bandwidth, rank, high-precision, and midpoint diagnostics.  It includes the
prespecified strengthening protocol, per-repetition CSV outputs, a validator
that reconstructs every bound reported from the E9/E10 strengthening
experiments, and vector figures.  Probe locations
are fixed before independent evaluation batches are drawn.  The experiments
use only synthetic distributions and require Python, NumPy, SciPy, pandas,
and Matplotlib, with mpmath for the high-precision rank diagnostic; no
external data or pretrained models are used.

\par\smallskip
\noindent{\footnotesize
\textbf{CRediT authorship contribution statement.}
\textbf{Sam Andersson:} Conceptualization (lead), Methodology (lead), Formal
analysis (lead), Investigation (lead), Software (lead), Validation (lead),
Visualization (lead), Project administration (lead), Writing -- original
draft (lead), Writing -- review \& editing.
\textbf{Ricky Molén:} Methodology (supporting; discussions concerning
drifting models and the identifiability ideas developed in the precursor
group project), Investigation (supporting; preliminary experiments conducted
within the precursor group project), Writing -- review \& editing.
\par}

\par\smallskip
\noindent{\footnotesize
\textbf{Declaration of generative AI use.}
During the preparation of this manuscript, the authors used large language
model tools for language editing, LaTeX troubleshooting, feedback on
exposition and organization, literature searches,
mathematical arguments, and assistance with implementing and
debugging code for the numerical experiments. The authors reviewed and edited
all AI-assisted material and independently verified all proofs,
code, and numerical results. They take full responsibility for the content of
the manuscript.
\par}

\bibliographystyle{tmlr}
\bibliography{references}

\appendix
\crefalias{section}{appendix}
\raggedbottom
\section{Deferred Algebraic Details}\label{app:algebra}

\subsection{Antisymmetry gives equilibrium}\label{app:antisym-equilibrium}

Assume the kernel antisymmetry of \cref{def:antisym} and absolute
integrability. Swapping the dummy variables in the field with reversed
arguments gives
\[
\begin{aligned}
  V_{q,p}(x)
  &=\int\!\!\int K(x,y^+,y^-)q(y^+)p(y^-)\dd y^+\dd y^-\\
  &=\int\!\!\int K(x,y^-,y^+)p(y^+)q(y^-)\dd y^+\dd y^-\\
  &=-V_{p,q}(x).
\end{aligned}
\]
If $p=q$, reversing the two labels does not change the field, so
$V_{p,p}(x)=-V_{p,p}(x)$. Hence $V_{p,p}(x)=0$ in the real vector space
$\R^d$.

\subsection{Antisymmetry of the pair responses}
\label{app:U-antisym}

Fix a probe $x_\ell$. By definition,
\[
  U_{ji}[:, \ell]
  = \int\!\!\int K(x_\ell, y^+, y^-)\, \phi_j(y^+) \phi_i(y^-)
    \dd y^+ \dd y^-.
\]
Swapping dummy integration variables gives
\[
  U_{ji}[:, \ell]
  = \int\!\!\int K(x_\ell, y^-, y^+)\, \phi_j(y^-) \phi_i(y^+)
    \dd y^+ \dd y^-.
\]
By kernel antisymmetry, $K(x_\ell, y^-, y^+) = -K(x_\ell, y^+, y^-)$.
Therefore
\[
  U_{ji}[:, \ell]
  = -\int\!\!\int K(x_\ell, y^+, y^-)\, \phi_i(y^+) \phi_j(y^-)
    \dd y^+ \dd y^-
  = -U_{ij}[:, \ell].
\]
Since this holds for every $\ell$, $U_{ji} = -U_{ij}$.

\subsection{From zero minors to proportional coefficients}
\label{app:minors}

Assume
\[
  a_i b_j - a_j b_i = 0 \quad \text{for every } 1 \le i < j \le m.
\]
Since $q$ is a probability density, $b \ne 0$. Choose $k$ such that
$b_k \ne 0$. For any $i$, the zero-minor relation involving $i$ and $k$
gives $a_i b_k = a_k b_i$. Dividing by $b_k$ gives
$a_i = (a_k / b_k)\, b_i$. Thus $a = \lambda b$, with
$\lambda = a_k / b_k$. The finite-basis expansions imply $p = \lambda q$.
Since both densities integrate to one, $\lambda = 1$, and therefore
$p = q$.

\subsection{Dimension of the decomposable mismatch set}
\label{app:grassmannian}

Ignoring positivity and normalization, the nonzero projective directions of
$c=a\wedge b$ form the Grassmannian $\operatorname{Gr}(2,m)$, of dimension
$2m-4$, inside an ambient projective space of dimension
$\binom{m}{2}-1$. Hence full column rank on all of $\R^{\binom m2}$ can be
substantially stronger than injectivity on the structured mismatch set, as
is typical in algebraic inverse problems
\citep{chandrasekaran2012geometry,breiding2023algebraic}.

\subsection{An \texorpdfstring{$L^1$}{L1}-residual bound}
\label{app:residual}

The residual $R_m$ of \eqref{eq:approx-decomp} can be bounded explicitly
under a simple bounded-kernel assumption. Recall the stacked interaction
vector $K_X$ from \eqref{eq:KX}.

\begin{proposition}[An $L^1$-residual bound]\label{prop:residual}
Assume (A0), (A3), and (A7), and let
$p_m,q_m,r_p,r_q\in L^1(\R^d)$. Thus
$\norm{K_X(y^+, y^-)}_2 \le B_X$ for every
$y^+,y^-\in\R^d$. Then all displayed residual integrals are absolutely
integrable, and the residual in \eqref{eq:approx-decomp} satisfies
\begin{equation}
  \norm{R_m}_2
  \le B_X \Big(
      \norm{r_p}_{L^1}\norm{q_m}_{L^1}
      +\norm{p_m}_{L^1}\norm{r_q}_{L^1}
      +\norm{r_p}_{L^1}\norm{r_q}_{L^1}
    \Big). \label{eq:residual-bound}
\end{equation}
\end{proposition}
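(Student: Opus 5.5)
The plan is to write the full sampled drift as a bilinear functional of the pair of densities and expand it around the finite parts. Define, for integrable signed functions $f,g$, the vector
\[
  \mathcal K(f,g):=\int\!\!\int K_X(y^+,y^-)\,f(y^+)g(y^-)\dd y^+\dd y^- \in\R^{dN}.
\]
By (A7) the integrand is bounded in norm by $B_X\abs{f(y^+)}\abs{g(y^-)}$. Tonelli therefore gives absolute integrability whenever $f,g\in L^1$, so $\mathcal K(f,g)$ is a well-defined Bochner integral. The same bound gives
\[
  \norm{\mathcal K(f,g)}_2\le B_X\norm{f}_{L^1}\norm{g}_{L^1},
\]
by the Bochner-norm inequality $\norm{\int h}_2\le\int\norm{h}_2$. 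Bilinearity of $\mathcal K$ follows from linearity of the integral, since every term is separately integrable.

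Next I will identify the pieces. By \eqref{eq:drift-int} and \eqref{eq:KX}, $\vecop(V_X)=\mathcal K(p,q)$. Substituting $p=p_m+r_p$ and $q=q_m+r_q$ and using bilinearity gives
\[
  \mathcal K(p,q)=\mathcal K(p_m,q_m)+\mathcal K(r_p,q_m)+\mathcal K(p_m,r_q)+\mathcal K(r_p,r_q).
\]
The first term equals $M_mc_m$ by the calculation of \cref{sec:observability}: expand $p_m,q_m$ in the basis, apply \eqref{eq:VX-sum}, and use the antisymmetry (A3) together with \cref{app:U-antisym} to group terms into \eqref{eq:obs}. Condition (A0) guarantees that each $U_{ij}$ integral is finite. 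Hence $R_m$ in \eqref{eq:approx-decomp} is exactly the sum of the three residual terms.

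Finally, I apply the triangle inequality and the bilinear bound to each of the three terms. This yields
\[
  B_X\bigl(\norm{r_p}_{L^1}\norm{q_m}_{L^1}+\norm{p_m}_{L^1}\norm{r_q}_{L^1}+\norm{r_p}_{L^1}\norm{r_q}_{L^1}\bigr),
\]
which is \eqref{eq:residual-bound}.

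The only delicate point is measure-theoretic. The definition of $R_m$ must be unambiguous, so I need to check that the bilinear splitting is legitimate. In particular, the residuals are signed and need not be densities. This is handled by requiring all four functions to be in $L^1$ and applying (A7) to the absolute values. Note that $p=p_m+r_p\in L^1$ then holds automatically, so $\vecop(V_X)$ itself is well defined. When $p_m,q_m$ are normalized densities, the bound specializes to $\overline R_m$ in \eqref{eq:Rbar}.
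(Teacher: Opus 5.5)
Your proposal is correct and follows essentially the same route as the paper: expand $pq$ into $p_mq_m$ plus the three residual cross terms, identify the $p_mq_m$ contribution with $M_mc_m$ by antisymmetric grouping, and bound each remaining term by $B_X$ times a product of $L^1$ norms. Packaging the integral as a bounded bilinear functional $\mathcal K$ only reorganizes the paper's triangle-inequality and factorization step, though it does make the absolute-integrability claim explicit through Tonelli.
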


\begin{proof}
Using $p = p_m + r_p$ and $q = q_m + r_q$, the product density expands as
\[
  p q = p_m q_m + r_p q_m + p_m r_q + r_p r_q.
\]
The term $p_m q_m$, after antisymmetric grouping, gives $M_m c_m$.
Therefore the residual vector is
\[
  R_m = \int\!\!\int K_X(y^+, y^-)
  \Big( r_p(y^+) q_m(y^-) + p_m(y^+) r_q(y^-) + r_p(y^+) r_q(y^-) \Big)
  \dd y^+ \dd y^-.
\]
Taking norms and applying the triangle inequality gives
\[
  \norm{R_m}_2
  \le \int\!\!\int \norm{K_X(y^+, y^-)}_2
  \Big( \abs{r_p(y^+) q_m(y^-)} + \abs{p_m(y^+) r_q(y^-)}
    + \abs{r_p(y^+) r_q(y^-)} \Big) \dd y^+ \dd y^-.
\]
Using $\norm{K_X(y^+, y^-)}_2 \le B_X$ and factorizing the product
integrals gives \eqref{eq:residual-bound}.
\end{proof}

\section{Deferred Proofs}\label{app:proofs}

This appendix collects routine concentration, perturbation-calibration, and
limit arguments whose statements are used in the main text.

\subsection{Gaussian-RBF interaction envelope}
\label{app:proof-rbf-envelope}

\begin{proof}[Proof of \cref{lem:rbf-envelope}]
Write $u=y^+-x$, $v=y^--x$, and $s=\norm{u-v}_2$. The parallelogram
identity gives $\norm{u}_2^2+\norm{v}_2^2\ge s^2/2$, so
\[
  \norm{K_\tau(x,y^+,y^-)}_2
  \le s\exp\!\left(-\frac{s^2}{2\tau}\right).
\]
The function on the right is maximized at $s=\sqrt\tau$ and has value
$\sqrt{\tau/e}$, proving \eqref{eq:rbf-envelope}. Stacking $N$ vectors and
summing their squared norms gives \eqref{eq:stacked-envelope}.
\end{proof}

\subsection{Laplace interaction envelope}
\label{app:proof-laplace-envelope}

\begin{proof}[Proof of \cref{lem:laplace-envelope}]
Let $s=\norm{y^+-y^-}_2$.  The triangle inequality gives
\[
  \norm{x-y^+}_2+\norm{x-y^-}_2\ge s.
\]
Therefore
\[
  \norm{K_\tau(x,y^+,y^-)}_2
  =s\exp\!\left(
     -\frac{\norm{x-y^+}_2+\norm{x-y^-}_2}{\tau}\right)
  \le s e^{-s/\tau}\le\frac{\tau}{e},
\]
where the last maximum is attained at $s=\tau$.  Stacking $N$ probe
vectors and summing their squared norms gives
$B_{N,\tau}^{\mathrm{Lap}}=\sqrt N\,\tau/e$.
\end{proof}

\subsection{Held-out drift radii}\label{app:radius-proofs}

\begin{proof}[Proof of \cref{lem:drift-radii}(a)]
Fix a coordinate $\rho\in\{1,\dots,D\}$. Since
$(\Xi_s)_\rho\in[-B_\infty,B_\infty]$, Hoeffding's inequality gives
\[
  \PP\big(\abs{\hat z_\rho-z_\rho}\ge t\big)
  \le 2\exp\!\left(-\frac{nt^2}{2B_\infty^2}\right).
\]
Choose $t=B_\infty\sqrt{2\log(2D/\delta)/n}$. The probability that
coordinate $\rho$ violates $\abs{\hat z_\rho-z_\rho}\le t$ is at most
$\delta/D$. A union bound over the $D$ coordinates and
$\norm{u}_2\le\sqrt D\norm{u}_\infty$ yield
\eqref{eq:hoeffding-bound}.
\end{proof}

\begin{proof}[Proof of \cref{lem:drift-radii}(b)]
Let $F:=\norm{\hat z-z}_2$. Independence and the vanishing of cross terms
give
\[
  \E F
  \le\sqrt{\E F^2}
  =\sqrt{\frac1n\E\norm{\Xi_1-z}_2^2}
  \le\frac{B_2}{\sqrt n}.
\]
Changing one input vector changes $F$ by at most $2B_2/n$.
McDiarmid's bounded-difference inequality
\citep{mcdiarmid1989bounded} therefore gives
\[
  \PP\{F-\E F\ge t\}
  \le\exp\!\left(-\frac{nt^2}{2B_2^2}\right).
\]
Taking $t=B_2\sqrt{2\log(1/\delta)/n}$ proves
\eqref{eq:vector-mean-bound}; substituting
$B_2=\sqrt{N\tau/e}$ gives \eqref{eq:epsV-vector}.
\end{proof}

\begin{proof}[Proof of \cref{lem:drift-radii}(c)]
Rescale one coordinate from $[-B_\infty,B_\infty]$ to $[0,1]$ and apply
the empirical Bernstein inequality of
\citet[Theorem~4]{maurer2009empirical} to the rescaled coordinate and its
complement in $[0,1]$. With failure probability $\delta/(2D)$ for each
one-sided inequality, rescaling back gives
$\abs{\hat z_\rho-z_\rho}\le r_\rho(\delta)$. A union bound over the $2D$
one-sided events and the Euclidean norm inequality yield
\eqref{eq:epsV-EB}.
\end{proof}

\subsection{Monte Carlo operator calibration}
\label{app:proof-M-calibration}

\begin{proof}[Proof of \cref{prop:M-calibration}]
The expectation of $W_{\alpha t}$ is the corresponding column $u_\alpha$ of
$M_m$. Apply \cref{lem:drift-radii}(b) to each column with failure
probability $\delta_M/r$ and take a union bound. On the resulting event,
\[
  \norm{\widehat M_m-M_m}_{\mathrm{op}}
  \le\norm{\widehat M_m-M_m}_F
  =\left(\sum_{\alpha\in P}
    \norm{\hat u_\alpha-u_\alpha}_2^2\right)^{1/2},
\]
which gives \eqref{eq:epsM-MC}.
\end{proof}

\subsection{Random-probe conditioning}\label{app:proof-randprobe}

\begin{proof}[Proof of \cref{thm:randprobe}]
By \eqref{eq:gramdecomp}, $M^\top M=\sum_\ell A_\ell$ with
$A_\ell:=G(X_\ell)^\top G(X_\ell)$ independent, identically distributed,
symmetric, and positive semidefinite. By (A9),
\[
  \lmax(A_\ell)=\norm{G(X_\ell)}_{\mathrm{op}}^2\le L^2
  \quad\text{a.s.}
\]
Identical distribution and linearity of expectation give
$\mu_{\min}:=\lmin(\E\sum_\ell A_\ell)=N\gamma(\nu)$. The matrix Chernoff
lower-tail inequality \citep[Theorem~5.1.1]{tropp2015matrix}, applied with
$R=L^2$, states
\[
  \PP\!\left(\lmin\!\left(\sum_\ell A_\ell\right)
    \le(1-\varepsilon)\mu_{\min}\right)
  \le r\left[
    \frac{e^{-\varepsilon}}
         {(1-\varepsilon)^{1-\varepsilon}}
  \right]^{\mu_{\min}/R}.
\]
The bracket is at most $e^{-\varepsilon^2/2}$ because
$-\varepsilon-(1-\varepsilon)\log(1-\varepsilon)\le-\varepsilon^2/2$
on $[0,1)$. Indeed,
$h(\varepsilon):=(1-\varepsilon)\log(1-\varepsilon)+\varepsilon
-\varepsilon^2/2$ satisfies $h(0)=0$ and
$h'(\varepsilon)=-\log(1-\varepsilon)-\varepsilon\ge0$.
This proves the first display of \cref{thm:randprobe}.

For the second statement, take $\varepsilon=1/2$. The failure probability
is at most
\[
  r\exp\!\left(-\frac{N\gamma(\nu)}{8L^2}\right)\le\delta'
\]
under the stated probe-count condition. On the complementary event,
\[
  \lmin(M^\top M)>N\gamma(\nu)/2>0.
\]
Hence $M$ has full column rank and
$\smin(M)=\lmin(M^\top M)^{1/2}\ge\sqrt{N\gamma(\nu)/2}$.
\end{proof}

\subsection{Large-bandwidth collapse}\label{app:collapse-proofs}

\begin{proof}[Proof of \cref{thm:collapse}]
For every fixed $x,y\in\R^d$,
$k_\tau(x,y)=\exp(-\norm{x-y}_2^2/\tau)\to1$ as $\tau\to\infty$.
Therefore
\[
  k_\tau(x,Y^+)k_\tau(x,Y^-)(Y^+-Y^-)
  \longrightarrow Y^+-Y^-
\]
almost surely. Also $0<k_\tau\le1$, so
\[
  \Norm{k_\tau(x,Y^+)k_\tau(x,Y^-)(Y^+-Y^-)}_2
  \le\norm{Y^+}_2+\norm{Y^-}_2.
\]
The right-hand side is integrable under the first-moment assumption.
Dominated convergence gives \eqref{eq:collapse}.
\end{proof}

\begin{proof}[Proof of \cref{cor:normalized-collapse}]
For either $Y\sim p$ or $Y\sim q$, $k_\tau(x,Y)\to1$ almost surely and
$0<k_\tau\le1$, so dominated convergence gives
$\E[k_\tau(x,Y)]\to1$. Independence therefore gives $Z_\tau(x)\to1$.
Combine this with \cref{thm:collapse}.
\end{proof}

\begin{proof}[Proof of \cref{lem:taurate}]
Write
$a:=\norm{x-Y^+}_2^2+\norm{x-Y^-}_2^2\ge0$. By
\eqref{eq:rbf}--\eqref{eq:tau-drift} and the definition of $\Delta\mu$,
\[
  V^{(\tau)}_{p,q}(x)-\Delta\mu
  =\E\big[(e^{-a/\tau}-1)(Y^+-Y^-)\big].
\]
For $u\ge0$, convexity gives $e^{-u}\ge1-u$ and monotonicity gives
$e^{-u}\le1$, hence $\abs{e^{-u}-1}\le u$. Applying this with
$u=a/\tau$ and Jensen's inequality for the Euclidean norm,
\[
  \Norm{V^{(\tau)}_{p,q}(x)-\Delta\mu}_2
  \le\E\big[\abs{e^{-a/\tau}-1}\norm{Y^+-Y^-}_2\big]
  \le\frac1\tau\E\big[a\norm{Y^+-Y^-}_2\big]
  =\frac{C(x,p,q)}{\tau}.
\]
For finiteness, use
$\norm{x-Y}_2^2\le2\norm{x}_2^2+2\norm{Y}_2^2$ and
$\norm{Y^+-Y^-}_2\le\norm{Y^+}_2+\norm{Y^-}_2$. Every resulting term is a
moment of order at most three in one variable or a mixed product of such
moments; mixed terms factor by independence. The assumed third moments
therefore make $C(x,p,q)$ finite.
\end{proof}

\begin{proof}[Proof of \cref{prop:laplace-collapse}]
Write
\[
  a_{\mathrm{Lap}}
  :=\norm{x-Y^+}_2+\norm{x-Y^-}_2.
\]
The Laplace product in the interaction equals
$e^{-a_{\mathrm{Lap}}/\tau}$.  It converges almost surely to one and is
bounded by one, so the dominated-convergence proof of
\cref{thm:collapse} applies unchanged under first moments.  Likewise, each
normalizing expectation converges to one, proving the ratio-normalized
claim.

For the rate, $\abs{e^{-u}-1}\le u$ for $u\ge0$ gives
\[
  \Norm{V^{(\tau),\mathrm{Lap}}_{p,q}(x)-\Delta\mu}_2
  \le\frac1\tau\E\!\left[
    a_{\mathrm{Lap}}\norm{Y^+-Y^-}_2\right].
\]
The displayed expectation is finite under second moments: use
$\norm{x-Y}_2\le\norm{x}_2+\norm{Y}_2$,
$\norm{Y^+-Y^-}_2\le\norm{Y^+}_2+\norm{Y^-}_2$, and independence for the
mixed products.
\end{proof}

\section{Failure Modes}\label{app:failures}

This appendix records an exact, structural failure mode --- a basis
symmetry under which a formal mismatch direction is invisible to the
sampled drift for \emph{every} probe set of \emph{every} size --- and
collects the paper's assumptions and failure modes in \cref{tab:failures}. The example
should be contrasted with the conditioning failures of
\cref{sec:observability,sec:certification} (small but nonzero $\smin$) and the
kernel-degeneracy failure of \cref{sec:collapse} (large bandwidth): here
the deficiency is exact and no probe design repairs it.

\begin{example}[Midpoint collision]\label{ex:midpoint}
Work in $d = 1$ with the Gaussian benchmark model of
\cref{app:randprobes}: basis densities
$\phi_i = \mathcal{N}(\theta_i, \sigma^2)$ with equally spaced means
$\theta = (0, 1, 2, 3)$, Gaussian-RBF kernel \eqref{eq:rbf}, and the
mean-shift interaction \eqref{eq:meanshift}. The pairs $(1, 4)$ and
$(2, 3)$ share the midpoint $m_{14} = m_{23} = 3/2$. By the closed form of
\cref{lem:gaussform},
\[
  g_{14}(x) = c^{\mathrm{gs}}_{14}\, \varphi_{v/2}(x - \tfrac{3}{2})
    (\theta_1 - \theta_4),
  \qquad
  g_{23}(x) = c^{\mathrm{gs}}_{23}\, \varphi_{v/2}(x - \tfrac{3}{2})
    (\theta_2 - \theta_3),
\]
with $c^{\mathrm{gs}}_{14}, c^{\mathrm{gs}}_{23} > 0$, so $g_{14}$ and
$g_{23}$ are proportional \emph{as functions} on $\R$. Consequently there
exists $\lambda \ne 0$ (supported on the coordinates $(1, 4)$ and
$(2, 3)$) with $g_\lambda \equiv 0$, hence $M \lambda = 0$
deterministically for every probe set, $\Gamma(\nu)$ is singular for
every probe law $\nu$, and by \cref{lem:gram}(b) the formal measurement
direction $\lambda$ is invisible. This shows a structural degeneracy of
the kernel/probe instrument. The next proposition shows that this particular
structural null direction is not itself the mismatch vector of a valid pair
of distributions.
\end{example}

\begin{proposition}[The midpoint-collision null line is infeasible]
\label{prop:midpoint-infeasible}
In \cref{ex:midpoint}, let $\lambda$ be the structural null vector
supported on $(1,4)$ and $(2,3)$ with $g_\lambda \equiv 0$. Then no
nonzero multiple of $\lambda$ can be written as $a \wedge b$ for any
$a,b \in \R^4$. Consequently, for any probe set whose nullspace is exactly
the structural line $\operatorname{span}\{\lambda\}$, the midpoint
collision destroys the ambient $\smin(M)$ certificate but does not itself
yield valid $p\ne q$ with zero sampled drift.
\end{proposition}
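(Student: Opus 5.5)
The plan is to use the Plücker relations that characterize decomposable bivectors. For $m=4$, a vector $c\in\R^{6}$ with coordinates $c_{ij}$, $i<j$, equals $a\wedge b$ for some $a,b\in\R^4$ if and only if it satisfies the single quadratic Plücker relation
\[
  c_{12}c_{34}-c_{13}c_{24}+c_{14}c_{23}=0 .
\]
Only the necessity direction is needed. It follows by direct expansion with $c_{ij}=a_ib_j-a_jb_i$; equivalently, the skew matrix $A=ab^\top-ba^\top$ from the proof of \cref{lem:bridge} has rank at most two, so its Pfaffian (which is the displayed expression) vanishes. I will verify this identity explicitly, since it is a finite polynomial computation.

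Next I will identify $\lambda$ concretely. By \cref{ex:midpoint}, $g_{14}=c^{\mathrm{gs}}_{14}\varphi_{v/2}(\cdot-\tfrac32)(\theta_1-\theta_4)$ and $g_{23}=c^{\mathrm{gs}}_{23}\varphi_{v/2}(\cdot-\tfrac32)(\theta_2-\theta_3)$. Both are nonzero functions, since $c^{\mathrm{gs}}>0$, $\theta_1-\theta_4=-3$, $\theta_2-\theta_3=-1$, and the Gaussian factor is positive. The null vector therefore has $\lambda_{14}\ne0$ and $\lambda_{23}\ne0$: explicitly, $\lambda_{14}=c^{\mathrm{gs}}_{23}(\theta_2-\theta_3)$ and $\lambda_{23}=-c^{\mathrm{gs}}_{14}(\theta_1-\theta_4)$, up to scale. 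All other coordinates are zero.

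I will also address how \enquote{the structural null vector} is interpreted. If some other null vector $\lambda$ were supported on these two coordinates, the relation $g_\lambda\equiv0$ would force a nontrivial linear combination of $g_{14}$ and $g_{23}$ to vanish. Each function is individually nonzero, so neither coordinate can be zero, and $\lambda$ must lie on this line.

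Applying the Plücker relation to $t\lambda$ with $t\ne0$ gives $t^2(0\cdot0-0\cdot0+\lambda_{14}\lambda_{23})=t^2\lambda_{14}\lambda_{23}\ne0$. Hence $t\lambda$ is not decomposable, so it is not of the form $a\wedge b$.

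For the consequence, suppose the probe set satisfies $\ker M=\operatorname{span}\{\lambda\}$. Zero sampled drift means $Mc=0$ by \eqref{eq:obs}, so $c=t\lambda$ for some $t$. Feasibility requires $c=a\wedge b$, which forces $t=0$ and hence $c=0$. The argument of \cref{thm:exact} (vanishing minors, then normalization) then yields $p=q$. Meanwhile, $\smin(M)=0$ because the kernel is nontrivial, so the ambient certificate of \cref{prop:stability} and \cref{thm:endtoend} is unavailable.

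The only real step is confirming the Plücker identity and checking that $\lambda$ has both coordinates nonzero; I do not expect any obstacle beyond careful sign bookkeeping.
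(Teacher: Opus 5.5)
Your proposal is correct and follows the paper's proof essentially step for step. You read off from the closed-form witnesses a null vector with $\lambda_{14}\lambda_{23}\neq 0$, and then note that the Pl\"ucker relation $c_{12}c_{34}-c_{13}c_{24}+c_{14}c_{23}=0$ evaluated at $t\lambda$ reduces to $t^2\lambda_{14}\lambda_{23}=0$. Your Pfaffian justification of that relation (via $\rk(ab^\top-ba^\top)\le 2$) and your explicit chain for the consequence ($Mc=0\Rightarrow c=t\lambda\Rightarrow t=0\Rightarrow p=q$ as in \cref{thm:exact}) are sound, and simply spell out what the paper states more briefly.
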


\begin{proof}
The dependence relation between $g_{14}$ and $g_{23}$ has both coordinates
nonzero. Indeed, using $\theta=(0,1,2,3)$,
\[
  g_{14}(x) = -3 c^{\mathrm{gs}}_{14}
      \varphi_{v/2}(x-3/2),
  \qquad
  g_{23}(x) = -c^{\mathrm{gs}}_{23}
      \varphi_{v/2}(x-3/2),
\]
with $c^{\mathrm{gs}}_{14},c^{\mathrm{gs}}_{23}>0$. Thus a null vector may be
chosen with $\lambda_{14}=1$ and
$\lambda_{23}=-3c^{\mathrm{gs}}_{14}/c^{\mathrm{gs}}_{23}$, so
$\lambda_{14}\lambda_{23}\ne0$.

Now suppose, toward a contradiction, that $t\lambda=a\wedge b$ for some
$t\ne0$ and some $a,b\in\R^4$. The Pl\"ucker relation for a decomposable
$2$-vector in $\Lambda^2\R^4$ is
\[
  c_{12}c_{34}-c_{13}c_{24}+c_{14}c_{23}=0.
\]
For $c=t\lambda$, all coordinates except $(1,4)$ and $(2,3)$ vanish, so the
relation reduces to
\[
  t^2\lambda_{14}\lambda_{23}=0,
\]
contradicting $t\ne0$ and $\lambda_{14}\lambda_{23}\ne0$. Therefore the
structural null line contains no nonzero decomposable mismatch. Since the
feasible mismatch set is contained in the decomposable cone, it contains no
nonzero vector on this null line, proving the stated zero-drift claim.
\end{proof}

The failure is created by the symmetry
$\theta_1 + \theta_4 = \theta_2 + \theta_3$ of the basis means.
\Cref{prop:nondegen} shows that pairwise-distinct midpoints rule out this
obstruction and suffice for linear independence of all pair witnesses; it
does not assert a full converse for every collision geometry.
Near-collisions ($m_{14} \approx m_{23}$) produce the
quantitative analogue: $\gamma(\nu)$ positive but small, with
correspondingly weak certificates (\cref{rem:scaling}).

\begin{table}[H]
\centering
\small
\begin{tabular}{L{3.1cm}L{4.6cm}L{5.4cm}}
\toprule
Assumption or design choice & Role in the analysis & Consequence of violation \\
\midrule
Antisymmetry &
Ensures $p = q \Rightarrow V_{p,q} = 0$ and produces the antisymmetric
coordinates $c_{ij}$. &
Equality may not be an equilibrium, or the reduction to $c_{ij}$ may
fail. \\
Finite-basis representation &
Turns drift identifiability into finite-dimensional linear algebra. &
Finitely many probes need not determine arbitrary distributions. \\
Full column rank of $M$ &
Makes $c \mapsto Mc$ injective on the full formal mismatch space. &
Formal null directions can occur; actual ambiguity requires intersection
with the feasible mismatch set. \\
Well-conditioned $M$ &
Makes approximate zero drift imply small mismatch. &
If $\smin(M)$ is small, small drift can hide large $c$. \\
Distinct means and pair midpoints (Gaussian basis), with
$\nu\ll\Leb^d$ &
Suffice for witness independence and hence $\gamma(\nu) > 0$ for an
absolutely continuous probe law (\cref{prop:nondegen}). &
Certain midpoint collisions with dependent direction vectors make
$\Gamma(\nu)$ singular; actual non-identifiability also requires feasible
coefficients in a resulting null direction (\cref{ex:midpoint},
\cref{sec:feasible}); a discrete probe law can also be rank deficient. \\
Accurate empirical drift &
Connects observed minibatch drift to population drift. &
Sampling noise can create false certificates. \\
Accurate estimated $M$ &
Needed when $M$ is computed by approximation. &
Singular values can be overestimated, making certificates too
optimistic. \\
Good basis approximation &
Needed to transfer finite-basis statements to general densities. &
Residual error $R_m$ may dominate the certificate. \\
Nondegenerate bandwidth &
Keeps the kernel/probe system informative. &
Under finite third moments for the Gaussian kernel, or finite second moments
for the Laplace kernel, large bandwidth collapses toward moment matching at
$O(1/\tau)$ on fixed finite probe sets
(\cref{lem:taurate,prop:laplace-collapse}); small
bandwidth can be noisy or too local. \\
\bottomrule
\end{tabular}
\caption{Main assumptions and failure modes for drift-based
certification.}
\label{tab:failures}
\end{table}

\section{Randomized Probes: Exact Rank and Verification for the Gaussian
Model}\label{app:randprobes}

This appendix complements \cref{sec:randprobe} in two directions. First,
it shows that under real-analyticity, i.i.d.\ probes from any absolutely
continuous law achieve, almost surely, the best rank attainable by
\emph{any} deterministic probe design of the same size
(\cref{prop:analytic}). Second, it verifies the hypotheses ---
analyticity and the population observability condition (A10) --- for the
paper's concrete benchmark model
(\cref{lem:gaussform,prop:nondegen}), so that \cref{thm:randprobe}
applies there with analytically verified hypotheses and an explicit (A9)
bound. The value of $\gamma(\nu)$ still depends on the chosen probe law.

\begin{lemma}[Greedy probe existence]\label{lem:greedy}
If $\{g_\alpha\}_{\alpha \in P}$ are linearly independent as functions on
$\R^d$, then there exist $N_0 \le r$ points $x_1, \dots, x_{N_0}$ such
that $M(x_1, \dots, x_{N_0})$ has full column rank $r$.
\end{lemma}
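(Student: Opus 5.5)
We build the probe list greedily, one point at a time, so that each new point strictly increases the rank of the observation matrix. By \eqref{eq:gramdecomp}, the nullspace of $M(x_1,\dots,x_k)$ is
\[
  \mathcal N_k:=\bigcap_{\ell\le k}\ker G(x_\ell)
  =\{\lambda\in\R^r: g_\lambda(x_\ell)=0,\ \ell=1,\dots,k\},
\]
with $g_\lambda=\sum_\alpha\lambda_\alpha g_\alpha$ as in the proof of \cref{lem:gram}. Start from $\mathcal N_0=\R^r$, which has dimension $r$. The goal is to choose points so that $\dim\mathcal N_k$ drops by at least one at every step.

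For the inductive step, suppose $\mathcal N_k\ne\{0\}$ and pick a nonzero $\lambda\in\mathcal N_k$. By the assumed linear independence of $\{g_\alpha\}$ as functions on $\R^d$, $g_\lambda$ is not identically zero. So there is a point $x_{k+1}$ with $g_\lambda(x_{k+1})\ne0$, i.e.\ $G(x_{k+1})\lambda\ne0$. Then $\lambda\notin\mathcal N_{k+1}=\mathcal N_k\cap\ker G(x_{k+1})$. Hence $\mathcal N_{k+1}$ is a proper subspace of $\mathcal N_k$, and $\dim\mathcal N_{k+1}\le\dim\mathcal N_k-1$.

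Since the dimension starts at $r$ and strictly decreases while positive, the process reaches $\mathcal N_{N_0}=\{0\}$ after some $N_0\le r$ steps. At that point $M(x_1,\dots,x_{N_0})$ is injective, so it has full column rank $r$ (note $dN_0\ge r$ then holds automatically). A single new probe can reduce the dimension by up to $d$, which is why $N_0$ may be smaller than $r$.

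The only delicate point is the meaning of linear independence. The lemma must use independence as functions, meaning pointwise on all of $\R^d$, rather than $\nu$-a.e. If the witnesses are only defined almost everywhere, we must choose $x_{k+1}$ from the set where every $g_\alpha$ is defined and $g_\lambda\ne0$. That set is nonempty precisely because $g_\lambda\not\equiv0$ on the domain of definition. Beyond this, the argument is routine linear algebra.
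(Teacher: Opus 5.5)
Your proposal is correct and follows the paper's own proof: the same greedy construction on the nullspace $\{\lambda : g_\lambda(x_\ell)=0,\ \ell\le k\}$, with linear independence guaranteeing a point where a nonzero $g_\lambda$ is nonzero, so the dimension strictly decreases and full column rank is reached within $r$ steps. Your closing remarks on the $d$-dimensional drop per probe and on a.e.-defined witnesses are harmless additions the paper does not need.
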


\begin{proof}
Let $\mathcal{V}_k := \{\lambda \in \R^r : g_\lambda(x_\ell) = 0,\
\ell = 1, \dots, k\}$ denote the nullspace of the current observation
matrix, with $\mathcal{V}_0 = \R^r$. If $\dim \mathcal{V}_k \ge 1$,
choose $\lambda \in \mathcal{V}_k \setminus \{0\}$; by linear
independence $g_\lambda \not\equiv 0$, so some $x_{k+1}$ has
$g_\lambda(x_{k+1}) \ne 0$, whence
$\lambda \notin \mathcal{V}_{k+1} \subseteq \mathcal{V}_k$ and the
dimension strictly decreases. After at most $r$ steps
$\mathcal{V}_{N_0} = \{0\}$, i.e.\ full column rank.
\end{proof}

\begin{proposition}[Random probes achieve the maximal rank]
\label{prop:analytic}
Assume each $g_\alpha$ is real-analytic on $\R^d$, the family
$\{g_\alpha\}_{\alpha \in P}$ is linearly independent as functions, and
$\nu$ is absolutely continuous with respect to Lebesgue measure. Let
$X_1, \dots, X_N \overset{\mathrm{iid}}{\sim} \nu$, and define
\[
  \rho_N := \max_{(x_1,\dots,x_N)\in\R^{dN}}
    \rk M(x_1,\dots,x_N).
\]
Then:
\begin{itemize}
\item[(a)] $\rk M(X_1, \dots, X_N) = \rho_N$ almost surely;
\item[(b)] $\rho_N$ is nondecreasing in $N$ and $\rho_N = r$ for all
$N \ge N_0$ with $N_0 \le r$ as in \cref{lem:greedy}; in particular, for
$N \ge r$, $M$ has full column rank almost surely.
\end{itemize}
\end{proposition}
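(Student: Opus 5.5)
The plan rests on one standard fact: a real-analytic function on $\R^{dN}$ that is not identically zero vanishes only on a Lebesgue-null set. Fix $N$. Let $\rho=\rho_N$ and choose a maximizing configuration $x^\star=(x_1^\star,\dots,x_N^\star)$ with $\rk M(x^\star)=\rho$.

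Some $\rho\times\rho$ minor of $M(x^\star)$ is nonzero. Consider the same minor as a function
\[
  \Delta(x_1,\dots,x_N)=\det\bigl(M(x_1,\dots,x_N)_{I,J}\bigr).
\]
Each entry of $M$ is a coordinate of some $g_\alpha(x_\ell)$. These entries are real-analytic in $(x_1,\dots,x_N)\in\R^{dN}$, since each depends analytically on a single block $x_\ell$. Hence $\Delta$ is a polynomial in analytic functions and is itself real-analytic on $\R^{dN}$. Because $\Delta(x^\star)\neq0$, it is not identically zero.

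By the zero-set fact, $Z=\{\Delta=0\}$ has Lebesgue measure zero in $\R^{dN}$. The joint law of $(X_1,\dots,X_N)$ is $\nu^{\otimes N}$, which is absolutely continuous with respect to $\Leb^{dN}$ because $\nu\ll\Leb^d$. Therefore $\PP(\Delta(X)=0)=0$, and almost surely $\rk M(X)\ge\rho$. The reverse inequality $\rk M(X)\le\rho_N$ holds by the definition of $\rho_N$. This proves (a).

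For (b), monotonicity follows from appending probes. Given any $N$-configuration, add an arbitrary $(N+1)$-st probe. This adds rows to $M$ without removing any, so the rank cannot drop, and $\rho_{N+1}\ge\rho_N$. Linear independence of $\{g_\alpha\}$ lets us invoke \cref{lem:greedy}, which gives $N_0\le r$ points with $\rk M=r$. Hence $\rho_{N_0}=r$, and by monotonicity together with the trivial bound $\rho_N\le r$ (there are only $r$ columns), $\rho_N=r$ for all $N\ge N_0$. Combining with (a) gives full column rank almost surely whenever $N\ge r\ge N_0$.

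The main obstacle is justifying the zero-set fact rigorously in $\R^{dN}$. The plan is induction on dimension using Fubini. In one variable, a nonzero analytic function has isolated zeros, so its zero set is countable. For the inductive step, fix the remaining coordinates, apply the one-variable statement on each line, and check that the exceptional set of lines on which the function vanishes identically is itself null. The other detail to verify is that joint analyticity of $\Delta$ in all $dN$ coordinates follows from analyticity of each $g_\alpha$ in its own argument. This holds because compositions with coordinate projections, products and sums of analytic functions remain analytic.
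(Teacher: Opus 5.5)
Your proof is correct and follows the paper's argument. The only differences are minor. You fix one $\rho_N\times\rho_N$ minor that is nonzero at a maximizing configuration, whereas the paper uses the sum of squares $F_k$ of all $k\times k$ minors, so that $\rk M\ge k\iff F_k>0$. The paper also cites the null-zero-set theorem for real-analytic functions rather than sketching your Fubini induction. Otherwise the steps coincide: joint analyticity of the entries, $\nu^{\otimes N}\ll\Leb^{dN}$, and for (b) row-appending monotonicity combined with \cref{lem:greedy}.
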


\begin{proof}
(a) Fix $N$ and set $k := \rho_N$. Let $F_k(x_1, \dots, x_N)$ be the sum
of squares of all $k \times k$ minors of $M$. Each minor is a polynomial
in the entries of $M$; each entry is a coordinate of some
$g_\alpha(x_\ell)$; and sums, products, and compositions of real-analytic
functions are real-analytic. Hence $F_k$ is real-analytic on $\R^{dN}$.
By the definition of rank, $\rk(M) \ge k \iff F_k > 0$; since some
configuration attains rank $\rho_N = k$, $F_k \not\equiv 0$. The zero set
of a real-analytic function on $\R^{dN}$ that is not identically zero has
Lebesgue measure zero \citep{mityagin2015zero}. Since
$\nu \ll \Leb^d$ implies $\nu^{\otimes N} \ll \Leb^{dN}$, the event
$\{F_k = 0\}$ has probability zero, so almost surely
$\rk(M) \ge \rho_N$; the reverse inequality holds surely by maximality of
$\rho_N$.

(b) Appending a probe appends rows to $M$, which cannot decrease the
rank, so $\rho_N$ is nondecreasing; padding the configuration of
\cref{lem:greedy} with arbitrary extra points shows $\rho_N = r$ for
$N \ge N_0 \le r$.
\end{proof}

We now verify the hypotheses for the Gaussian benchmark model: mean-shift
interaction \eqref{eq:meanshift}, Gaussian-RBF similarity \eqref{eq:rbf},
and Gaussian-mixture basis. Throughout, $\varphi_v$ denotes the
$\mathcal{N}(0, v I_d)$ density and $m_{ij} := (\theta_i + \theta_j)/2$.

\begin{lemma}[Closed form and analyticity]\label{lem:gaussform}
Let $\phi_i$ be the $\mathcal{N}(\theta_i, \sigma^2 I_d)$ density,
$i = 1, \dots, m$. Set $v := \sigma^2 + \tau/2$ and
$\kappa := \tau / (2\sigma^2 + \tau) \in (0, 1)$. Then for all
$(i, j) \in P$ and all $x \in \R^d$,
\[
  g_{ij}(x)
  = c^{\mathrm{gs}}_{ij}\; \varphi_{v/2}(x - m_{ij})\;
    (\theta_i - \theta_j),
  \qquad
  c^{\mathrm{gs}}_{ij}
  := \kappa\, (\pi\tau)^d\, \varphi_{2v}(\theta_i - \theta_j) > 0,
\]
and each $g_{ij}$ is real-analytic on $\R^d$.
\end{lemma}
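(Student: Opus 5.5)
The plan is to compute the double integral defining $g_{ij}$ in \eqref{eq:pairwitness} directly, using Gaussian product identities. The key observation is that the interaction \eqref{eq:meanshift} factorizes across $y^+$ and $y^-$ apart from the linear factor $y^+-y^-$. Writing
\[
  g_{ij}(x)=\Big(\int k_\tau(x,y)\phi_i(y)\,y\dd y\Big)\Big(\int k_\tau(x,z)\phi_j(z)\dd z\Big)
  -\Big(\int k_\tau(x,y)\phi_i(y)\dd y\Big)\Big(\int k_\tau(x,z)\phi_j(z)\,z\dd z\Big),
\]
the problem reduces to computing the zeroth and first moments of $k_\tau(x,\cdot)\phi_i$ for a single index $i$. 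Absolute integrability, and hence Fubini, is immediate from the global envelope of \cref{lem:rbf-envelope}, since $\phi_i\in L^1$.

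\emph{Step 1 (single-factor moments).} Rewrite the kernel as a Gaussian density, $k_\tau(x,y)=(\pi\tau)^{d/2}\varphi_{\tau/2}(x-y)$. Then apply the standard product-of-Gaussians identity
\[
  \varphi_{\tau/2}(x-y)\,\varphi_{\sigma^2}(y-\theta_i)=\varphi_{v}(x-\theta_i)\,\varphi_{s}(y-\mu_i(x)),
\]
where $v=\sigma^2+\tau/2$, $s=\sigma^2\tau/(2v)$, and
\[
  \mu_i(x)=\frac{(\tau/2)\theta_i+\sigma^2x}{v}.
\]
This identity is obtained by completing the square in $y$. It gives the zeroth moment $(\pi\tau)^{d/2}\varphi_v(x-\theta_i)$ and the first moment $(\pi\tau)^{d/2}\varphi_v(x-\theta_i)\,\mu_i(x)$.

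\emph{Step 2 (assemble).} Substituting into the display above gives
\[
  g_{ij}(x)=(\pi\tau)^d\,\varphi_v(x-\theta_i)\,\varphi_v(x-\theta_j)\,\big(\mu_i(x)-\mu_j(x)\big).
\]
The $x$-dependence in $\mu_i-\mu_j$ cancels, leaving
\[
  \mu_i(x)-\mu_j(x)=\frac{\tau/2}{v}(\theta_i-\theta_j)=\kappa(\theta_i-\theta_j),
\]
and indeed $\kappa=\tau/(2\sigma^2+\tau)\in(0,1)$. Next, complete the square once more, now in $x$:
\[
  \varphi_v(x-\theta_i)\,\varphi_v(x-\theta_j)=\varphi_{2v}(\theta_i-\theta_j)\,\varphi_{v/2}(x-m_{ij}).
\]
This yields the stated closed form with $c^{\mathrm{gs}}_{ij}=\kappa(\pi\tau)^d\varphi_{2v}(\theta_i-\theta_j)>0$. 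Positivity holds because each factor is strictly positive.

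\emph{Step 3 (analyticity).} Each $g_{ij}$ is a constant vector times $\exp$ of a quadratic polynomial in $x$. It is therefore real-analytic on $\R^d$, being a composition of entire functions.

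The only real obstacle is bookkeeping of normalizing constants in the two completions of the square. To control it, I will verify the product identities by matching exponents and checking that integrating both sides over $y$ (respectively over $x$) agrees. The latter check is the Gaussian convolution identity $\varphi_a*\varphi_b=\varphi_{a+b}$.
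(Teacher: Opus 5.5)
Your proposal is correct and follows essentially the same route as the paper. Both factor $g_{ij}=m_is_j-s_im_j$ into tilted zeroth and first moments, observe that the tilted means differ by the constant $\kappa(\theta_i-\theta_j)$, and finish with the equal-variance Gaussian product identity and analyticity of $\exp$ of a quadratic.

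One small point of justification: the envelope of \cref{lem:rbf-envelope} only gives integrability of the full integrand. Splitting $y^+-y^-$ into two product terms needs each term to be integrable separately. That follows from $k_\tau\le1$ and the finite first moment of $\phi_i$, as the paper notes, or directly from your Step 1 identity.
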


\begin{proof}
\emph{Step 1 (factorization of the witness).}
Substituting \eqref{eq:meanshift} into \eqref{eq:pairwitness} and
separating the product measure (Fubini; the integrand is absolutely
integrable since $k_\tau \le 1$ and each $\phi_i$ is a Gaussian density
with finite first moment),
\[
  g_{ij}(x) = m_i(x)\, s_j(x) - s_i(x)\, m_j(x),
  \quad
  s_i(x) := \!\int\! k_\tau(x, y) \phi_i(y) \dd y,
  \quad
  m_i(x) := \!\int\! k_\tau(x, y)\, y\, \phi_i(y) \dd y.
\]

\emph{Step 2 (tilted mean).}
The density proportional to $k_\tau(x, y) \phi_i(y)$ has $y$-exponent
$-\norm{y}_2^2 (1/\tau + 1/(2\sigma^2)) + \ip{y}{2x/\tau +
\theta_i/\sigma^2} + \text{const}$, a nondegenerate quadratic; completing
the square, it is Gaussian with mean
$\mu_i(x) = \big( 2x/\tau + \theta_i/\sigma^2 \big) \big/
\big( 2/\tau + 1/\sigma^2 \big)$. Hence $m_i = s_i\, \mu_i$ and
\[
  \mu_i(x) - \mu_j(x)
  = \frac{(\theta_i - \theta_j)/\sigma^2}{2/\tau + 1/\sigma^2}
  = \kappa\, (\theta_i - \theta_j),
\]
a constant vector, so
$g_{ij}(x) = s_i(x)\, s_j(x)\, (\mu_i(x) - \mu_j(x))
= \kappa\, s_i(x)\, s_j(x)\, (\theta_i - \theta_j)$.

\emph{Step 3 (Gaussian convolution and product).}
Writing $e^{-\norm{u}_2^2/\tau} = (\pi\tau)^{d/2} \varphi_{\tau/2}(u)$
(match of normalizing constants: $(2\pi \cdot \tau/2)^{d/2} =
(\pi\tau)^{d/2}$), convolution of Gaussians gives
$s_i(x) = (\pi\tau)^{d/2}\, (\varphi_{\tau/2} * \varphi_{\sigma^2})(x -
\theta_i) = (\pi\tau)^{d/2} \varphi_v(x - \theta_i)$ with
$v = \sigma^2 + \tau/2$ (variances add under convolution). The
equal-covariance Gaussian product identity
\[
  \varphi_v(x - \theta_i)\, \varphi_v(x - \theta_j)
  = \varphi_{2v}(\theta_i - \theta_j)\; \varphi_{v/2}(x - m_{ij}),
\]
verified by completing the square in the exponent
(coordinatewise, $\tfrac{(x - \theta_i)^2 + (x - \theta_j)^2}{2v}
= \tfrac{(x - m_{ij})^2}{v} + \tfrac{(\theta_i - \theta_j)^2}{4v}$) and
matching normalizers ($(2\pi v)^{-d} = (4\pi v)^{-d/2} (\pi v)^{-d/2}$),
yields the claimed closed form.

\emph{Step 4 (analyticity).}
$\varphi_{v/2}(x - m_{ij})$ is the composition of the entire function
$\exp$ with a polynomial, hence real-analytic; multiplication by the
constant vector $c^{\mathrm{gs}}_{ij} (\theta_i - \theta_j)$ preserves
analyticity.
\end{proof}

\begin{proposition}[Population observability for Gaussian-mixture bases]
\label{prop:nondegen}
In the setting of \cref{lem:gaussform}, assume the means
$\theta_1, \dots, \theta_m$ are pairwise distinct and the midpoints
$\{m_{ij}\}_{(i,j) \in P}$ are pairwise distinct. Then
$\{g_{ij}\}_{(i,j) \in P}$ are linearly independent as functions on
$\R^d$; consequently $\gamma(\nu) > 0$ for \emph{every} probe law
$\nu \ll \Leb^d$, i.e.\ (A10) holds, and \cref{prop:analytic} applies.
Moreover (A9) holds with
$L^2 = \sum_{(i,j) \in P} (c^{\mathrm{gs}}_{ij})^2\,
\varphi_{v/2}(0)^2\, \norm{\theta_i - \theta_j}_2^2$.
\end{proposition}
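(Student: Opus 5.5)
The plan has three parts: prove linear independence of the pair witnesses via the closed form of \cref{lem:gaussform}, transfer it to $L^2(\nu)$ through analyticity, and bound the operator norm by the Frobenius norm.

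\emph{Linear independence.} Suppose $\sum_{(i,j)\in P}\lambda_{ij}g_{ij}\equiv0$ on $\R^d$. By \cref{lem:gaussform} this reads
\[
  \sum_{(i,j)\in P}\lambda_{ij}c^{\mathrm{gs}}_{ij}(\theta_i-\theta_j)\,
  \varphi_{v/2}(x-m_{ij})=0 \quad\text{for all } x.
\]
Since all the Gaussians share the covariance $(v/2)I_d$, factor out $\exp(-\norm{x}_2^2/v)$. What remains is
\[
  \sum_{(i,j)} w_{ij}\exp\bigl(2\ip{x}{m_{ij}}/v\bigr)=0,
  \qquad
  w_{ij}:=\lambda_{ij}c^{\mathrm{gs}}_{ij}e^{-\norm{m_{ij}}_2^2/v}(\theta_i-\theta_j)\in\R^d .
\]
I would then use the standard fact that exponentials $x\mapsto e^{\ip{x}{\xi_k}}$ with pairwise-distinct frequencies $\xi_k$ are linearly independent. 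To prove it, restrict to a line $x=tu$, with $u$ chosen so that the scalars $\ip{u}{m_{ij}}$ are pairwise distinct; such a $u$ exists because it only has to avoid finitely many hyperplanes. Then order the exponents and let $t\to\infty$: the largest exponent dominates, so its coefficient vanishes, and induction handles the rest. Applying this coordinatewise, the distinct-midpoint hypothesis gives $w_{ij}=0$ for every pair. We have $c^{\mathrm{gs}}_{ij}>0$, and $\theta_i\ne\theta_j$ by distinct means, so $\lambda_{ij}=0$. This exponential-independence step is the only real obstacle, and the line-restriction argument is what I would write out.

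\emph{Transfer to $\gamma(\nu)$.} By \cref{lem:gram}(a), it suffices that no nonzero $\lambda$ has $g_\lambda=0$ $\nu$-a.e. For $\lambda\ne0$, $g_\lambda$ is real-analytic by \cref{lem:gaussform} and is not identically zero by the previous step. Hence some coordinate of $g_\lambda$ is a nonzero real-analytic function, whose zero set has Lebesgue measure zero \citep{mityagin2015zero}. Since $\nu\ll\Leb^d$, we get $\nu(\{g_\lambda=0\})=0$. Square integrability of each $g_\alpha$ holds because each is bounded, so \cref{lem:gram} applies and gives $\gamma(\nu)>0$. The hypotheses of \cref{prop:analytic} (analyticity, independence, absolute continuity) are then exactly what has been verified.

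\emph{Bound (A9).} Use $\norm{G(x)}_{\mathrm{op}}\le\norm{G(x)}_F=\bigl(\sum_\alpha\norm{g_\alpha(x)}_2^2\bigr)^{1/2}$. The maximum of $\varphi_{v/2}(x-m_{ij})$ is attained at $x=m_{ij}$ and equals $\varphi_{v/2}(0)$. This bounds each term by $(c^{\mathrm{gs}}_{ij})^2\varphi_{v/2}(0)^2\norm{\theta_i-\theta_j}_2^2$ uniformly in $x$, and summing over $P$ gives the stated $L^2$.
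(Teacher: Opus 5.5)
Your proposal is correct and follows essentially the same route as the paper: the closed form reduces independence to that of equal-width Gaussians with distinct centers, which you handle by factoring out $e^{-\norm{x}_2^2/v}$, restricting to a generic line, and using dominant growth; analyticity plus \cref{lem:gram}(a) then gives $\gamma(\nu)>0$, and the Frobenius bound at the Gaussian mode gives (A9). Your $w_{ij}$ omits the common positive factor $(\pi v)^{-d/2}$, which is harmless, and you add two small checks the paper leaves implicit: each $g_\alpha$ is bounded and hence in $L^2(\nu)$, and one nonzero coordinate of $g_\lambda$ already has a Lebesgue-null zero set.
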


\begin{proof}
\emph{Independence.} Suppose
$\sum_{(i,j) \in P} \lambda_{ij}\, g_{ij} \equiv 0$ on $\R^d$. Fix a
coordinate $k \in \{1, \dots, d\}$; by \cref{lem:gaussform} the $k$-th
component reads
\[
  \sum_{(i,j) \in P} \lambda_{ij}\, c^{\mathrm{gs}}_{ij}\,
  (\theta_i - \theta_j)_k\; \varphi_{v/2}(x - m_{ij}) \;\equiv\; 0.
\]
Equal-width Gaussians with pairwise distinct centers are linearly
independent: multiplying by $e^{\norm{x}_2^2 / v}$ reduces the identity
to $\sum_s \beta_s\, e^{\ip{x}{b_s}} \equiv 0$ with pairwise distinct
$b_s = 2 m_{\alpha_s} / v$; choose $u \in \R^d$ such that the inner
products $\ip{u}{b_s}$ are pairwise distinct (each pairwise equality
excludes a hyperplane of $u$'s, and finitely many hyperplanes cannot
cover $\R^d$); restricting to the ray $x = t u$, $t \to \infty$, gives
finitely many univariate exponentials $e^{t \ip{u}{b_s}}$ with distinct
rates, which are linearly independent by the dominant-growth argument.
Hence every coefficient vanishes:
$\lambda_{ij}\, c^{\mathrm{gs}}_{ij}\, (\theta_i - \theta_j)_k = 0$ for
all $(i, j)$ and all $k$, i.e.\
$\lambda_{ij}\, c^{\mathrm{gs}}_{ij}\, (\theta_i - \theta_j) = 0$; since
$c^{\mathrm{gs}}_{ij} > 0$ and $\theta_i \ne \theta_j$,
$\lambda_{ij} = 0$.

\emph{(A10) for all $\nu \ll \Leb$.} If $g_\lambda = 0$ $\nu$-a.e.\ with
$\lambda \ne 0$, then $g_\lambda$ is real-analytic
(\cref{lem:gaussform}; sums of analytic functions) and $\not\equiv 0$
(independence), so its zero set is Lebesgue-null
\citep{mityagin2015zero}, contradicting $\nu$-a.e.\ vanishing. By
\cref{lem:gram}(a), $\gamma(\nu) > 0$.

\emph{(A9).} From the closed form,
$\norm{g_{ij}(x)}_2 \le c^{\mathrm{gs}}_{ij}\, \varphi_{v/2}(0)\,
\norm{\theta_i - \theta_j}_2$ for every $x$ (the Gaussian density is
maximized at its mode), and
$\norm{G(x)}_{\mathrm{op}} \le \norm{G(x)}_F$ gives the stated $L$.
\end{proof}

\begin{proposition}[Directional rank obstruction]
\label{prop:directional-rank}
In the setting of \cref{lem:gaussform}, let
\[
  d_\theta := \dim \operatorname{span}
  \{\theta_i-\theta_j : (i,j)\in P\}.
\]
Then for every deterministic probe set of size $N$,
\[
  \rk M(x_1,\dots,x_N) \le \min\{d_\theta N,r\}.
\]
\end{proposition}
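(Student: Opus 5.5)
The plan is to bound the column space of $M$ from above by exploiting the closed form of \cref{lem:gaussform}. The bound $\rk M\le r$ is immediate, since $M$ has $r$ columns. So the real content is $\rk M\le d_\theta N$. I will obtain it by showing that the image of $M$ lies in a fixed subspace of $\R^{dN}$ of dimension $d_\theta N$.

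Let $S:=\operatorname{span}\{\theta_i-\theta_j:(i,j)\in P\}\subseteq\R^d$, so that $\dim S=d_\theta$. By \cref{lem:gaussform}, every pair witness has the form
\[
  g_{ij}(x)=c^{\mathrm{gs}}_{ij}\,\varphi_{v/2}(x-m_{ij})\,(\theta_i-\theta_j),
\]
which is a scalar multiple of $\theta_i-\theta_j$ and hence lies in $S$ for every $x$. By \eqref{eq:Mcolumn}, column $s$ of $M$ is the stacked vector $(g_{\alpha_s}(x_1);\dots;g_{\alpha_s}(x_N))$. Each block of this vector lies in $S$, so every column lies in the product subspace $S^N:=S\times\cdots\times S\subseteq\R^{dN}$. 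Under any other consistent stacking convention the same holds after a coordinate permutation, which does not change the rank.

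Since $\dim S^N=N\dim S=d_\theta N$, it follows that $\rk M=\dim\operatorname{col}(M)\le d_\theta N$. Combining this with the column count gives $\rk M\le\min\{d_\theta N,r\}$.

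An equivalent route goes through \eqref{eq:gramdecomp}. Each probe block $G(x)\in\R^{d\times r}$ has column space inside $S$, so $\rk G(x_\ell)\le d_\theta$. Subadditivity of rank for the sum $M^\top M=\sum_\ell G(x_\ell)^\top G(x_\ell)$ then gives $\rk M=\rk(M^\top M)\le d_\theta N$.

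There is no real obstacle here. The only point needing care is that the direction vector $\theta_i-\theta_j$ in the closed form does not depend on $x$. This is exactly Step 2 of \cref{lem:gaussform}: the difference of tilted means is the constant $\kappa(\theta_i-\theta_j)$. The argument therefore needs no genericity, and it holds for every deterministic probe set.
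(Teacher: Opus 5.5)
Your proof is correct and is essentially the paper's own. The paper also uses \cref{lem:gaussform} to put every column of each probe block $G(x_\ell)$ in $\operatorname{span}\{\theta_i-\theta_j:(i,j)\in P\}$, and then bounds the rank of the vertical stack by $\sum_{\ell}\rk G(x_\ell)\le d_\theta N$; this is the same fact as your containment $\operatorname{col}(M)\subseteq S^N$, and as your Gram-sum variant.
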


\begin{proof}
For a single probe $x$, let $G(x)\in\R^{d\times r}$ be the block whose
columns are $g_{ij}(x)$. By \cref{lem:gaussform}, every column of $G(x)$ is
a scalar multiple of some difference vector $\theta_i-\theta_j$; hence
$\operatorname{col}(G(x))\subseteq
\operatorname{span}\{\theta_i-\theta_j:(i,j)\in P\}$ and
$\rk G(x)\le d_\theta$. The full observation matrix is the vertical stack
of $G(x_1),\dots,G(x_N)$, so subadditivity of rank under vertical stacking
gives
\[
  \rk M(x_1,\dots,x_N)
  \le \sum_{\ell=1}^N \rk G(x_\ell)
  \le d_\theta N.
\]
The bound by $r$ is the column-count bound.
\end{proof}

\begin{conjecture}[Ambient full-rank probe count with directional span]
\label{conj:minprobes}
In the setting of \cref{prop:nondegen}, with $d_\theta$ as in
\cref{prop:directional-rank},
\[
  \rho_N = \min\{d_\theta N,r\}
  \qquad\text{for every }N\ge1.
\]
Equivalently, each additional probe contributes the maximal number
$d_\theta$ of independent directions allowed by the span of the Gaussian
mean differences, until full column rank is reached.
\end{conjecture}

\begin{remark}[Directional-span obstruction]
\label{rem:decidable}
The ambient count $dN\ge r$ can be too optimistic when the difference vectors
span a proper subspace of $\R^d$. If the means are collinear in $\R^2$, then
$d_\theta=1$, and \cref{prop:directional-rank} gives the sharper hard bound
$\rk M\le N$ even though $dN=2N$. For instance, with
$\theta_i=t_i e_1$ and $t=(0,1,2,4.5)$, the means and all pair midpoints are
distinct, but the directional bound still gives $\rk M\le N$. Thus the naive
ambient dimension count $dN\ge r$ would permit full rank already at $N=3$
when $r=6$, whereas the directional obstruction rules it out.

By \cref{prop:analytic}(a), for any fixed
$(m,d,\tau,\sigma^2,\theta)$ the rank of $M$ at a single random probe
configuration of size $N$ equals $\rho_N$ almost surely; numerical rank
computations therefore probe \cref{conj:minprobes} directly for that
configuration size; see \cref{sec:exp-rank}. \Cref{lem:greedy} guarantees
full column rank for $N\ge r$. The open content is the intermediate regime,
especially $2\le d_\theta < r$ and $N<r$. This is a conjecture about
ambient recovery of every vector in $\R^r$; because feasible mismatches
lie on the structured set of \cref{sec:feasible}, it is not a conjecture
about the intrinsic minimum number of probes needed for distributional
identifiability.
\end{remark}

\section{Additional Experimental Diagnostics}\label{app:diagnostics}

\subsection{Cross-bandwidth calibration and exact mismatch}
\label{app:calibration-mismatch}

The cross-bandwidth study in \cref{sec:exp-calibration} compares the
empirical raw drift and radius-free conditioned plug-in with sliced $W_2$.
Both observation designs are tall and numerically full-column-rank, but
conditioning does not make either scale comparable across bandwidths.

\begin{figure}[H]
  \centering
  \begin{minipage}{0.48\textwidth}
    \centering
    \includegraphics[width=\linewidth]{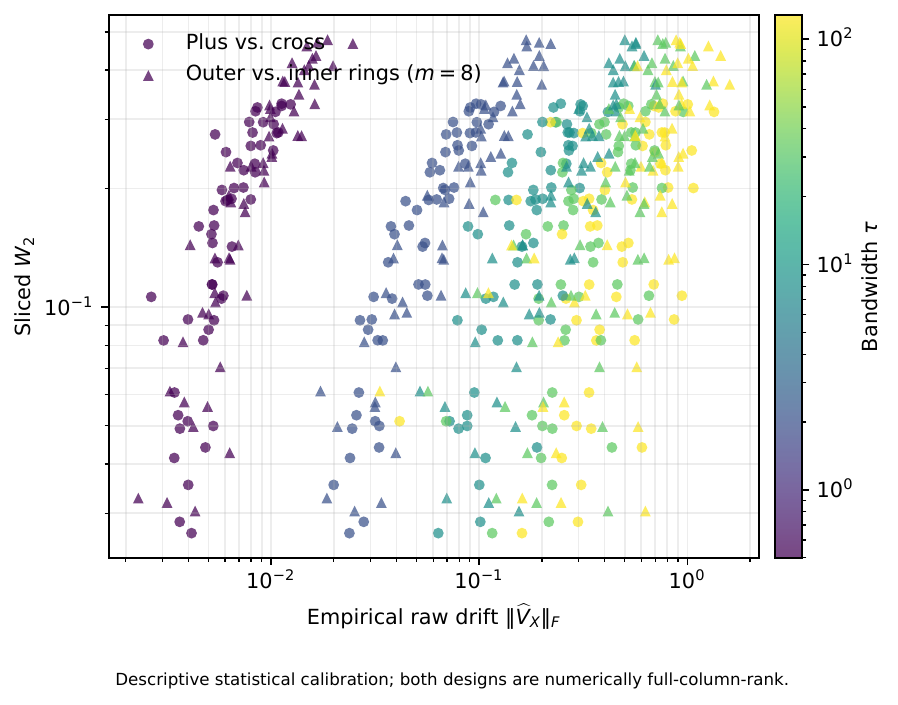}
  \end{minipage}\hfill
  \begin{minipage}{0.48\textwidth}
    \centering
    \includegraphics[width=\linewidth]{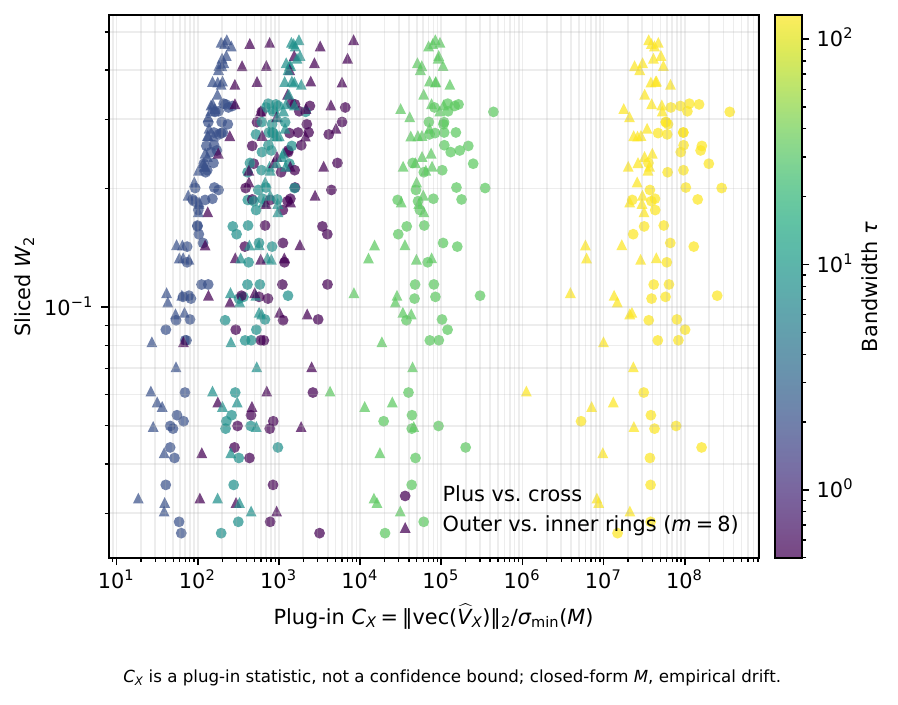}
  \end{minipage}
  \caption{Cross-bandwidth calibration against sliced $W_2$. Left: empirical
  raw drift. Right: the radius-free plug-in $C_X$. Points are statistical;
  $M$ is closed form and its singular values are evaluated numerically.
  In these benchmarks, conditioning is essential for an inverse bound but
  does not create a bandwidth-invariant distance scale.}
  \label{fig:calibration-sw}
\end{figure}

The radius-free statistic $C_X$ from \eqref{eq:CX} can be extremely large
when the worst observed mismatch direction is nearly singular. The comparison
below is descriptive only: it omits the drift-error radius and therefore has
no coverage claim.

\begin{figure}[H]
  \centering
  \includegraphics[width=0.62\textwidth]{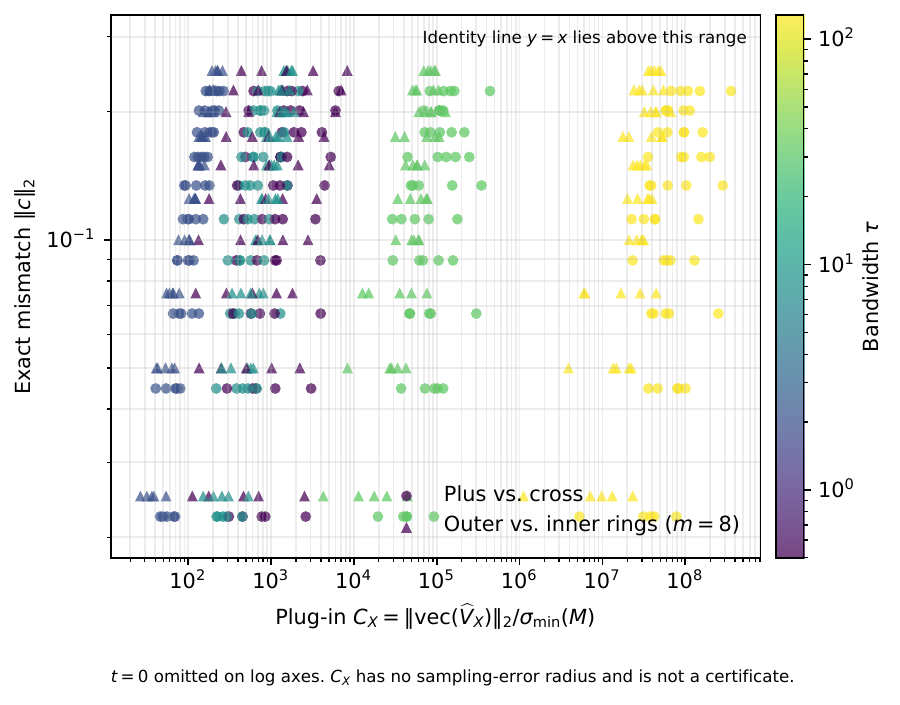}
  \caption{Radius-free plug-in $C_X$ versus exact finite-basis mismatch.
  The identity line lies above the displayed vertical range because the
  plotted plug-in values are inflated by small numerical singular values.}
  \label{fig:calibration-c}
\end{figure}

\subsection{Ambient numerical-rank breakdown}\label{app:rank-breakdown}

\Cref{tab:rank-summary} gives the detailed double-precision outcomes for the
ambient-rank study summarized in \cref{sec:exp-rank}. Boundary ratios are
reported rather than converted into exact rank claims; all $40$ ambiguous or
threshold-mismatched boundary rows are recomputed at $60$ and $100$ decimal
digits as described in the main text.
Each matrix is column normalized before SVD, and
$\rho:=\min\{dN,r\}$.  The table reports
$\sigma_\rho/\sigma_1$, not $\sigma_r/\sigma_1$ when $dN<r$.

\begin{table}[H]
\centering
\small
\begin{tabular}{@{}rrrrrrrcc@{}}
\toprule
$m$ & $d$ & $r$ & rows & match & ambig. & mismatch & min $\sigma_\rho/\sigma_1$ & max $\sigma_\rho/\sigma_1$ \\
\midrule
4 & 1 & 6 & 40 & 37 & 3 & 0 & $9.56\times 10^{-12}$ & $1.00$ \\
4 & 2 & 6 & 25 & 23 & 1 & 1 & $7.48\times 10^{-18}$ & $8.58\times 10^{-1}$ \\
5 & 1 & 10 & 60 & 48 & 12 & 0 & $1.34\times 10^{-14}$ & $1.00$ \\
6 & 2 & 15 & 50 & 45 & 5 & 0 & $2.43\times 10^{-9}$ & $9.01\times 10^{-1}$ \\
6 & 3 & 15 & 35 & 32 & 2 & 1 & $2.49\times 10^{-15}$ & $5.93\times 10^{-1}$ \\
8 & 2 & 28 & 80 & 65 & 14 & 1 & $7.42\times 10^{-16}$ & $8.83\times 10^{-1}$ \\
\bottomrule
\end{tabular}
\caption{Double-precision ambient-rank summary after column normalization,
where $\rho=\min\{dN,r\}$.  ``Ambiguous'' means
$10^{-12}\le\sigma_\rho/\sigma_1\le10^{-8}$.  Outside that band, numerical
rank uses NumPy's default tolerance
$\sigma_1\max\{dN,r\}\epsilon_{\rm mach}$.}
\label{tab:rank-summary}
\end{table}

\end{document}